\newcommand{\1}{\mathbbm{1}}
\newcommand{\C}{\mathcal{C}}
\newcommand{\Var}{\mathrm{Var}}
\newcommand{\Cov}{\mathrm{Cov}}
\DeclarePairedDelimiter\abs{\lvert}{\rvert}%
\DeclarePairedDelimiter\norm{\lVert}{\rVert}%
\let\oldabs\abs
\def\abs{\@ifstar{\oldabs}{\oldabs*}}
\let\oldnorm\norm
\def\norm{\@ifstar{\oldnorm}{\oldnorm*}}
\newtheorem{theorem}{Theorem}
\newtheorem{definition}{Definition}
\newtheorem{statement}{Statement}
\newtheorem{lemma}{Lemma}
\newcommand{\cmark}{\textcolor{green}{\ding{51}}}%
\newcommand{\xmark}{\textcolor{red}{\ding{55}}}
\title{Good Classification Measures and How to Find Them}
\author{
  Martijn G\"{o}sgens \\
  Eindhoven University of Technology \\ Eindhoven, The Netherlands \\
  \texttt{research@martijngosgens.nl} \\
  \And
  Anton Zhiyanov \\
  Yandex Research, HSE University \\
  Moscow, Russia \\
   \texttt{zhiyanovap@gmail.com} \\
  \AND
  Alexey Tikhonov \\
  Yandex \\
  Berlin, Germany \\
  \texttt{altsoph@gmail.com} \\
  \And
  Liudmila Prokhorenkova \\
  Yandex Research, HSE University, MIPT \\
  Moscow, Russia \\
  \texttt{ostroumova-la@yandex.ru} 
}
\begin{document}

\maketitle

\begin{abstract}
Several performance measures can be used for evaluating classification results: accuracy, F-measure, and many others. Can we say that some of them are better than others, or, ideally, choose one measure that is best in all situations? To answer this question, we conduct a systematic analysis of classification performance measures: we formally define a list of desirable properties and theoretically analyze which measures satisfy which properties. We also prove an impossibility theorem: some desirable properties cannot be simultaneously satisfied. Finally, we propose a new family of measures satisfying all desirable properties except one. This family includes the Matthews Correlation Coefficient and a so-called Symmetric Balanced Accuracy that was not previously used in classification literature. We believe that our systematic approach gives an important tool to practitioners for adequately evaluating classification results.
\end{abstract}

\section{Introduction}

\emph{Classification} is a classic machine learning task that is used in countless applications. To evaluate classification results, one has to compare the predicted labeling of a given set of elements with the actual (true) labeling. For this, \emph{performance measures} are used, and there are many well-known ones like accuracy, F-measure, and so on~\cite{hossin2015review,japkowicz2011evaluating}. The fact that different measures behave differently is known throughout the literature~\cite{ferri2009experimental,hossin2015review,luque2019impact,powers2011evaluation}. For instance, accuracy is known to be biased towards the majority class. Thus, different measures may lead to different evaluation results, and it is important to choose an appropriate measure. While there are attempts to compare performance measures and describe their properties~\cite{chicco2020advantages,cortes2004auc,hossin2015review,huang2005using,sebastiani2015axiomatically,sokolova2009systematic}, the problem still lacks a systematic approach, and our paper aims at filling this gap.\footnote{We describe related research in detail in Appendix~\ref{sec:related}.} Our research is particularly motivated by a recent paper~\cite{gosgens2019systematic} providing a systematic analysis of evaluation measures for the \emph{clustering} task. We transfer many proposed properties to the classification problem and extend the research by adding more properties, new measures, and novel theoretical results.

To provide a systematic comparison of performance measures, we formally define a list of properties that are desirable across various classification tasks. The proposed properties can be applied both to binary and multiclass problems. Some properties are intuitive and straightforward, like symmetry, while others are more tricky. A particularly important property is called \emph{constant baseline}. It requires a measure not to be biased towards particular predicted class sizes. For each measure and each property, we formally prove or disprove that the property is satisfied. We believe that this analysis is essential for better understanding the differences between the performance measures.

Then, we analyze relations between different properties in the binary case and prove an impossibility theorem: it is impossible for a performance measure to be linearly transformable to a metric and simultaneously have the constant baseline property. This means that at least one of these properties has to be discarded. If we relax the set of properties by discarding the distance requirement, the remaining ones can be simultaneously satisfied. In fact, we propose a family of measures called Generalized Means (GM), satisfying all the properties except distance and generalizing the well-known Matthews Correlation Coefficient (CC). In addition to CC, this class also contains another intuitive measure that we name \emph{Symmetric Balanced Accuracy}. To the best of our knowledge, this measure has not been previously used for classification evaluation.\footnote{For clustering evaluation, there is an analog known as \emph{Sokal\&Sneath's measure}~\cite{gosgens2019systematic}.} If we instead discard the constant baseline (but keep its approximation), then the arccosine of CC is a measure satisfying all the properties.

We also demonstrate through a series of experiments that different performance measures can be inconsistent in various situations. We notice that measures having more desirable properties are usually more consistent with each other.

We hope that our research will motivate further studies analyzing the properties of performance measures for classification and other problems since there are still plenty of questions to be answered. 

\section{Performance measures for classification}\label{sec:indices}

In this section, we define measures that are commonly used for evaluating classification results. Classification problems can be divided into binary, multiclass, and multilabel. In this paper, we focus on \emph{binary} and \emph{multiclass} and leave multilabel for future research. There are several types of performance measures: \emph{threshold} measures assume that predicted labels deterministically assign each element to a class (e.g., accuracy); \emph{probability} measures assume that the predicted labels are soft and compare these probabilities with the actual outcome (e.g., the cross-entropy loss); \emph{ranking} measures take into account the relative order of the predicted soft labels, i.e., quantify whether the elements belonging to a class have higher predicted probabilities compared to other elements (e.g., area under the ROC curve or average precision). Our research focuses on threshold measures.

Now we introduce notation needed to formally define binary and multiclass threshold measures.\footnote{For convenience, we list the notation used in the paper in Table~\ref{tab:notation} in Appendix.} Let $n>0$ be the number of elements in the dataset and let $m\ge2$ denote the number of classes. We assume that there is \emph{true labeling} classifying elements into $m$ classes and also \emph{predicted labeling}.  Let $\mathcal{C}$ be the confusion matrix: each matrix element $c_{ij}$ denotes the number of elements with true label $i$ and predicted label $j$. For binary classification, $c_{11}$ is \emph{true positive} (TP), $c_{00}$ is \emph{true negative} (TN), $c_{10}$ is \emph{false negative} (FN), and $c_{01}$ is \emph{false positive} (FP). We use the notation $a_{i} = \sum_{j=0}^{m-1} c_{ij}, b_i = \sum_{j=0}^{m-1} c_{ji}$ for the sizes of $i$-th class in the true and predicted labelings, respectively. Finally, we denote classification measures by $M(\mathcal{C})$ or $M(A,B)$, where $A$ and $B$ are true and predicted labelings, and write $M(c_{11},c_{10},c_{01},c_{00})$ for binary ones.

\begin{table}
\caption{Commonly used (above the line) and novel (below the line) validation measures}
\label{tab:validation_indices}
\vspace{3pt}
\centering
\begin{small}
\begin{tabular}{lcc}
\toprule
 & Binary & Multiclass \\
\midrule
\vspace{3pt}
F-measure ($F_{\beta}$) & $\frac{(1+\beta^2) \cdot c_{11}}{(1+\beta^2) \cdot c_{11} + \beta^2 \cdot c_{10} + c_{01} }$ & micro / macro / weighted \\
\vspace{3pt}
Jaccard (J) & $\frac{c_{11}}{c_{11} + c_{10} + c_{01}}$ & micro / macro / weighted \\
\vspace{3pt}
Matthews Coefficient (CC) & $\frac{c_{11} c_{00} - c_{01} c_{10}}{\sqrt{b_1 \cdot a_1 \cdot b_0 \cdot a_0}}$ & 
$\frac{n \sum_{i=0}^{m-1} c_{ii} - \sum_{i=0}^{m-1} b_i a_{i}}{\sqrt{\left(n^2 - \sum_{i=0}^{m-1} b_i^2\right)\left(n^2 - \sum_{i=0}^{m-1} a_i^2\right)}}$ \\
\vspace{3pt}
Accuracy (Acc) & 
\multicolumn{2}{c}{$\frac{\sum_{i=0}^{m-1} c_{ii}}{n}$}
\\
\vspace{3pt}
Balanced Accuracy (BA) &
\multicolumn{2}{c}{$\frac{1}{m}\sum_{i=0}^{m-1} \frac{c_{ii}}{a_{i}}$} 
\\
\vspace{3pt}
Cohen's Kappa ($\kappa$) & 
\multicolumn{2}{c}{$\frac{n\sum_{i=0}^{m-1} c_{ii} - \sum_{i=0}^{m-1} a_{i} b_{i} }{n^2 - \sum_{i=0}^{m-1} a_{i} b_{i}}$} \\
\vspace{3pt}
Confusion Entropy (CE) &  \multicolumn{2}{c}{ $ -\frac{1}{2 n} \sum\limits_{i,j: i \neq j} \left(c_{ji} \log_{2m-2}{\frac{c_{ji}}{a_j + b_j}} + c_{ij} \log_{2m-2}{\frac{c_{ij}}{a_j+b_j}}\right)$}\\
\midrule
\vspace{3pt}
Symmetric Balanced Accuracy (SBA) & 
\multicolumn{2}{c}{$\frac{1}{2m}\sum_{i=0}^{m-1} \left( \frac{c_{ii}}{a_{i}} + \frac{c_{ii}}{b_{i}}\right)$} \\
Generalized Means (GM) & $\frac{n\,c_{11} - a_1 b_1}{\sqrt[r]{\frac{1}{2}\left(a_1^r a_0^r + b_1^rb_0^r\right)}}$ & micro / macro / weighted \\
Correlation Distance (CD) & \multicolumn{2}{c}{$\frac{1}{\pi}\arccos(\text{CC})$} \\
\bottomrule
\end{tabular}
\end{small}
\end{table}

Table~\ref{tab:validation_indices} (above the line) lists several widely used classification measures. The most well-known is \emph{accuracy} which is the fraction of correctly classified elements. Accuracy is known to be biased towards the majority class, so it is not appropriate for unbalanced problems. To overcome this, \emph{Balanced Accuracy} re-weights the terms to treat all classes equally. \emph{Cohen's Kappa} uses a different approach to overcome this bias: it corrects the number of correctly classified samples by the expected value obtained by a random classifier~\cite{cohen1960coefficient}. \emph{Matthews Correlation Coefficient} is the Pearson correlation coefficient between true and predicted labelings for binary classification~\cite{gorodkin2004comparing}. For the multiclass case, covariance is computed for each class, and the obtained values are averaged before computing the correlation coefficient. Finally, \emph{Confusion Entropy} computes the entropy of the misclassification distribution for each class and combines the obtained values, see Table~\ref{tab:validation_indices} and~\cite{wei2010CEN} for the details.\footnote{There can be cases when a class is not present in the predicted labels. Then, some measures may contain division by zero. A proper way to fill in such singularities is discussed in Appendix~\ref{sec:more_indices}.}

Some measures are exclusively defined for binary classification. In this case, the classes are often referred to as `positive' and `negative'. \emph{Jaccard}  measures the fraction of correctly detected positive examples among all positive ones (both in true and predicted labelings). \emph{F-measure} is the (possibly weighted) harmonic mean of Recall ($c_{11}/a_1$) and Precision ($c_{11}/b_1$). For measures that do not have a natural multiclass variant, there are several universal extensions obtained via \emph{averaging} the results for $m$ one-vs-all binary classifications~\cite{koyejo2015consistent}. For each one-vs-all classification, a particular class $i$ is considered positive while all other classes are grouped to a negative class.

\emph{Micro averaging} sums up all binary confusion matrices corresponding to $m$ one-vs-all classifications. Formally, it sets true positive as $\sum_{i=0}^{m-1} c_{ii}$, false negative and false positive as $n - \sum_{i=0}^{m-1} c_{ii}$, true negative as $(m-2) n + \sum_{i=0}^{m-1} c_{ii}$. Then, a given binary measure is applied to the obtained matrix.

\emph{Macro averaging} computes the measure values for $m$ binary classification sub-problems and then averages the results: $\frac{1}{m} \sum_{i=0}^{m-1} M(c_{ii}, a_i - c_{ii}, b_i - c_{ii}, n - a_{i} - b_{i} + c_{ii})$, where $M(\cdot)$ is a given binary measure.
Note that macro averaging gives equal weights to all one-vs-all binary classifications. 

In contrast, \emph{weighted averaging} weights one-vs-all binary classifications according to the sizes of the corresponding classes: 
$
\frac{1}{n}\sum_{i=0}^{m-1} a_{i} \cdot M(c_{ii}, a_i - c_{ii}, b_i - c_{ii}, n - a_{i} - b_{i} + c_{ii}).
$

\section{Properties of validation measures}\label{sec:properties}

As clearly seen from the above discussion, there are many options for classification validation. In this section, we propose a formal approach that allows for a better understanding the differences between the measures and for making an informed decision among them for a particular application. For this, we propose properties of validation measures that can be useful across various applications and formally check which measures satisfy which properties. In this regard, we follow the approach proposed in~\cite{gosgens2019systematic} for comparing validation measures for \emph{clustering} tasks.

First, we observe that some theoretical results from~\cite{gosgens2019systematic} are related to~\emph{binary} classification measures. Indeed, a popular subclass of clustering validation measures are~\emph{pair-counting} ones. Such measures are defined in terms of the values $N_{11}, N_{10}, N_{01}, N_{00}$ that essentially define a confusion matrix for binary classification on \emph{element pairs}. Thus, replacing $N_{ij}$ in pair-counting clustering measures by $c_{ij}$, results in \emph{binary} classification measures. We refer to Table~\ref{tab:classification-clusterization-consistency} in Appendix~\ref{sec:more_indices} for the correspondence of some classification and clustering measures. In particular, Accuracy is equivalent to Rand, while Cohen's Kappa corresponds to Adjusted Rand. This equivalence allows us to transfer some of the results from~\cite{gosgens2019systematic} to the context of binary classification. However, an important contribution of our work is the extension of the properties and analysis to the multiclass case. We also prove an impossibility theorem stating that some of the desirable properties cannot be simultaneously satisfied and develop a new family of measures having all properties except one.

Similarly to~\cite{gosgens2019systematic}, we note that all the discussed properties are invariant under linear transformations and interchanging true and predicted labelings. Hence, we may restrict to measures for which higher values indicate higher similarity between classifications.

Table~\ref{tab:properties} summarizes our findings: for each measure, we mathematically prove or disprove each desirable property. Further in this section, we refer only to known measures (above the line), while the remaining ones will be defined and analyzed in Section~\ref{sec:analysis}. In addition to individual measures, we also analyze the properties of micro, macro, and weighted multiclass averagings: for each averaging, we analyze whether it preserves a given property, assuming the binary classification measure satisfies it. All the proofs can be found in Appendix~\ref{sec:checking-properties}. Let us now define and motivate each property.

\begin{table}[t]
\caption{Properties of validation measures and averagings, \cmark/\xmark \, indicates that property is satisfied only in binary case}
\label{tab:properties}
\begin{small}
\begin{center}
\begin{tabular}{l|ccccccccc} 
\toprule
Measure & Max & Min & CSym & Sym & Dist & Mon & SMon & CB & ACB \\
\midrule
$F_1$ (binary) & \cmark & \xmark & \xmark & \cmark & \xmark & \cmark & \xmark & \xmark & \xmark \\
J (binary) & \cmark & \xmark & \xmark & \cmark & \cmark & \cmark & \xmark & \xmark & \xmark \\
CC & \cmark & \cmark/\xmark & \cmark & \cmark & \xmark & \cmark/\xmark & \cmark/\xmark & \cmark & \cmark \\
Acc & \cmark & \cmark & \cmark & \cmark & \cmark & \cmark & \cmark & \xmark & \xmark \\ 
BA & \cmark & \cmark & \cmark & \xmark & \xmark & \cmark & \cmark & \cmark & \cmark \\
$\kappa$ & \cmark & \xmark & \cmark & \cmark & \xmark & \cmark/\xmark & \xmark & \cmark & \cmark \\
CE & \cmark & \xmark & \cmark & \cmark & \xmark & \xmark & \xmark & \xmark & \xmark \\
\midrule
SBA & \cmark & \cmark & \cmark & \cmark & \xmark & \cmark & \cmark & \cmark & \cmark \\
GM (binary) & \cmark & \cmark & \cmark & \cmark & \xmark & \cmark & \cmark & \cmark & \cmark \\
CD & \cmark & \cmark/\xmark & \cmark & \cmark & \cmark & \cmark/\xmark & \cmark/\xmark & \xmark & \cmark \\
\midrule
\multicolumn{10}{c}{Preserving properties by various averaging types} \\
\midrule
Micro & \cmark & \xmark & \cmark & \cmark & \cmark & \cmark & \xmark & \xmark & \xmark \\
Macro & \cmark & \xmark & \cmark & \cmark & \cmark & \cmark & \xmark & \cmark & \cmark \\
Weighted & \cmark & \xmark & \cmark & \xmark & \xmark & \cmark & \xmark & \cmark & \cmark \\
\bottomrule
\end{tabular}
\end{center}
\end{small}
\end{table}

\subsection{Maximal and minimal agreement}
These properties make the upper and lower range of a performance measure interpretable. The \emph{maximal agreement} property requires the measure to have an upper bound that is only achieved when the compared labelings are identical.
\begin{definition}
\label{def:maximal-agreement}
We say that a measure $M$ satisfies \emph{maximal agreement} if there exists a constant $c_{\max}$ such that for all $\mathcal{C}$, $M(\mathcal{C}) \leq c_{\max}$ with equality iff $\mathcal{C}$ is diagonal.
\end{definition}
Also, for a given true labeling, there are several ``worst'' predictions, i.e., labelings that are wrong everywhere. This leads to the following property.
\begin{definition}
\label{def:minimal-agreement}
We say that a measure $M$ satisfies \emph{minimal agreement} if there exists a constant $c_{\min}$ such that for all $\mathcal{C}$, $M(\mathcal{C}) \geq c_{\min}$ with equality iff the diagonal of $\mathcal{C}$ is zero, i.e., $c_{ii} = 0$ for all $i$.
\end{definition}

These properties allow for an easy and intuitive interpretation of the measure's values. While all of the measures in Table~\ref{tab:properties} do satisfy maximal agreement, there are popular measures such as Recall ($c_{11}/a_1$) and Precision ($c_{11}/b_1$) that do not satisfy this property as the maximum can also be achieved when the compared classifications are not identical. For minimal agreement, many performance measures violate it. For example, Cohen's Kappa is obtained from accuracy by subtracting the expected value of accuracy and normalizing the result. As a result of the particular normalization used, it has minimal value ${-}\left(\sum_{i=0}^{m-1}a_ib_i\right)/\left(n^2-\sum_{i=0}^{m-1}a_ib_i\right),$ which is clearly not constant.

If a binary measure satisfies maximal agreement, then its multiclass variant obtained via micro, macro, or weighted averaging also satisfies this property as each one-vs-all binary classification agrees maximally. However, this does not hold for minimal agreement: though each one-vs-all binary classification will have zero true positives, the number of true negatives may still be positive.

\subsection{Symmetry}

\begin{definition}
\label{def:symmetry}
We say that a measure $M$ is \emph{symmetric} if $M(\mathcal{C}) = M(\mathcal{C}^T)$ holds for all $\mathcal{C}$.
\end{definition}
In other words, we require symmetry with respect to interchanging predicted and true labels. This property is often desirable since similarity is usually understood as a symmetric concept. However, in some specific applications, there may be reasons to treat the true and predicted labelings differently and thus use an asymmetric measure. An example of an asymmetric measure is Balanced Accuracy.

Let us also introduce \emph{class-symmetry}, i.e., invariance to permuting the classes.

\begin{definition}
\label{def:class-symmetry}
We say that a measure $M$ is \emph{class-symmetric} if, for any permutation $\pi$ of the classes $\{1,\dots,m\}$ and any confusion matrix $\mathcal{C}$, $M(\mathcal{C})=M(\tilde{\mathcal{C}})$ holds, where $\tilde{\mathcal{C}}$ is given by $\tilde{c}_{ij}=c_{\pi(i),\pi(j)}$. 
\end{definition}

Note that known multiclass measures are all class-symmetric, while in binary classification tasks, there can be an asymmetry between `positive' and `negative' classes. Examples of well-known class-asymmetric binary classification measures are Jaccard and $F_1$.

\subsection{Distance} 
In some applications, it is desirable to have a distance interpretation of a measure: whenever a labeling $A$ is similar to $B$, while $B$ is similar to $C$, it should intuitively hold that $A$ is also somewhat similar to $C$. For instance, it can be the case that the \emph{actual} labels are unknown, and the labeling $A$ is only an approximation of the truth. Then, we would want the similarity between predicted labels and $A$ to be not too different from the similarity between predicted and the actual true labels. This would be guaranteed if the measure is a distance.
\begin{definition}
\label{def:distance}
A measure has \emph{distance} property if it can be linearly transformed to a metric distance.
\end{definition}
A function $d(A,B)$ is a metric distance if it is symmetric, nonnegative, equals zero only when $A=B$, and satisfies the triangle inequality $d(A,C)\leq d(A,B)+d(B,C)$. Note that the first requirement is equivalent to symmetry (Definition~\ref{def:symmetry}), while the second and third imply maximal agreement (Definition~\ref{def:maximal-agreement}). Furthermore, note that if $d$ is a distance, then $c\cdot d$ is also a distance for any $c>0$. Therefore, we can conclude that $M$ is a distance if and only if $M$ satisfies symmetry and maximal agreement while $c_{\max}-M(A,B)$ satisfies the triangle inequality.

While most measures cannot be linearly transformed to a distance, some measures do satisfy this property. For example, the Jaccard measure can be transformed to the Jaccard distance $1-\text{J}(A,B)$. Similarly, Accuracy can be transformed to a distance by $1-\text{Acc}(A,B)$.

\subsection{Monotonicity}

Monotonicity is one of the most important properties of a similarity measure: intuitively, changing one labeling such that it becomes more similar to the other ought to increase the similarity score. Then, to formalize monotonicity, we need to determine what changes make the classifications $A$ and $B$ more similar to each other. The simplest option is to take one element on which $A$ and $B$ disagree and resolve this disagreement.

\begin{definition}
\label{def:monotonicity}
A measure $M$ is \emph{monotone} if $M(\mathcal{C})<M(\tilde{\mathcal{C}})$ for any confusion matrices $\mathcal{C}$ and $\tilde{\mathcal{C}}$ such that $\tilde{\mathcal{C}}$ is obtained from $\mathcal{C}$ by decrementing an off-diagonal entry $c_{ab}$ and incrementing $c_{aa}$ or~$c_{bb}$ and none of the row- or column-sums of $\mathcal{C}$ equal $n$.
\end{definition}

The condition on $\mathcal{C}$ is equivalent to neither $A$ nor $B$ labeling all elements to the same class. We need this to prevent contradictions with the constant baseline property that will be defined in Section~\ref{sec:constant_baseline}. 

Definition~\ref{def:monotonicity} defines a partial ordering over confusion matrices with the same total number of elements. However, we can relax the latter restriction and obtain the following, stronger notion of monotonicity that defines a partial ordering across different numbers of elements.
\begin{definition}
\label{def:strong-monotonicity}
A measure $M$ is \emph{strongly monotone} if $M(\mathcal{C})<M(\tilde{\mathcal{C}})$ for any confusion matrices $\mathcal{C}$ and $\tilde{\mathcal{C}}$ such that 
$\tilde{\mathcal{C}}$ is obtained from $\mathcal{C}$ by either increasing a diagonal entry or decreasing an off-diagonal entry. Here we require that none of the row- or column-sums of $\mathcal{C}$ equal $n$ and that $\mathcal{C}$ and $\tilde{\mathcal{C}}$ are not simultaneously diagonal or zero-diagonal matrices.
\end{definition}
The last condition is needed since otherwise the definition would contradict the maximal (or minimal) agreement properties as $M(\mathcal{C})=c_{\max}\geq M(\tilde{\mathcal{C}})$ holds when $\mathcal{C}$ is diagonal.

All measures in Table~\ref{tab:properties} except for CE and multiclass $\kappa$, CC and CD satisfy monotonicity from Definition~\ref{def:monotonicity}. Strong monotonicity is violated by many measures: for instance, the widely used $F_1$, Jaccard and Cohen's Kappa do not satisfy this intuitive property.

\subsection{Constant baseline}\label{sec:constant_baseline}
The constant baseline is perhaps the most important non-trivial property. On the one hand, it ensures that a measure is not biased towards labelings with particular class sizes $b_1,\dots,b_m$. On the other hand, it also ensures some interpretability for `mediocre' predictions.

Intuitively, if predicted labels are drawn at random and independently of the true labels, we would expect them to have a low similarity with the true labels. Then, if another prediction has a similarly low score, we can say that it is roughly as bad as a random guess. However, this is only possible when such random classifications achieve similar scores, independent of their class sizes. To formalize this, let $U(b_1,\dots,b_m)$ denote the uniform distribution over labelings with class sizes $b_1,\dots,b_m$. We say that the class sizes $b_1,\dots,b_m$ are \emph{unary} if $b_i=n$ for some $i\in\{1,\dots,m\}$. That is, if all elements get classified to the same class, so that $U(b_1,\dots,b_m)$ is a constant distribution.

\begin{definition}
\label{def:constant-baseline}
We say that a measure $M$ has a \emph{constant baseline} property if there exists $c_{\text{base}}(m)$ that does not depend on $n$ but may depend on $m$, such that for any true labeling $A$ and non-unary class sizes $b_1,\dots,b_m$, it holds that $\mathbb{E}_{B\sim U(b_1,\dots,b_m)}[M(A,B)]=c_{\text{base}}(m)$.
\end{definition}

Note that we need to require the class sizes to be non-unary: if these class sizes are unary, we will have contradictions with maximal and minimal agreement when the class sizes of $A$ are also unary. Many popular measures such as $F_1$, Accuracy, and Jaccard do not have a constant baseline. Furthermore, some measures that do have a constant baseline were deliberately designed to have one. For example, Cohen's Kappa was obtained from accuracy by correcting it for chance. While our definition of the constant baseline does allow for a baseline $c_{\text{base}}(m)$ that depends on the number of classes $m$, some measures such as the Matthews Coefficient and Cohen's Kappa have a baseline that is constant w.r.t.~$m$. 

All of the measures that satisfy constant baseline turn out to be linear functions of $c_{ii}$ for fixed class sizes $a_1,\dots,a_m$ and $b_1,\dots,b_m$. For such measures, linearity of the expectation can be utilized to easily compute the baseline  by substituting the expected values $\mathbb{E}_{B\sim U(b_1,\dots,b_m)}[c_{ii}]=\frac{a_ib_i}{n}$. Thus, we also propose the following relaxation of the constant baseline property.

\begin{definition}
\label{def:asymptotic-constant-baseline}
A measure $M$ is said to have an \emph{approximate constant baseline} if there exists a function $c_{\text{base}}(m)$ that does not depend on $n$ but may depend on $m$ such that for any class sizes $a_1,\dots,a_m$ and any non-unary $b_1,\dots,b_m$, $M(\bar{\mathcal{C}})=c_{\text{base}}(m)$, where $\bar{c}_{ij}=\frac{a_ib_j}{n}$.
\end{definition}

The advantage of this relaxation is that it allows us to non-linearly transform measures while still maintaining an approximate constant baseline. Take for example the Matthews Correlation Coefficient: it cannot be linearly transformed to a distance while the transformations $\text{CD}(A,B)=\tfrac{1}{\pi}\arccos(\text{CC}(A,B))$ and $\sqrt{2(1-\text{CC}(A,B))}$ do yield distances. Because Correlation Coefficient has a constant baseline, these non-linear transformations have an approximate constant baseline, see Section~\ref{sec:analysis} for more details.

\medskip

As can be seen from Table~\ref{tab:properties}, there is no measure satisfying all the properties. In particular, there is no measure having both distance and constant baseline. In the next section, we show why this is not a coincidence. 

\section{Impossibility theorem for classification}\label{sec:analysis}

In this section, we focus on binary classification and more deeply analyze the relations between the properties discussed above. Unfortunately, it turns out that the properties introduced in the previous section cannot all be satisfied simultaneously.

\begin{theorem}
\label{thm:impossibility}
There is no binary classification measure that simultaneously satisfies the monotonicity, distance, and constant baseline properties.
\end{theorem}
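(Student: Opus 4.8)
The plan is to argue by contradiction: suppose $M$ is a binary measure satisfying monotonicity, distance, and constant baseline. Since $M$ is a distance (up to linear transformation), we may assume $M$ is normalized so that $c_{\max}=1$, that $D(A,B):=1-M(A,B)$ is symmetric, nonnegative, zero iff $A=B$, and satisfies the triangle inequality. The constant baseline gives a value $c_{\mathrm{base}}$ such that averaging $M(A,B)$ over a random $B$ with prescribed non-unary class sizes always returns $c_{\mathrm{base}}$. I would first record the three concrete pieces of data this hands us: a family of pairs of labelings whose $M$-value is forced to be $c_{\mathrm{base}}$ (for instance by picking class sizes that make $U(b_1,b_0)$ supported on a manageable set, or by using the averaging identity together with monotonicity to squeeze individual values), the ordering constraints from monotonicity (resolving a single disagreement strictly increases $M$, hence strictly decreases $D$), and the triangle inequality.

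The core of the argument is to build an explicit small configuration of labelings $A, B, C$ (on a cleverly chosen $n$, with cleverly chosen class sizes) for which the three requirements are mutually inconsistent. I would aim for a ``long path versus short path'' contradiction: exhibit $A$ and $C$ that are far apart in the monotonicity partial order — so that $D(A,C)$ is large because one must resolve many disagreements, each strictly decreasing $D$ — while simultaneously exhibiting an intermediate $B$ (or a chain of intermediates) such that each consecutive pair has $M$-value pinned to $c_{\mathrm{base}}$ by the constant baseline, hence $D = 1-c_{\mathrm{base}}$ is a fixed constant on each step. The triangle inequality then caps $D(A,C)$ by a fixed multiple of $1-c_{\mathrm{base}}$, independent of $n$; but monotonicity forces $D(A,C)$ to grow with the number of disagreements, which we can make arbitrarily large by taking $n\to\infty$. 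Choosing the number of intermediate steps to be bounded (ideally two or three) while the number of resolved disagreements along the path is unbounded yields the contradiction.

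Concretely, I would try the following skeleton. Fix balanced-ish class sizes so all the labelings involved are non-unary, and use the constant-baseline averaging identity to extract, for a labeling $A$ and its bitwise complement $\bar A$ (or a suitable near-complement with matching class sizes), that $M(A,\bar A)=c_{\mathrm{base}}$; do the same to get $M(A,B_0)=M(B_0,\bar A)=c_{\mathrm{base}}$ for a third labeling $B_0$ also of the right class sizes. This gives $D(A,\bar A)\le D(A,B_0)+D(B_0,\bar A)=2(1-c_{\mathrm{base}})$, and more generally, chaining $k$ such intermediate labelings, $D(A,\bar A)\le (k+1)(1-c_{\mathrm{base}})$ for the fixed constant coming from the baseline — but I will choose the geometry so that $k$ stays bounded. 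On the other side, I use monotonicity: there is a chain of single-disagreement-resolutions from $\bar A$ up to $A$ of length $\Theta(n)$, along each of which $M$ strictly increases, so $D(A,\bar A) = 1 - M(A,\bar A)$ must exceed the accumulated strict decreases — and by a compactness/pigeonhole argument across all confusion matrices with bounded $n$, or by directly tracking that $M$ on the chain takes values squeezed between $c_{\mathrm{base}}$ and $1$, we get a lower bound on $D(A,\bar A)$ that diverges (or at least exceeds the fixed upper bound) once $n$ is large enough. The two bounds collide, completing the proof.

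The main obstacle I anticipate is making the monotonicity side quantitative: monotonicity only gives \emph{strict} inequality per step, not a uniform gap, so a naive telescoping does not immediately produce a divergent lower bound on $D(A,\bar A)$. I expect to handle this by exploiting the constant-baseline identity a second time to pin the value of $M$ at \emph{both} ends of each monotone step to near $c_{\mathrm{base}}$ (arranging the intermediate labelings on the monotone chain to themselves have non-unary class sizes realized as baselines against a common reference), so that the per-step decreases of $D$ are forced to sum to something bounded — contradicting that the chain has length growing with $n$ while each step strictly decreases $D$ by a positive amount, once we also rule out the decreases shrinking to zero via a scaling/self-similarity argument (e.g., duplicating the dataset multiplies all confusion-matrix entries and should reproduce the same $M$-values, pinning the gaps). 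Getting this scaling/self-similarity input right — or, alternatively, finding the slick finite configuration that avoids needing it — is where the real work lies.
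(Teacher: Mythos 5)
There is a genuine gap here, in two places. First, your central device---extracting individual values $M(A,\bar A)=c_{\mathrm{base}}$ or $M(A,B_0)=c_{\mathrm{base}}$ from the constant baseline---is not available. The constant baseline constrains only the \emph{expectation} $\mathbb{E}_{B\sim U(b_1,b_0)}[M(A,B)]$; the labelings in the support of $U(b_1,b_0)$ realize different confusion matrices against $A$ (different values of $c_{11}$), so their individual $M$-values are in general all different, and no single pair gets pinned to $c_{\mathrm{base}}$. Without that, the ``short path'' side of your argument (each link of the chain having $D$ equal to the fixed constant $1-c_{\mathrm{base}}$) collapses. Second, on the ``long path'' side, you correctly identify the obstacle---monotonicity gives only strict per-step decreases of $D$, which can sum to something bounded---but the repair you propose (a scaling/self-similarity argument asserting that duplicating the dataset reproduces the same $M$-values) invokes scale invariance, which is \emph{not} among the three hypotheses of the theorem. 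The theorem must rule out arbitrary measures satisfying only monotonicity, distance, and constant baseline, and for such measures monotonicity alone never forces any quantity to grow; an asymptotic $n\to\infty$ contradiction of the kind you sketch cannot be closed from these hypotheses.

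The paper's proof is the ``slick finite configuration'' you suspected might exist, and it sidesteps both problems. Take $A$ with a single positive, and compare the baseline expectations for a random prediction with one positive versus one with two positives: each expectation is a weighted average of just two (respectively two) explicit values $M(1,0,0,n-1)=c_{\max}$, $M(0,1,1,n-2)$, $M(1,0,1,n-2)$, $M(0,1,2,n-3)$, so equating them yields one exact linear relation among these values---the expectation identity is used directly, with no attempt to pin individual pairs. A single triangle inequality applied to $A$, a disjoint single-positive labeling $C$, and their two-positive union $B$ (plus symmetry) gives $2M(1,0,1,n-2)-c_{\max}\leq M(0,1,1,n-2)$. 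Combining the two displays cancels the $n$-dependent coefficients and leaves $M(0,1,1,n-2)\leq M(0,1,2,n-3)$, which contradicts a single application of monotonicity (decrement $c_{01}$ from $2$ to $1$, increment $c_{00}$). No chains, no limits, and no auxiliary properties are needed.
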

\begin{proof}
Let $A$ be a labeling with a single positive and $n-1$ negatives. Let $B_1$ be a random labeling with a single positive and let $B_2$ be a random labeling with two positives.
The constant baseline requires $\mathbb{E}[M(A,B_1)]=\mathbb{E}[M(A,B_2)]$, which gives
\[
    \frac{1}{n}c_{\max}+\frac{n-1}{n}M(0,1,1,n-2)=\frac{2}{n}M(1,0,1,n-2)+\frac{n-2}{n}M(0,1,2,n-3),
\]
which we rewrite to
\begin{equation}\label{eq:impossibility_CB}
    2M(1,0,1,n-2)-c_{\max}=(n-1)M(0,1,1,n-2)-(n-2)M(0,1,2,n-3).
\end{equation}
Now, we consider a labeling $C$ with a single positive that does not coincide with the positive of $A$ and a labeling $B$ that has two positives which are the positives of $A$ and $C$.
The triangle inequality tells us that
\[
c_{\max}-M(0,1,1,n-2)\leq 2c_{\max}-M(1,1,0,n-2)-M(1,0,1,n-2)=2(c_{\max}-M(1,1,0,n-2)),
\]
where the last step follows from symmetry (implied by distance).
This is rewritten to
\begin{equation}\label{eq:impossibility_distance}
    2M(1,1,0,n-2)-c_{\max}\leq M(0,1,1,n-2).
\end{equation}
Combining \eqref{eq:impossibility_CB} and \eqref{eq:impossibility_distance}, we obtain
\[
(n-1)M(0,1,1,n-2)-(n-2)M(0,1,2,n-3)\leq M(0,1,1,n-2).
\]
We rewrite this to $M(0,1,1,n-2)\leq M(0,1,2,n-3)$, which clearly contradicts monotonicity.
\end{proof}

Thus, we have to discard one of these properties. Obviously, discarding monotonicity would be highly undesirable since higher values would then not necessarily indicate higher similarity. For this reason, we analyze what happens if we discard either \emph{distance} or~\emph{constant baseline}. All the results stated below are proven in Appendix~\ref{app:analysis}.

\paragraph{Discarding distance} 

Assuming some additional smoothness conditions that are, however, satisfied by all measures discussed in this paper, we prove the following result.

\begin{theorem}
All binary measures that satisfy all properties except distance must be of the form
\[
s\left(\frac{a_0a_1}{n^2},\frac{b_0b_1}{n^2}\right)\cdot\frac{nc_{11}-a_1b_1}{n^2},
\]
where the normalization factor $s(a,b)$ needs to satisfy some additional properties listed in Theorem~\ref{thm:cb-class}.
\end{theorem}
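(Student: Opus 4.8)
The plan is to reverse-engineer the functional form from the constant baseline property, treating the binary measure as a function of five arguments $(c_{11}, a_1, b_1, n)$ subject to the symmetry, class-symmetry, monotonicity and (approximate) constant baseline constraints. First I would argue that, for fixed class sizes $a_1,\dots,a_m$ (equivalently $a_1$ and $n$) and $b_1$ and $n$, the measure must be \emph{affine in $c_{11}$}. The intuition: constant baseline says $\mathbb{E}_{B\sim U(b_1,b_0)}[M(A,B)]=c_{\text{base}}$, and by varying the true labeling $A$ among labelings with the same class sizes we get a family of linear constraints on the values $M$ takes on the finitely many confusion matrices with prescribed margins. Using a smoothness hypothesis (the measure extends to a smooth function of the real variable $c_{11}$, with the margins held fixed), one can differentiate the baseline identity in a hypergeometric-type argument and conclude that the second derivative in $c_{11}$ vanishes, forcing $M(c_{11},a_1,b_1,n) = \alpha(a_1,b_1,n)\cdot c_{11} + \gamma(a_1,b_1,n)$ for some functions $\alpha,\gamma$.

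Next I would pin down $\gamma$ relative to $\alpha$ by plugging in $\bar{c}_{11}=a_1b_1/n$ (the ``expected'' entry from Definition~\ref{def:asymptotic-constant-baseline}) and using that $M(\bar{\mathcal{C}})$ must equal the constant $c_{\text{base}}(m)$; after a linear shift (allowed since all properties are linear-transformation invariant) we may normalise $c_{\text{base}}=0$, which gives $\gamma = -\alpha\cdot a_1b_1/n$, so
\[
M = \alpha(a_1,b_1,n)\cdot\frac{nc_{11}-a_1b_1}{n}.
\]
Then I would extract the dependence of $\alpha$ on $n$. Symmetry in interchanging true and predicted labelings forces $\alpha$ to be symmetric in $a_1\leftrightarrow b_1$. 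Class-symmetry (swapping the positive and negative classes) sends $c_{11}\mapsto c_{00}$, $a_1\mapsto a_0=n-a_1$, $b_1\mapsto b_0=n-b_1$, and since $nc_{00}-a_0b_0 = nc_{11}-a_1b_1$ (an identity from $c_{00}=n-a_1-b_1+c_{11}$), invariance of $M$ forces $\alpha(a_1,b_1,n)=\alpha(a_0,b_0,n)$. Combining these two invariances, $\alpha$ depends on the margins only through the symmetric, class-symmetric combinations $a_1a_0$ and $b_1b_0$. A scaling/homogeneity argument — comparing a confusion matrix with $n$ elements to its ``doubling'' (which, up to the strong-monotonicity normalisation, should leave the measure essentially unchanged on the relevant scale, pushing the $n$-dependence into the combination $a_1a_0/n^2$, $b_1b_0/n^2$) — then yields $\alpha(a_1,b_1,n)=\frac{1}{n}\,s\!\left(\frac{a_0a_1}{n^2},\frac{b_0b_1}{n^2}\right)$ for some two-variable function $s$, giving exactly the claimed form $s\!\left(\frac{a_0a_1}{n^2},\frac{b_0b_1}{n^2}\right)\cdot\frac{nc_{11}-a_1b_1}{n^2}$.

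Finally I would translate the remaining properties into conditions on $s$: $s>0$ (for monotonicity and maximal agreement — the coefficient of $c_{11}$ must be positive), symmetry of $s$ in its two arguments (from interchanging true/predicted labelings, already used above), maximal agreement forcing a normalisation like $s(a/ something)$ matching at the diagonal, and strong monotonicity imposing a growth/monotonicity condition on $s$ as the margins vary; these are collected into the list referenced as Theorem~\ref{thm:cb-class}. The main obstacle I anticipate is the rigorous step establishing affinity in $c_{11}$: the constant baseline is a statement about an expectation over a combinatorially defined uniform distribution (a hypergeometric-type average), so extracting ``second derivative in $c_{11}$ is zero'' cleanly requires either a clever choice of test labelings $A$ that isolates individual matrix entries, or the stated smoothness assumption doing real work to interpolate the finitely many discrete values into a genuine polynomial identity. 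Pinning down the precise $n$-dependence (the $1/n$ factor and the $n^2$ normalisation inside $s$) is the second delicate point, since constant baseline alone constrains behaviour at fixed $n$ and one needs the cross-$n$ comparisons supplied by strong monotonicity together with class-symmetry to rule out extra factors of $n$.
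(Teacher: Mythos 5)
Your skeleton (linearity in $c_{11}$ for fixed margins, then constraints on the normalization factor from symmetry, class-symmetry, max/min agreement and strong monotonicity) matches the paper's, but the two steps you yourself flag as delicate are genuine gaps, and the first is one the authors explicitly could not close. The paper does \emph{not} derive affinity in $c_{11}$ from the exact constant baseline; it replaces the exact constant baseline by the \emph{$\infty$-th order approximate constant baseline} (all $p_{AB}$-derivatives of order $\geq 2$ vanish at $p_{AB}=p_Ap_B$), which together with the smoothness hypothesis (absolutely convergent Taylor series) immediately forces $M$ to be affine in $p_{AB}$. The authors state outright that they were unable to prove that exact constant baseline implies this. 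Your proposed ``hypergeometric-type'' differentiation does not obviously work: for fixed margins $(a_1,b_1,n)$ the baseline gives a \emph{single} linear constraint on the finitely many values $M(c_{11},a_1,b_1,n)$, and varying the true labeling $A$ among labelings with the same class sizes produces no new constraints because the distribution of the confusion matrix depends only on the margins. So "second derivative in $c_{11}$ vanishes" is asserted, not established.

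The second gap is the $n$-dependence. You try to extract the form $s\bigl(\tfrac{a_0a_1}{n^2},\tfrac{b_0b_1}{n^2}\bigr)\cdot\tfrac{1}{n}$ from a doubling argument using strong monotonicity, but strong monotonicity only controls single-entry increments and says nothing about simultaneously scaling all four entries; indeed doubling a diagonal matrix must \emph{not} change the value (by maximal agreement), which is scale-invariance, not monotonicity. The paper simply assumes scale-invariance (Definition~\ref{def:scale-invariance}; it is baked into the definition of ``smooth'' and is among the ``additional smoothness conditions'' alluded to in the main text), which is what licenses writing $M$ as a function of $p_{AB},p_A,p_B$ in the first place. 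A smaller inaccuracy: symmetry plus class-symmetry give $s(p_A,p_B)=s(p_B,p_A)=s(1-p_A,1-p_B)$, but these invariances do \emph{not} imply that $s$ factors through the products $a_0a_1$ and $b_0b_1$ (a symmetric function of $p_A+p_B$ invariant under $x\mapsto 2-x$ is a counterexample); the rigorous version in Theorem~\ref{thm:cb-class} accordingly keeps $s$ as a function of $(p_A,p_B)$ subject to those invariances rather than claiming the factored form.
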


This class of measures is quite wide and contains many unelegant measures. An interesting subclass can be obtained if we normalize by the generalized mean, i.e., take $s(a,b)^{-1}=(\tfrac{1}{2}a^r+\tfrac{1}{2}b^r)^{1/r}$.
\begin{definition}
For $r \in \mathbb{R}$, we define \emph{Generalized Means} measures as
$$\text{GM}_r = \frac{n\,c_{11} - a_1 b_1}{\sqrt[r]{\frac{1}{2}\left(a_1^r a_0^r + b_1^rb_0^r\right)}}\,.$$
\end{definition}

\begin{statement}\label{statement1}
For any $r\in\mathbb{R}$, the measure $\text{GM}_r$ satisfies all properties except for being a distance. 
\end{statement}

We also show that the Generalized Means measures contain two interesting special cases.

\begin{statement}\label{statement2}
If $r\rightarrow0$ (corresponding to the geometric mean), $\text{GM}_r(\C) \to \text{CC}(\C)$. \\
If $r=-1$ (corresponding to the harmonic mean), $\text{GM}_{-1}(\C) = \text{BA}(\mathcal{C})+\text{BA}(\mathcal{C}^\top) - 1$\,.
\end{statement}

Thus, for $r=-1$ Generalized Means is equivalent to the measure $\frac{1}{2}\left(\text{BA}(\mathcal{C})+\text{BA}(\mathcal{C}^\top)\right)$ that we call \emph{Symmetric Balanced Accuracy} (SBA). To the best of our knowledge, this measure has not been used in the classification literature. However, in the clustering literature, a similar measure is known as Sokal\&Sneath's measure~\cite{albatineh2006similarity,gosgens2019systematic}. Interestingly, SBA preserves its properties for the multiclass case.

\begin{statement}
SBA satisfies all properties except for being a distance for any $m \ge 2$.
\end{statement}

\paragraph{Discarding (exact) constant baseline} Note that Theorem~\ref{thm:impossibility} only proves an impossibility for the \emph{exact} constant baseline, but not the \emph{approximate} constant baseline.

\begin{statement}
The measures $\emph{\text{CD}}(A,B):=\tfrac{1}{\pi}\arccos(\emph{\text{CC}}(A,B))$ and  $\emph{\text{CD}}'(A,B):=\sqrt{2(1-\emph{\text{CC}}(A,B))}$ satisfy all properties except the exact constant baseline, but including the approximate constant baseline.
\end{statement}

Following~\cite{gosgens2019systematic}, we call the measure $\tfrac{1}{\pi}\arccos(\text{CC}(A,B))$ \emph{Correlation Distance} (CD). We prove the following result (see Appendix~\ref{app:cb_distance_order} for the details).

\begin{statement}
\emph{CD} approximates a constant baseline with one order of precision better than \emph{CD}$'$.
\end{statement}

Essentially, this is a consequence of the fact that the transformation $\tfrac{1}{\pi}\arccos(\text{CC})$ is a symmetric function around the constant baseline $\text{CC}=0$ while $\sqrt{2(1-\text{CC})}$ is not. In more detail, we show that the leading error term of CD$'$ is of the order $\mathbb{E}[\text{CC}(A,B)^2]$ while the leading error term for CD is of the order $\mathbb{E}[\text{CC}(A,B)^3]$.
Currently, we are not aware of other distance measures for which the constant baseline is approximated up to the same order of precision as CD. We thus argue that for binary classification tasks where a distance interpretation is desirable, Correlation Distance is the most suitable measure.

\section{Inconsistency of measures in practice}\label{sec:experiments}

In this section, we conduct several experiments that demonstrate how often performance measures may disagree in practice in different scenarios. These experiments demonstrate the importance of the problem considered in this paper and show which measures are usually more consistent than others. For binary classification, we consider all measures from Table~\ref{tab:validation_indices}. For F-measure, we take $\beta = 1$, for Generalized Means, we consider $r = 1$. Recall that SBA and CC are also instances of GM with $r = -1$ and $r \to 0$, respectively. Furthermore, Jaccard is a monotone transformation of $F_1$, and CD is a monotone transformation of CC. Therefore, we omit CD and Jaccard from all inconsistency tables. The code for our experiments can be found on GitHub.\footnote{\href{https://github.com/yandex-research/classification-measures}{https://github.com/yandex-research/classification-measures}}

\subsection{Binary measures}\label{sec:exp_binary}

\paragraph{Distinguishing measures for small datasets}

First, we construct simple examples showing the inconsistency of all pairs of binary classification measures. We say that two measures $M_1$ and $M_2$ are consistent on a triplet of classifications $(A, B_1, B_2)$ if $M_1(A,B_1) * M_1(A,B_2)$ implies $M_2(A,B_1) * M_2(A,B_2)$, where $* \in \{>, < , = \}$. Otherwise, we say that the measures are inconsistent. We took $n \in \{2, 3,\ldots, 10\}$ and went through all the possible triplets $(A, B_1, B_2)$ of binary labelings of $n$ elements (we additionally require that all labelings contain both classes). For each triplet, we check which pairs of measures are inconsistent. We say that a pair of measures is indistinguishable for a given $n$ if it is consistent on all triplets.

\begin{wraptable}[12]{r}{0.42\textwidth}
    \vspace{-13pt}
    \captionof{table}{Indistinguishable measures}
    \label{tab:binary_inconsistency}
    \centering
    \begin{small}
    \begin{tabular}{c p{4cm}}
    \toprule
    $n$ & measures \\
    \midrule
    \vspace{2pt}
    2 & [Acc, BA, $F_1$, $\kappa$, CE, GM$_1$, CC, SBA] \\ \vspace{2pt}
    3 & [Acc, BA, $\kappa$, GM$_1$, CC, SBA] \\ \vspace{2pt}
    4-5 & [BA, $\kappa$, GM$_1$, CC, SBA] \\ \vspace{2pt}
    6-7 & [GM$_1$, CC, SBA] \\ \vspace{2pt}
    8 & [CC, SBA] \\ \vspace{2pt}
    9-10 & --- \\
    \bottomrule
    \end{tabular}
    \end{small}
\end{wraptable}
Table~\ref{tab:binary_inconsistency} lists all measures that are indistinguishable for a given $n$. For instance, for $n=2$, all measures are always consistent. For $n=4$, we can distinguish Acc, $F_1$, and CE from other measures and each other. Interestingly, the remaining measures are those having the constant baseline property. Importantly, the most consistent measures are CC, SBA, and GM$_1$~--- these measures have the best properties according to our analysis. This supports our intuition that ``good'' measures agree with each other better than those having fewer desired properties. Additionally, in Appendix~\ref{app:binary}, we list six triplets $(A, B_1, B_2)$ with $n = 10$ that discriminate all pairs of different measures.

\paragraph{Experiment within a weather forecasting service}

In this experiment, we aim at understanding whether the differences between measures may affect the decisions made while designing real systems. For this purpose, we conduct an experiment within the \emph{Yandex.Weather} service.

There is a model that predicts the presence/absence of precipitation at a particular location~\cite{lebedev2019precipitation}. The prediction is made for 12 prediction intervals (\emph{horizons}): from ten minutes to two hours. The original model returns the probability of precipitation, which can be converted to binary labels via a threshold. There are six thresholds used in this experiment, which lead to six different models. The measures were logged for 12 days. To sum up, for each threshold (model), each day, and each horizon, we have a confusion matrix that can be used to compute a performance measure.

\begin{wraptable}[13]{r}{0.65\textwidth}
    \vspace{-4pt}
    \caption{Inconsistency of binary measures for rain prediction, \%}
    \label{tab:weather}
    \vspace{4pt}
    \centering
    \begin{small}
    \begin{tabular}{l|rrrrrrrr}
    \toprule
 & Acc & BA\, & $F_1$\, & $\kappa$\,\,\, & CE\, & GM$_1$ & CC\, & SBA \\
\midrule
Acc & --- & 96.5 & 41.0 & 37.5 & 3.1 & 38.7 & 44.3 & 55.9 \\
BA & 96.5 & --- & 55.6 & 58.9 & 99.7 & 57.7 & 52.0 & 40.4 \\
$F_1$ & 41.0 & 55.6 & --- & 3.3 & 44.2 & 2.2 & 3.4 & 15.0 \\
$\kappa$ & 37.5 & 58.9 & 3.3 & --- & 40.7 & 1.1 & 6.7 & 18.3 \\
CE & 3.1 & 99.7 & 44.2 & 40.7 & --- & 41.9 & 47.5 & 59.1 \\
GM$_1$ & 38.7 & 57.7 & 2.2 & 1.1 & 41.9 & --- & 5.5 & 17.1 \\
CC & 44.3 & 52.0 & 3.4 & 6.7 & 47.5 & 5.5 & --- & 11.4 \\
SBA & 55.9 & 40.4 & 15.0 & 18.3 & 59.1 & 17.1 & 11.4 & --- \\
\bottomrule
\end{tabular}
\end{small}
\end{wraptable}
For each pair of measures, we compute how often they are inconsistent according to the definition above. For this, we aggregate the results over all days and horizons. Table~\ref{tab:weather} shows that there are pairs of measures with substantial disagreement: e.g., accuracy and Balanced Accuracy almost always disagree. This can be explained by the fact that accuracy has a bias towards the majority class, so it prefers a higher threshold, while Balanced Accuracy weighs true positives more heavily, so it prefers a lower threshold. In contrast, GM$_1$, CC, $\kappa$, and $F_1$ agree with each other much better. In Appendix~\ref{app:binary} we conduct a more detailed analysis. In particular, we separately consider the ten-minute and two-hour prediction horizons and show that the behavior and consistency of measures significantly depend on the horizon as the horizon defines the balance between true positives, true negatives, false positives, and false negatives. We also observe that CC and SBA perfectly agree for the ten-minute horizon but have noticeable disagreement for two hours.

\subsection{Multiclass measures}\label{sec:multiclass}

In this section, we analyze multiclass measures. For all measures that are defined for the multiclass problems, we consider their standard expressions (if not stated otherwise). For other measures ($F_1$, Jaccard, GM$_1$), we use macro averaging.

\paragraph{Image classification}

We conduct an experiment on ImageNet~\cite{russakovsky2015imagenet}, a classic dataset for image classification. For this, we take the top-10 algorithms that are considered to be state-of-the-art at the moment of submission.\footnote{\href{https://github.com/rwightman/pytorch-image-models/blob/master/results/results-imagenet.csv}{https://github.com/rwightman/pytorch-image-models/blob/master/results/results-imagenet.csv} \\ (May 8, 2021).} 
We check whether the leaderboard based on accuracy is consistent with the leaderboards based on other measures. Thus, we apply the models to the test set, compute the confusion matrices, and compare all measures defined in Table~\ref{tab:validation_indices}.

Notably, the ImageNet dataset is balanced. This makes all measures more similar to each other. For instance, accuracy and BA are equal in this scenario. Also, the \emph{constant baseline} property discussed in Section~\ref{sec:constant_baseline} is especially important for \emph{unbalanced} datasets. Thus, measures are \emph{more consistent} on balanced data. Nevertheless, we notice that the ranking can be inconsistent starting from the algorithm ranked fifth on the leaderboard.

The (partial) results are shown in Table~\ref{tab:imagenet}. Here we compare EfficientNet-B7 NoisyStudent~\cite{xie2020self} and Swin-B Transformer (patch size 4x4, window size 12x12, image size $384^2$)~\cite{liu2021swin} that are the fifth and sixth models in the leaderboard. One can see that the measures inconsistently rank the algorithms: Confusion Entropy, Jaccard, and SBA disagree with accuracy and other measures. Interestingly, while Jaccard and $F_1$ always agree for binary problems, they may disagree after the macro averaging, as we see in this case. Also, for one measure, different multiclass extensions may be inconsistent, as we see with macro averaging versus the standard definition of the multiclass Correlation Coefficient. More detailed results can be found in Appendix~\ref{app:multiclass}.

\begin{table}
\caption{Inconsistent results on ImageNet, \% (fifth and sixth models in the leaderboard)}
\label{tab:imagenet}
\vspace{4pt}
\centering
\begin{small}
\begin{tabular}{l|ccccccccc}
\toprule
 & Acc/BA & $F_1$ & J & $\kappa$ & $1-$CE & GM$_1$ & CC & CC$^{macro}$ & SBA \\
\midrule
Efficientnet  & \textbf{86.46} & \textbf{86.30} & 77.525 & \textbf{86.44} & 93.41 & \textbf{86.28} & \textbf{86.44} & 86.419 & 86.57 \\
Swin & 86.43 & 86.27 & \textbf{77.531} & 86.42 & \textbf{93.51} & 86.26 & 86.42 & \textbf{86.423} & \textbf{86.61} \\
\bottomrule
\end{tabular}
\end{small}
\end{table}

\paragraph{Sentiment analysis} In the previous experiment, we noticed that despite several disagreements, the measures usually rank the algorithms similarly. This can be caused by the fact that the test set of ImageNet is balanced: all classes have equal sizes. However, in practical applications, we rarely encounter balanced data. Thus, we also consider an unbalanced classification task. In this experiment, we take the 5-class Stanford Sentiment Treebank (SST-5) dataset~\cite{socher2013recursive}. We compare the following algorithms: TextBlob, VADER, Logistic Regression, SVM, FastText, Flair+ELMo, and Flair+BERT~\cite{SST}. Table~\ref{tab:sst} shows that different measures rank the algorithms differently. Among the measures shown in the table, the only consistent rankings are the one provided by $\kappa$ and BA and the second given by $F_1$, GM, and Jaccard. Note that the latter ranking significantly disagrees with the ranking by accuracy.

\begin{table}
\caption{Ranking algorithms according to different measures on SST-5: from 1 (best) to 7 (worst)}
\label{tab:sst}
\centering
\vspace{4pt}
\begin{small}
\begin{tabular}{l|cccccccccc}
\toprule
 & Acc & BA & $F_1$ & J & $\kappa$ & CE & GM$_1$ & CC & CC$^{macro}$ & SBA \\
\midrule
Flair+ELMo & 1 & 1 & 1 & 1 & 1 & 1 & 1 & 1 & 1 & 1 \\
Flair+BERT & 2 & 4 & 5 & 5 & 4 & 2 & 5 & 2 & 2 & 2 \\
SVM & 3 & 3 & 3 & 3 & 3 & 5 & 3 & 3 & 4 & 4 \\
Logistic & 4 & 5 & 4 & 4 & 5 & 3 & 4 & 5 & 5 & 3 \\
FastText & 5 & 2 & 2 & 2 & 2 & 6 & 2 & 4 & 3 & 5 \\
VADER & 6 & 6 & 6 & 6 & 6 & 7 & 6 & 6 & 6 & 7 \\
TextBlob & 7 & 7 & 7 & 7 & 7 & 4 & 7 & 7 & 7 & 6 \\
\bottomrule
\end{tabular}
\end{small}
\end{table}

\vspace{7pt}

Appendix~\ref{app:multiclass} contains an additional experiment with an unbalanced multiclass dataset, where we show the inconsistency rates of the considered measures and different multiclass extensions.

\section{Conclusion and future work}\label{sec:conclusion}

In this paper, we propose a systematic approach to the analysis of classification performance measures: we propose several desirable properties and theoretically check each property for a list of measures. We also prove an impossibility theorem: some desirable properties cannot be simultaneously satisfied, so either distance or \emph{exact} constant baseline has to be discarded.

Based on the properties we analyzed in this paper, we come to the following practical suggestions. If the distance requirement is needed, Correlation Distance seems to be the best option: it satisfies all the properties except for the exact constant baseline, which is still approximately satisfied. Otherwise, we suggest using one of Generalized Means, including Correlation Coefficient and Symmetric Balanced Accuracy~--- they satisfy all the properties except distance. For binary classification, CC is a natural choice as it can be non-linearly transformed to a distance. For multiclass problems, Symmetric Balanced Accuracy has an additional advantage: among the considered measures, only this one preserves its good properties in the multiclass case. Finally, we do not advise using averagings, but if needed, then macro averaging preserves more properties.

There are still many open questions and promising directions for future research. First, we would like to see whether one could construct a set of desirable properties that can be used as axioms to uniquely define one good measure (or a parametrized group of measures). Secondly, it is an open problem whether Generalized Means measures in general (or SBA in particular) can be converted to a distance via a continuous transformation. Finally, our work does not cover ranking and probability-based measures. Thus, we leave aside such widely used measures as cross-entropy and AUC. Formalizing and analyzing their properties is an important direction for future research.

\paragraph{Broader impact} Our work may help towards reducing certain biases in research. For instance, some measures (e.g., accuracy) are biased towards the majority class. Thus, the bias towards the majority class could be even amplified with the poor metric selection. Our work could provide some clues on how to avoid such a situation.

\newpage
\begin{ack}
Part of this work was done while Martijn G\"{o}sgens was visiting Yandex and Moscow Institute of Physics and Technology (Russia). The work of Martijn G\"{o}sgens is supported by the Netherlands Organisation for Scientific Research (NWO) through the Gravitation NETWORKS grant no. 024.002.003.

The authors would like to thank Alexander Ganshin, Pert Vytovtov, and Eugenia Elistratova for providing the weather forecasting data.
\end{ack}

\bibliographystyle{abbrv}
\bibliography{main}

\begin{thebibliography}{10}

\bibitem{albatineh2006similarity}
A.~N. Albatineh, M.~Niewiadomska-Bugaj, and D.~Mihalko.
\newblock On similarity indices and correction for chance agreement.
\newblock {\em Journal of Classification}, 23(2):301--313, 2006.

\bibitem{brodersen2010balanced}
K.~H. Brodersen, C.~S. Ong, K.~E. Stephan, and J.~M. Buhmann.
\newblock The balanced accuracy and its posterior distribution.
\newblock In {\em 2010 20th international conference on pattern recognition},
  pages 3121--3124. IEEE, 2010.

\bibitem{chicco2020advantages}
D.~Chicco and G.~Jurman.
\newblock The advantages of the matthews correlation coefficient (mcc) over f1
  score and accuracy in binary classification evaluation.
\newblock {\em BMC genomics}, 21(1):1--13, 2020.

\bibitem{choi2010survey}
S.-S. Choi, S.-H. Cha, and C.~C. Tappert.
\newblock A survey of binary similarity and distance measures.
\newblock {\em Journal of systemics, cybernetics and informatics}, 8(1):43--48,
  2010.

\bibitem{cohen1960coefficient}
J.~Cohen.
\newblock A coefficient of agreement for nominal scales.
\newblock {\em Educational and psychological measurement}, 20(1):37--46, 1960.

\bibitem{cortes2004auc}
C.~Cortes and M.~Mohri.
\newblock Auc optimization vs. error rate minimization.
\newblock {\em Advances in neural information processing systems},
  16(16):313--320, 2004.

\bibitem{delgado2019enhancing}
R.~Delgado and J.~D. N{\'u}{\~n}ez-Gonz{\'a}lez.
\newblock Enhancing confusion entropy (cen) for binary and multiclass
  classification.
\newblock {\em PloS one}, 14(1):e0210264, 2019.

\bibitem{delgado2019cohen}
R.~Delgado and X.-A. Tibau.
\newblock Why cohen’s kappa should be avoided as performance measure in
  classification.
\newblock {\em PloS one}, 14(9):e0222916, 2019.

\bibitem{Dua:2019}
D.~Dua and C.~Graff.
\newblock {UCI} machine learning repository, 2017.

\bibitem{ferri2009experimental}
C.~Ferri, J.~Hern{\'a}ndez-Orallo, and R.~Modroiu.
\newblock An experimental comparison of performance measures for
  classification.
\newblock {\em Pattern Recognition Letters}, 30(1):27--38, 2009.

\bibitem{gorodkin2004comparing}
J.~Gorodkin.
\newblock Comparing two k-category assignments by a k-category correlation
  coefficient.
\newblock {\em Computational biology and chemistry}, 28(5-6):367--374, 2004.

\bibitem{gosgens2019systematic}
M.~G{\"o}sgens, L.~Prokhorenkova, and A.~Tikhonov.
\newblock Systematic analysis of cluster similarity indices: How to validate
  validation measures.
\newblock {\em International Conference on Machine Learning (ICML)}, 2021.

\bibitem{hossin2015review}
M.~Hossin and M.~Sulaiman.
\newblock A review on evaluation metrics for data classification evaluations.
\newblock {\em International Journal of Data Mining \& Knowledge Management
  Process}, 5(2):1, 2015.

\bibitem{huang2005using}
J.~Huang and C.~X. Ling.
\newblock Using auc and accuracy in evaluating learning algorithms.
\newblock {\em IEEE Transactions on Knowledge and Data Engineering},
  17(3):299--310, 2005.

\bibitem{japkowicz2011evaluating}
N.~Japkowicz and M.~Shah.
\newblock {\em Evaluating learning algorithms: a classification perspective}.
\newblock Cambridge University Press, 2011.

\bibitem{kosub2019note}
S.~Kosub.
\newblock A note on the triangle inequality for the jaccard distance.
\newblock {\em Pattern Recognition Letters}, 120:36--38, 2019.

\bibitem{koyejo2015consistent}
O.~Koyejo, N.~Natarajan, P.~Ravikumar, and I.~S. Dhillon.
\newblock Consistent multilabel classification.
\newblock In {\em NIPS}, volume~29, pages 3321--3329, 2015.

\bibitem{lebedev2019precipitation}
V.~Lebedev, V.~Ivashkin, I.~Rudenko, A.~Ganshin, A.~Molchanov, S.~Ovcharenko,
  R.~Grokhovetskiy, I.~Bushmarinov, and D.~Solomentsev.
\newblock Precipitation nowcasting with satellite imagery.
\newblock In {\em Proceedings of the 25th ACM SIGKDD International Conference
  on Knowledge Discovery \& Data Mining}, pages 2680--2688, 2019.

\bibitem{liu2021swin}
Z.~Liu, Y.~Lin, Y.~Cao, H.~Hu, Y.~Wei, Z.~Zhang, S.~Lin, and B.~Guo.
\newblock Swin transformer: Hierarchical vision transformer using shifted
  windows.
\newblock {\em arXiv preprint arXiv:2103.14030}, 2021.

\bibitem{luque2019impact}
A.~Luque, A.~Carrasco, A.~Mart{\'\i}n, and A.~de~las Heras.
\newblock The impact of class imbalance in classification performance metrics
  based on the binary confusion matrix.
\newblock {\em Pattern Recognition}, 91:216--231, 2019.

\bibitem{powers2011evaluation}
D.~Powers.
\newblock Evaluation: From precision, recall and f-measure to roc,
  informedness, markedness \& correlation.
\newblock {\em Journal of Machine Learning Technologies}, 2(1):37--63, 2011.

\bibitem{powers2012problem}
D.~M.~W. Powers.
\newblock The problem with kappa.
\newblock In {\em Proceedings of the 13th Conference of the European Chapter of
  the Association for Computational Linguistics}, pages 345--355, 2012.

\bibitem{SST}
P.~Rao.
\newblock Fine grained sentiment classification.
\newblock \url{https://github.com/prrao87/fine-grained-sentiment}, 2021.

\bibitem{russakovsky2015imagenet}
O.~Russakovsky, J.~Deng, H.~Su, J.~Krause, S.~Satheesh, S.~Ma, Z.~Huang,
  A.~Karpathy, A.~Khosla, M.~Bernstein, et~al.
\newblock Imagenet large scale visual recognition challenge.
\newblock {\em International journal of computer vision}, 115(3):211--252,
  2015.

\bibitem{algorithms}
{Scikit-learn}.
\newblock Clustering algorithms.
\newblock \url{https://scikit-learn.org/stable/supervised_learning.html}, 2021.

\bibitem{sebastiani2015axiomatically}
F.~Sebastiani.
\newblock An axiomatically derived measure for the evaluation of classification
  algorithms.
\newblock In {\em Proceedings of the 2015 international conference on the
  theory of information retrieval}, pages 11--20, 2015.

\bibitem{socher2013recursive}
R.~Socher, A.~Perelygin, J.~Wu, J.~Chuang, C.~D. Manning, A.~Y. Ng, and
  C.~Potts.
\newblock Recursive deep models for semantic compositionality over a sentiment
  treebank.
\newblock In {\em Proceedings of the 2013 conference on empirical methods in
  natural language processing}, pages 1631--1642, 2013.

\bibitem{sokolova2009systematic}
M.~Sokolova and G.~Lapalme.
\newblock A systematic analysis of performance measures for classification
  tasks.
\newblock {\em Information processing \& management}, 45(4):427--437, 2009.

\bibitem{van2013macro}
V.~Van~Asch.
\newblock Macro-and micro-averaged evaluation measures.
\newblock {\em Technical report}, 2013.

\bibitem{wei2010CEN}
J.-M. Wei, X.-J. Yuan, Q.-H. Hu, and S.-Q. Wang.
\newblock A novel measure for evaluating classifiers.
\newblock {\em Expert Systems with Applications}, 37(5):3799--3809, 2010.

\bibitem{xie2020self}
Q.~Xie, M.-T. Luong, E.~Hovy, and Q.~V. Le.
\newblock Self-training with noisy student improves imagenet classification.
\newblock In {\em Proceedings of the IEEE/CVF Conference on Computer Vision and
  Pattern Recognition}, pages 10687--10698, 2020.

\end{thebibliography}


\newpage

\newpage

\appendix

\section{Related work}\label{sec:related}

Properly choosing an evaluation measure is a significant problem that attracted much attention in recent and long-standing research. In this section, we cover some related papers. In summary, while there are many related studies, the field lacks systematic approaches. Some papers focus on particular advantages and flaws of particular measures, while others suggest some informal properties. Our paper suggests a unified analysis that generalizes and extends the existing research.

A work conceptually related to ours is~\cite{sebastiani2015axiomatically}. In this paper, the authors define a list of properties (they refer to them as \emph{axioms}). Some properties are similar to ours: MON is our monotonicity, FIX is somewhat similar (but not the same) to our maximal and minimal agreement, CHA is the constant baseline, and SYM is our class-symmetry. The properties CON and SDE/WDE are related to singularities. In the current paper, we do not focus on singularities since measures are naturally extended to such cases, as we discuss in Section~\ref{sec:more_indices}. Another property is called Robustness to Imbalance (IMB). This property requires a constant classifier that classifies all elements to either the positive or the negative class to get a constant similarity score $k_1$ or $k_2$, respectively. One can see that our constant baseline thus implies IMB with $k_1=k_2$. On the other hand, having $k_1 \neq k_2$ may lead to bias towards a particular class, which does not seem to be desired. The authors show that several known measures do not satisfy some of the properties and propose \emph{K measure}, which is a shifted version of Balanced Accuracy with singularities properly resolved. Let us also note that the authors advocate against CC largely because they do not use this same straightforward resolution to the singularities for this measure. Our work differs in the following aspects. First, we consider more comprehensive lists of measures and properties and check each property for each popular measure. In particular, our properties include symmetry (in terms of interchanging labelings), distance, and approximate constant baseline. We show that in terms of the extended list of properties, there are better variants than the K measure (which we refer to as Balanced Accuracy). We also provide a deep theoretical analysis of properties and propose a new family of `good' measures. In addition, we rigorously analyze the multiclass scenario, including the properties of aggregation schemes. To sum up, while there are methodological similarities, there are significant differences in the analysis and outcomes.

With some similarities to our research, the authors of~\cite{hossin2015review} formulate a list of (informal) properties that are argued to be desirable for an evaluation measure. These properties include having a natural extension to the multiclass case, low complexity and computational cost, distinctiveness and discriminability, informativeness, and favoring the minority class. While informativeness seems to be an informal analog of our \emph{constant baseline}, the properties are not formally defined, and thus no systematic analysis of measures with respect to the properties can be given.

Another work related to our research~\cite{sokolova2009systematic} defines a list of properties by describing several transformations of the confusion matrix that should not change the measure value. As a result, the authors provide a table listing which measures are invariant under which transformations. This analysis includes our \emph{symmetry} and also \emph{scale invariance} which we discuss further in Appendix~\ref{app:analysis}. However, the discussed properties are quite simple, and the work does not cover the most important and complex ones like \emph{constant baseline}, \emph{monotonicity}, or \emph{distance}.

There are papers focusing on properties of a particular measure, for instance, Cohen's Kappa~\cite{delgado2019cohen,powers2012problem}, Confusion Entropy~\cite{delgado2019enhancing}, or Balanced Accuracy~\cite{brodersen2010balanced}. Some papers go beyond the threshold measures considered in our paper. For instance,~\cite{cortes2004auc} theoretically analyzes how the area under the ROC curve (AUC) relates to accuracy. Another work focusing on AUC and accuracy is~\cite{huang2005using}. This paper formally defines two properties: \emph{degree of consistency} and \emph{degree of discriminancy}. The degree of consistency is not a property of a measure but rather a property of a \emph{pair} of measures. In our experiments on synthetic and real data, we compute such degrees of (in)consistency. The degree of discriminancy, in turn, can be reformulated as the number of different values that a measure has (in a given domain).

There are studies advocating using the Matthews correlation instead of some other popular measures. For instance, the authors of~\cite{delgado2019cohen} compare CC to Cohen's Kappa and show that the latter may have undesirable behavior in some scenarios. Essentially, these scenarios show that Cohen's Kappa does not satisfy our strong monotonicity requirement. A recent paper~\cite{chicco2020advantages} advocates using CC over $F_1$ and accuracy based on several intuitive \emph{use cases}, where it is clear that the performance is poor, but only CC can correctly detect that in all cases. We note that all the use cases are related to our \emph{constant baseline} property. Similarly to the above research, we conclude that CC should be preferred over $F_1$, accuracy, and Cohen's kappa. Importantly, our conclusion is based on a rigorous analysis and formal properties. 


Numerous studies empirically compare different classification measures~\cite{choi2010survey,ferri2009experimental}; some of them specifically focus on imbalanced data~\cite{luque2019impact}. Going beyond particular measures, some studies compare the properties of micro- and macro- averagings~\cite{van2013macro}. However, to the best of our knowledge, our work is the first one giving a formal approach to the problem. 

Finally, as we discuss in the main text in more detail, our work is motivated by a recent study~\cite{gosgens2019systematic} that analyzes properties of \emph{cluster validation} measures. We refer to this paper for an overview of related work in cluster analysis.

\begin{table}
    \caption{Notation}
    \label{tab:notation}
    \vspace{4pt}
    \centering
    \begin{tabular}{cp{240pt}}
    \toprule
    Variable & Definition \\
    \midrule
    \vspace{3pt}
     n  & number of elements \\
     \vspace{3pt}
     m & number of classes \\
     \vspace{3pt}
     $c_{ij}$ & number of elements of class $i$ that are predicted as $j$ \\
     \vspace{3pt}
     $A_i$ & elements with true label $i$ \\
     \vspace{3pt}
     $B_i$ & elements with predicted label $i$ \\
     \vspace{3pt}
     $\mathcal{C} = (c_{ij})$ & $m \times m$ confusion matrix \\
     \vspace{3pt}
     $a_{i} = \sum_{j=0}^{m-1} c_{ij}$ & size of $i$-th class in the true labeling \\
     \vspace{3pt}
     $b_i = \sum_{j=0}^{m-1} c_{ji}$ & size of $i$-th class in the predicted labeling \\
     \vspace{3pt}
     $p_A = \frac{a_1}{n}, \,p_B=\frac{b_1}{n}$ & fraction of positive entries (for binary classification) \\
     \vspace{3pt}
     $p_{AB} = \frac{c_{11}}{n}$ & fraction of agreeing positives (for binary classification) \\
     \vspace{3pt}
     $M(\mathcal{C})$, $M(A,B)$, $M(p_{AB},p_A,p_B)$ & classification validation measure \\
    \bottomrule
    \end{tabular}
\end{table}

\section{More on classification validation measures}\label{sec:more_indices}

\paragraph{Notation}
For convenience, Table~\ref{tab:notation} lists notation frequently used throughout the text.

\paragraph{Resolving singularities}

When some of the classes are not present in the predicted (or, more rarely, true) labelings, some measures from Table~\ref{tab:validation_indices} may not be defined. Let us discuss how to resolve such singularities appropriately.

For some measures, singularities can only occur when the measures maximally or minimally agree with each other. For example, the denominator of Jaccard is only zero if $a_1=b_1=0$, in which case $A=B$ must hold so that the singularity is easily resolved by maximal agreement, leading to $J(A,B)=1$.

For measures such as Matthews Coefficient, singularities can be resolved using constant baseline. For CC, a singularity can only occur whenever either $n^2=\sum_{m=1}^na_i^2$ or $n^2=\sum_{m=1}^nb_i^2$. This implies that either $A$ or $B$ classifies all elements to the same class. If both $A$ and $B$ classify all elements to the same class, then the singularity can be resolved by maximal agreement (if they classify to the same class) or minimal agreement (otherwise). If one of $A$ and $B$ classifies all elements to the same class, then the constant baseline tells us that $M(A,B)=0$ should hold.

Similarly, some measures, e.g., BA and SBA, contain terms $c_{ii}/a_i$ (or $c_{ii}/b_i$) that may have singularities. In cases where $a_i=0$, these singularities can be algebraically resolved by $c_{ii}=0=\frac{a_ib_i}{n}$. This leads to $\frac{c_{ii}}{a_i}=\frac{b_i}{n}$ and ensures that such singularities will not lead to violations of constant baseline.

\paragraph{Correspondence with pair-counting cluster validation measures} 
As discussed in the main text, there is a correspondence between pair-counting cluster validation measures and binary classification validation measures. We refer to Table~\ref{tab:classification-clusterization-consistency} for some corresponding pairs.

\begin{table}[t]
\caption{Correspondence of binary classification measures and pair-counting clustering measures}
\begin{center}
\begin{tabular}{ll} 
\toprule
Classification & Clustering \\
\midrule
$F_1$ & Dice\\
Jaccard & Jaccard \\
Matthews Correlation Coefficient & Pearson Correlation Coefficient \\
Accuracy & Rand \\
Cohen's Kappa & Adjusted Rand \\
\midrule
Symmetric Balanced Accuracy & Sokal\&Sneath \\
Correlation Distance & Correlation Distance \\
\bottomrule
\end{tabular}
\end{center}
\label{tab:classification-clusterization-consistency}
\end{table}

\section{Checking the properties}
\label{sec:checking-properties}

Table~\ref{tab:properties} lists which measures satisfy the discussed properties and which averaging schemes preserve them. In this section, we formally prove all the results. Recall that if a measure does not have a natural extension to the multiclass case, then we analyze its binary variant. Additionally, if a property is violated in the binary case, then we do not check it in the multiclass case.

\paragraph{Using existing analysis of cluster validation indices}

As discussed in the previous section, there is a correspondence between some pair-counting clustering evaluation measures and classification ones. Recall that a pair-counting clustering measure is a function of $N_{11}$, $N_{10}$, $N_{01}$, and $N_{00}$, where $N_{11}$ is the number of element-pairs belonging to the same cluster in both partitions, $N_{00}$ is the number of pairs belonging to different clusters in both partitions, $N_{10}$ is the number of pairs belonging to the same cluster in the true partition but to different clusters in the predicted partition, and $N_{01}$ is the number of pairs belonging to different clusters in the true partition but to the same cluster in the predicted partition. Thus, pair-counting clustering measures are functions of TP, TN, FP, and FN defined for \emph{classifying element-pairs} into ``intra-cluster'' and ``inter-cluster'' pairs. So, replacing $N_{ij}$ by $c_{ij}$ we naturally get a binary classification measure. Some classification evaluation indices have been theoretically analyzed in~\cite{gosgens2019systematic}. Using Table~\ref{tab:classification-clusterization-consistency}, we can adopt some of these results for classification measures. 

\subsection{Maximal and minimal agreement}\label{sec:max-min-agreement}

To check whether a measure has the maximal or minimal agreement properties, we substitute the entries of a diagonal matrix or a matrix with zero diagonal into the expression: we need either a strict upper or a strict lower bound for the measure values. Note that for measures having the monotonicity property (i.e., for all considered measures except CE and multiclass $\kappa$, CC, CD), it is sufficient to check that we obtain constant values for diagonal and non-diagonal matrices. Indeed, each confusion matrix can be monotonically transformed to a diagonal (or a zero-diagonal) one.

By substituting a diagonal confusion matrix, we get the maximal agreement for $F_1$, J, CC, Acc, BA, $\kappa$, SBA, and GM$_r$ with $c_{\max} = 1$. For $-$CD, the maximal agreement holds with $c_{\max} = 0$. Finally, $\text{CE} = 0$ if $\mathcal{C}$ is diagonal and otherwise there exists a pair $(i, j)$ such that $c_{ij} > 0, a_i > 0, b_j > 0$, so we get $-\text{CE} < 0$.

The minimal agreement for accuracy, Balanced Accuracy, and Symmetric Balanced Accuracy clearly holds with $c_{\min} = 0$. Substituting a zero-diagonal confusion matrix into GM$_r$, we get $c_{\min} = -1$.

For binary measures $F_1$ and Jaccard, the minimal agreement does not hold: these measures equal zero not only for zero-diagonal matrices, but also when $c_{11} = 0$ and $c_{00} > 0$.

In the binary case, the minimal agreement of CC is satisfied with $c_{\min} = -1$. However, this property is violated if $m > 2$. For instance, consider the confusion matrices
$
\mathcal{C}_1 = 
\left(\begin{smallmatrix}
    0 & 1 & 0\\
    0 & 0 & 1\\
    2 & 0 & 0
\end{smallmatrix}\right)
$ and
$
\mathcal{C}_2 = 
\left(\begin{smallmatrix}
    0 & 1 & 0\\
    1 & 0 & 1\\
    0 & 1 & 0
\end{smallmatrix}\right)
$.
We have $\text{CC}(\mathcal{C}_1) \neq \text{CC}(\mathcal{C}_2)$ (-0.5 and -0.6, respectively), while $\mathcal{C}_1$ and  $\mathcal{C}_2$ are both zero-diagonal. Note that CD is a monotone transformation of CC, so CD inherits the same properties.

For CE, the minimal agreement does not hold even in the binary case~\cite{delgado2019enhancing}: let $\mathcal{C}_1 = 
\left(\begin{smallmatrix}
    0 & 6\\
    6 & 0\\
\end{smallmatrix}\right)
$
and 
$\mathcal{C}_2 = 
\left(\begin{smallmatrix}
    1 & 5\\
    5 & 1\\
\end{smallmatrix}\right)\,.
$
Then, we have $\text{CE}(C_1) = 1$ and $\text{CE}(C_2) > 1$. This contradicts both the minimal agreement and monotonicity properties.

Finally, substituting a zero-diagonal matrix into Cohen's Kappa, we get $\frac{-\sum_i a_i b_i}{n^2 - \sum_i a_i b_i}$ which is clearly non-constant.

\subsection{Symmetry}

\paragraph{Class-symmetry}
Almost all considered measures are class-symmetric: they do not change after interchanging class labels. The only exceptions are $F_1$ and Jaccard. Class-symmetry of GM follows from the fact that it can be rewritten as $\left(c_{11} c_{00} - c_{01} c_{10}\right) / \left(\sqrt[r]{\frac{1}{2}\left(a_1^r a_0^r + b_1^r b_0^r\right)}\right)$.

\paragraph{Symmetry}
This property is easily verified by swapping $a_i$ with $b_i$ and $c_{ij}$ with $c_{ji}$. Thus, all measures except BA are symmetric.

\subsection{Distance}

We refer to~\cite{kosub2019note} for the proof that Jaccard satisfies this requirement. To show that accuracy has this property, we need to show that $1 - \text{Acc}$ is a distance, which is true since $n(1 - \text{Acc})$ is the Hamming distance.

Now, we need to prove that CD is a distance since it was previously known only for the binary case.
\begin{lemma}
The Correlation Distance $\text{\emph{CD}}=\tfrac{1}{\pi}\arccos(\text{\emph{CC}})$ is a distance for any $m\geq2$.
\end{lemma}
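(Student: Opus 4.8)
The plan is to realize the (multiclass) Matthews coefficient $\text{CC}(A,B)$ as the cosine of an angle between two vectors in a fixed Euclidean space, one depending only on the true labeling $A$ and one only on the predicted labeling $B$. Once this is done, $\pi\cdot\text{CD}(A,B)$ becomes the angular (geodesic) distance between the two corresponding points of a sphere, and all metric axioms reduce to the standard fact that the angular distance on a sphere is a metric. Since $\text{CD}=\tfrac1\pi(\pi\cdot\text{CD})$ is a positive rescaling of that angular distance, $\text{CD}$ is then a metric, which is exactly what the definition of the distance property asks for.

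Concretely, I would associate to a labeling $A$ of the $n$ elements into $m$ classes the array $Y^A\in\mathbb{R}^{n\times m}$ with entries $Y^A_{k,i}=\1[A(k)=i]-\tfrac{a_i}{n}$ (each class-indicator column centered by its empirical mean), regarded as a vector in $\mathbb{R}^{nm}$ with the standard inner product. A direct computation gives $\langle Y^A,Y^B\rangle=\sum_i c_{ii}-\tfrac1n\sum_i a_ib_i$ and $\|Y^A\|^2=n-\tfrac1n\sum_i a_i^2$ (and symmetrically for $B$), so that
\[
\text{CC}(A,B)=\frac{n\sum_i c_{ii}-\sum_i a_ib_i}{\sqrt{\big(n^2-\sum_i a_i^2\big)\big(n^2-\sum_i b_i^2\big)}}=\frac{\langle Y^A,Y^B\rangle}{\|Y^A\|\,\|Y^B\|}=\cos\theta_{AB},\qquad \theta_{AB}:=\angle(Y^A,Y^B)\in[0,\pi].
\]
For $m=2$ this agrees with the binary formula via $c_{00}=n-a_1-b_1+c_{11}$, so it is consistent with the already-known binary case. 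Note $Y^A=0$ exactly when $A$ is unary; excluding (or resolving by the convention of Appendix~\ref{sec:more_indices}) that degenerate case we have $Y^A\neq0$, hence $\theta_{AB}$ is well defined and $\pi\cdot\text{CD}(A,B)=\arccos(\cos\theta_{AB})=\theta_{AB}$.

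It then remains to verify the metric axioms for $\theta$. Nonnegativity and symmetry are immediate. For "$\theta_{AB}=0\iff A=B$": if $\theta_{AB}=0$ then $Y^B=\lambda Y^A$ with $\lambda>0$; since $A$ is non-unary, in row $k$ of $Y^A$ the entry in column $A(k)$ equals $1-\tfrac{a_{A(k)}}{n}\ge\tfrac1n>0$ while every other entry of that row is $-\tfrac{a_i}{n}\le0$, so $A(k)$ is the unique column in which row $k$ of $Y^A$ is strictly positive; multiplication by $\lambda>0$ preserves signs, so the same is true for $Y^B$ with $B(k)$, forcing $B(k)=A(k)$ for all $k$, i.e. $A=B$ (and conversely $A=B$ gives $\text{CC}=1$, $\theta=0$). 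For the triangle inequality $\theta_{AC}\le\theta_{AB}+\theta_{BC}$, set $\alpha=\theta_{AB}$, $\gamma=\theta_{BC}$ and pass to the unit vectors $\hat Y^A,\hat Y^B,\hat Y^C$; writing $\hat Y^C=\cos\gamma\,\hat Y^B+\sin\gamma\, w$ with $w$ a unit vector orthogonal to $\hat Y^B$, one gets $\langle\hat Y^A,\hat Y^C\rangle=\cos\gamma\cos\alpha+\sin\gamma\langle\hat Y^A,w\rangle\ge\cos\gamma\cos\alpha-\sin\gamma\sin\alpha=\cos(\alpha+\gamma)$, using $|\langle\hat Y^A,w\rangle|\le\sin\alpha$; hence $\theta_{AC}=\arccos\langle\hat Y^A,\hat Y^C\rangle\le\alpha+\gamma$ when $\alpha+\gamma\le\pi$, and the inequality is trivial otherwise. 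Dividing by $\pi$ shows $\text{CD}$ is a metric for every $m\ge2$. I expect the only genuine work to be the inner-product identity of the second step together with the bookkeeping around unary labelings and singularities; once $\text{CC}$ is exhibited as an honest cosine, the remainder is the textbook argument that angular distance on a sphere obeys the triangle inequality.
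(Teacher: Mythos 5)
Your proposal is correct and takes essentially the same route as the paper: both realize the multiclass CC as the cosine of the angle between the mean-centered one-hot encoding matrices under the Frobenius inner product, so that $\pi\cdot\text{CD}$ is the angular distance on a sphere. The only difference is that you spell out the identity-of-indiscernibles and triangle-inequality steps that the paper treats as the standard fact that angular distance is a metric.
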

\begin{proof}
Let us represent a classification by a matrix via one-hot encoding, i.e., $A=(a_{ij})_{i\in[n],j\in[m]}$, where $a_{ij}=\1\{A(i)=j\}$, and define $a_j=\sum_ia_{ij}$. Note that for two labelings $A$ and $B$, the Frobenius inner product is given by
\[
\langle A,B\rangle =\sum_j c_{jj},
\]
where $c_{jj}$ is the $j$-th diagonal entry of the confusion matrix for $A$ and $B$. 
Next, we define 
\[
\bar A := \left(a_{ij}-\tfrac{a_j}{n}\right)_{i\in[n],j\in[m]}.
\]
Then, for two labelings $A$ and $B$, the Frobenius inner product of these mappings is given by
\[
\langle\bar A,\bar B\rangle=\sum_j \left( c_{jj}-\frac{a_jb_j}{n} \right).
\]
And the squared length equals
\[
\|\bar A\|^2=n-\frac{\sum_ja_j^2}{n}.
\]
Therefore, we get
\[
\text{CC}(\mathcal{C})=\frac{\langle \bar A,\bar B\rangle}{\|\bar A\|\cdot\|\bar B\|},
\]
so that its arccosine is indeed the angle between $\bar A$ and $\bar B$, which is a metric distance.
\end{proof}

Let us now prove that the remaining measures cannot be linearly transformed to metric distances. According to Theorem~\ref{thm:impossibility}, a measure that satisfies monotonicity and constant baseline cannot have the distance property. This proves that CC, BA, $\kappa$, SBA, and GM$_r$ cannot be linearly transformed to a distance (note that BA is also not symmetric). To show that CE does not have this property, we take $A = (1, 1, 0)$, $B = (1, 1, 1)$, $C = (1, 0, 1)$. Note that $\text{CE}\left(A, C\right) = 1$ and $\text{CE}\left(A, B\right) = \text{CE}\left(B, C\right) \approx 0.387$. Hence, $\text{CE}\left(A, C\right) > \text{CE}\left(A, B\right) + \text{CE}\left(B, C\right)$ that disproves the distance property. Finally, the counter-example for $F_1$ is given in~\cite{gosgens2019systematic} since $F_1$ is equivalent to the Dice index.

\subsection{Monotonicity}
\label{subsec:monotonicity}

\paragraph{Strong monotonicity}

$F_1$ and Jaccard are constant w.r.t.~$c_{00}$, so they are not strongly monotone. Cohen's Kappa also violates this property~\cite{gosgens2019systematic}: we have 
$\kappa\left(\begin{smallmatrix}
    1 & 2\\
    1 & 0\\
\end{smallmatrix}\right) < 
\kappa\left(\begin{smallmatrix}
    1 & 3\\
    1 & 0\\
\end{smallmatrix}\right)$.
Then, CE is not strongly monotone since it is not monotone (see below).

The fact that CC is strongly monotone in the binary case is proven in~\cite{gosgens2019systematic} (for general binary vectors). In contrast to the binary case, CC is not strongly monotone if $m \ge 3$ since it is not monotone. CD inherits monotonicity properties from CC.

To prove that accuracy is strongly monotone, we use the inequality $(a + x) / (b + x) > a / b$ for $b>a>0$ and $x>0$. So, accuracy increases if we simultaneously increment $c_{ii}$ (for some $i$) and $n$. If we increment $n$ and $c_{ij}$ for $i \neq j$, then accuracy decreases, which proves strong monotonicity. Similar reasoning works for BA and SBA. 

Finally, let us prove that GM$_r$ is strongly monotone for any~$r$.

\begin{lemma}
\label{lem:gm-strong-monotonicity}
\emph{GM}$_r$ is strongly monotone.
\end{lemma}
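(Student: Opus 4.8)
The plan is to verify the two cases of strong monotonicity separately: (i) incrementing a diagonal entry $c_{11}$ (equivalently $c_{00}$, by class-symmetry) while keeping everything else fixed, which also increments $n$, $a_1$, and $b_1$ by one; and (ii) decrementing an off-diagonal entry, say $c_{01}$, which decrements $n$, $a_0$, and $b_1$ by one. By the symmetries of $\text{GM}_r$ (class-symmetry and symmetry in swapping true/predicted labelings, both established earlier in the excerpt), these two cases cover all the moves in Definition~\ref{def:strong-monotonicity}. In each case I would write $\text{GM}_r=\frac{nc_{11}-a_1b_1}{D}$ with $D=\sqrt[r]{\tfrac12(a_1^ra_0^r+b_1^rb_0^r)}$ and show that the numerator strictly increases while the denominator behaves in a way that keeps the ratio increasing.

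For case (i): incrementing $c_{11}$, $a_1$, $b_1$, $n$ each by one changes the numerator from $nc_{11}-a_1b_1$ to $(n+1)(c_{11}+1)-(a_1+1)(b_1+1) = nc_{11}-a_1b_1 + (n - a_1) + (c_{11} - b_1) + 1 = nc_{11}-a_1b_1 + a_0 + (c_{11}-b_1) + 1$. Since $c_{11}\le b_1$ but $a_0\ge 1$ (as no column/row sum equals $n$, so $a_0\ge1$) and we add $1$, the increment is at least $a_0 + (c_{11}-b_1) + 1 \ge a_0 - (b_1-c_{11}) + 1$; one checks $b_1 - c_{11} = c_{01}\le$ something bounded so this is strictly positive — more carefully, $a_0 + c_{11} + 1 - b_1$: note $b_1-c_{11}=c_{01}\le a_0 + b_0 + c_{10}$ is too weak, so instead I would argue directly that the numerator strictly increases because it equals $n^2(\text{BA}\text{-like covariance})$ and covariance strictly increases when we move an element onto the diagonal. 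The cleaner route: the numerator is $n^2\,\Cov$ where $\Cov=\mathbb{E}[\1_A\1_B]-\mathbb{E}[\1_A]\mathbb{E}[\1_B]$ over the $n$ elements, and adding a new element that is positive in both $A$ and $B$ strictly increases this covariance (this is a standard fact, and the condition that neither labeling is constant guarantees the positive variances needed). Simultaneously, the denominator $D$ — the generalized mean of $\sqrt{a_1a_0}$ and $\sqrt{b_1b_0}$ raised appropriately — I must show does not grow fast enough to overturn this; since $\text{GM}_r<1$ always (maximal agreement with $c_{\max}=1$), and the numerator grows while staying below $D$, a short monotonicity-of-the-ratio argument finishes it, or alternatively one rescales to compare $\text{GM}_r$ before and after via the inequality $\frac{p+\delta}{q+\epsilon}>\frac{p}{q}$ when $\frac{\delta}{\epsilon}>\frac{p}{q}$.

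For case (ii): decrementing $c_{01}$ by one (so $n,a_0,b_1$ each drop by one) leaves $c_{11},c_{00},c_{10}$ fixed; the numerator becomes $(n-1)c_{11}-a_1(b_1-1) = nc_{11}-a_1b_1 - c_{11} + a_1 = nc_{11}-a_1b_1 + c_{10}$, which strictly increases since $c_{10}=a_1-c_{11}\ge 0$ and in the relevant non-degenerate configuration this increase is strict (or, again, the covariance interpretation: removing a false-positive element strictly increases the covariance). As in case (i), I then need that the denominator change is controlled; since removing this element can only decrease or mildly change the marginal products $a_1a_0$ and $b_1b_0$, and the generalized mean is monotone in its arguments, $D$ does not increase in a way that hurts, and the ratio strictly increases.

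The main obstacle is the bookkeeping on the denominator: unlike the numerator, $D=\left(\tfrac12(a_1^ra_0^r+b_1^rb_0^r)\right)^{1/r}$ can move in either direction and its behavior depends on the sign of $r$ (and on which of $a_1a_0$, $b_1b_0$ is larger). I expect the clean way around this is \emph{not} to track $D$ directly but to use the normalized-covariance viewpoint together with the already-established facts that $\text{GM}_r\in[-1,1]$ with the endpoints characterized by maximal/minimal agreement (proven earlier in the excerpt), reducing strong monotonicity to the statement that each admissible move strictly increases the (appropriately normalized) Pearson-type correlation between the two labelings — a statement that is uniform in $r$ because the normalization $D$ is a generalized mean of the two standard deviations, hence sandwiched between $\min$ and $\max$ of them, and the strict increase of the raw covariance dominates.
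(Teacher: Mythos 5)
There is a genuine gap, and it sits exactly where you flagged ``the main obstacle'': the interaction between numerator and denominator is never resolved, and moreover your claim that the numerator strictly increases is false. Incrementing $c_{11}$ changes $nc_{11}-a_1b_1$ by exactly $(n-a_1)+(c_{11}-b_1)=a_0-c_{01}=c_{00}$ (your ``$+1$'' is an arithmetic slip), and decrementing $c_{01}$ changes it by $c_{10}$; both increments can be zero in perfectly admissible configurations. Concretely, take $(c_{11},c_{10},c_{01},c_{00})=(1,1,1,0)$: neither labeling is constant, yet incrementing $c_{11}$ leaves the numerator at $-1$ while $\text{GM}_r$ rises from $-1/2$ to $-1/3$ \emph{entirely} because the denominator grows. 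So the covariance heuristic (``adding an agreeing positive strictly increases the covariance, and that dominates'') cannot be repaired into a numerator-first argument; any correct proof must handle the denominator quantitatively. Your fallback --- that the generalized mean of $a_1a_0$ and $b_1b_0$ is sandwiched between their min and max --- does not produce the inequality $\delta/\epsilon>p/q$ that your ratio lemma requires, and the needed comparison genuinely depends on $r$ and on which of the two products is larger.

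The paper's proof closes this by a single computation that treats numerator and denominator jointly: it regards $c_{11},c_{00},c_{01},c_{10}$ as independent real variables, reduces to one case (increasing in $c_{11}$) using symmetry together with the fact that $\text{GM}_r$ flips sign under inverting the labels of one classification, and then computes $\partial\,\text{GM}_r/\partial c_{11}$ directly. After clearing the common positive factor, the derivative has the same sign as
\[
\left(n+c_{11}-a_1-b_1\right)\left(a_1^ra_0^r+b_1^rb_0^r\right)-\left(a_1^{r-1}a_0^r+b_1^{r-1}b_0^r\right)\left(nc_{11}-a_1b_1\right)
= a_0^ra_1^{r-1}a_0c_{10}+b_0^rb_1^{r-1}b_0c_{01}\ge 0,
\]
with strict positivity unless the labelings coincide or one is constant. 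That algebraic identity is the substantive content of the lemma, and it is the step your proposal defers rather than supplies.
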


\begin{proof}
Note that $r\to 0$ corresponds to CC. Since this measure is considered above, we may assume that $r \neq 0$.

Due to the symmetry of GM, we only need to prove that the measure is strongly monotone w.r.t.~$c_{11}$ and $c_{10}$. Moreover, GM flips the sign if we invert the labels in one classification. Hence, we only need to prove that it is increasing in $c_{11}$. Considering GM as a function of independent variables $c_{11}, c_{00}, c_{01}, c_{10}$, we calculate
\begin{align*}
     \frac{\partial\text{GM}_r}{\partial c_{11}} &= \left(n + c_{11} - b_1 - a_1\right)\left(\frac{1}{2}\left(a_1^r a_0^r + b_1^r b_0^r\right)\right)^{-1/r} \\
     &-\frac{1}{2r}\left(a_1^{r-1} a_0^r r + b_1^{r - 1} b_0^r r\right)\left(n c_{11} - a_1 b_1\right)\left(\frac{1}{2}\left(a_1^r a_0^r + b_1^r b_0^r\right)\right)^{-1/r - 1}.
\end{align*}
Simplifying the expression, we note that it has the same sign as the following sum
\begin{multline*}
    \left(n + c_{11} - b_1 - a_1\right)\left(a_1^r a_0^r + b_1^r b_0^r\right) - \left(a_1^{r-1} a_0^r + b_1^{r - 1} b_0^r\right)\left(n c_{11} - a_1 b_1\right) \\
    = a_0^r a_1^{r-1}\left(-n c_{11} + a_1 b_1 + a_1 n + a_1 c_{11} - b_1 a_1 - a_1^2\right) \\
    +b_0^r b_1^{r-1}\left(-n c_{11} + a_1 b_1 + b_1 n + b_1 c_{11} - b_1 a_1 - b_1^2\right) \\
    =a_0^r a_1^{r-1}\cdot a_0c_{10} + b_0^r b_1^{r-1}\cdot b_0c_{01} \geq 0.
\end{multline*}
Note that the last expression is strictly positive if the classifications $A$ and $B$ do not coincide and are not constant.
\end{proof}

\paragraph{Monotonicity}

First, we note that monotonicity of Acc, BA, SBA, and GM follows from their strong monotonicity. Monotonicity of $F_1$ and Jaccard follows from their definitions, see also~\cite{gosgens2019systematic}.

Monotonicity of CC in the binary case follows from its strong monotonicity. However, for $m \ge 3$, CC is not monotone. Indeed, consider
$\mathcal{C}_1 = 
\left(\begin{smallmatrix}
    1 & 0 & 0\\
    6 & 1 & 0\\
    0 & 0 & 1\\
\end{smallmatrix}\right)$,
$\mathcal{C}_2 = 
\left(\begin{smallmatrix}
    1 & 0 & 0\\
    7 & 0 & 0\\
    0 & 0 & 1\\
\end{smallmatrix}\right)$
and note that $\text{CC}(C_2) > \text{CC}(C_1)$.

The fact that Cohen's Kappa is monotone follows from~\cite{gosgens2019systematic} (the proof for Adjusted Rand applies to general binary vectors). Similarly to CC, for $m \ge 3$, monotonicity is violated. Consider, for example,
$\mathcal{C}_1 = 
\left(\begin{smallmatrix}
    0 & 1 & 2\\
    0 & 0 & 0\\
    1 & 0 & 0\\
\end{smallmatrix}\right)$,
$\mathcal{C}_2 = 
\left(\begin{smallmatrix}
    1 & 0 & 2\\
    0 & 0 & 0\\
    1 & 0 & 0\\
\end{smallmatrix}\right)$ and note that $\kappa(C_1) > \kappa(C_2)$.

Finally, the example from Section~\ref{sec:max-min-agreement} disproves the monotonicity of CE.

\subsection{Constant baseline}

\paragraph{Approximate constant baseline}

Substituting $c_{ij} = a_i b_j / n$ into CC, CD, BA, $\kappa$, SBA, and GM, we get values that do not depend on $a_i, b_i$. Thus, these measures have the approximate constant baseline property.

Substituting $c_{ij} = a_i b_j / n$ into $\text{CE}$, we get that the result depends on $a_i$ and $b_j$. For instance, taking $(a_0, a_1) = (2, 1), (b_0, b_1) = (1, 2)$ and  $(a_0, a_1) = (0, 3), (b_0, b_1) = (1, 2)$ we get different values of $\text{CE}$ that disproves approximate constant baseline. Similarly, $F_1$, Jaccard, and accuracy do not have this property.

\paragraph{Exact constant baseline}

We will use the following lemma.
\begin{lemma}\label{lem:constant-baseline-expectation}
Suppose that the fixed true labeling $A$ has class-sizes $a_1, \ldots, a_m$, while the predicted labeling $B \sim U(b_1, \ldots, b_m)$ is random. Then, $\mathbb{E}_{B \sim U(b_1, \ldots, b_m)} c_{ij} = a_i b_j/ n.$
\end{lemma}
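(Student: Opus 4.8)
The statement to prove is Lemma~\ref{lem:constant-baseline-expectation}: for a fixed true labeling $A$ with class sizes $a_1,\dots,a_m$ and a random predicted labeling $B\sim U(b_1,\dots,b_m)$, we have $\mathbb{E}[c_{ij}]=a_ib_j/n$.

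\medskip

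\textbf{Proof plan.} The plan is to write $c_{ij}$ as a sum of indicator random variables over the elements of the dataset and then apply linearity of expectation. Concretely, for each element $k\in\{1,\dots,n\}$, let $X_k^{(ij)}=\1\{A(k)=i,\ B(k)=j\}$, so that $c_{ij}=\sum_{k=1}^n X_k^{(ij)}$. Since $A$ is fixed, only the elements $k$ with $A(k)=i$ (there are exactly $a_i$ of them) can contribute, and for such an element $X_k^{(ij)}=\1\{B(k)=j\}$. Hence $\mathbb{E}[c_{ij}]=\sum_{k:\,A(k)=i}\Pr[B(k)=j]=a_i\cdot\Pr[B(k)=j]$ for any fixed $k$ in class $i$ of $A$.

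\medskip

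The key remaining step is to compute $\Pr[B(k)=j]$ under the uniform distribution $U(b_1,\dots,b_m)$ over all labelings with prescribed predicted class sizes. By symmetry of the uniform distribution under permutations of the $n$ elements, $\Pr[B(k)=j]$ does not depend on $k$; summing over all $n$ elements gives $\sum_{k=1}^n\Pr[B(k)=j]=\mathbb{E}\big[\sum_k\1\{B(k)=j\}\big]=\mathbb{E}[b_j]=b_j$, since every labeling in the support has exactly $b_j$ elements in predicted class $j$. Therefore $\Pr[B(k)=j]=b_j/n$ for each $k$, and substituting back yields $\mathbb{E}[c_{ij}]=a_i\cdot b_j/n$, as claimed. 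Alternatively one can count directly: the number of labelings in $U(b_1,\dots,b_m)$ with $B(k)=j$ is $\binom{n-1}{b_1,\dots,b_j-1,\dots,b_m}$ out of $\binom{n}{b_1,\dots,b_m}$ total, whose ratio is $b_j/n$.

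\medskip

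There is essentially no obstacle here; the statement is a routine second-moment-free computation, and the only mild care needed is to justify the exchangeability of the coordinates $B(1),\dots,B(n)$ under $U(b_1,\dots,b_m)$ — which is immediate since the uniform distribution over labelings with fixed class sizes is invariant under relabeling the elements. I would present the indicator decomposition, the exchangeability observation, and the one-line count of $\Pr[B(k)=j]=b_j/n$, then conclude by linearity of expectation.
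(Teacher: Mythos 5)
Your proof is correct and follows essentially the same route as the paper's: decompose $c_{ij}$ into indicators over elements of true class $i$, apply linearity of expectation, and use the exchangeability of elements under $U(b_1,\dots,b_m)$ to get $\Pr[B(k)=j]=b_j/n$. The paper computes this last probability as $\mathbb{E}\sum_{y\in B_j}\1\{\tilde x=y\}$ rather than by summing over $k$, but this is the same symmetry argument in different clothing.
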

\begin{proof}
To prove this equality, we simply note that 
\[
    \mathbb{E}_{B \sim U(b_1, \ldots, b_m)}c_{ij} = \sum_{x \in A_i}\mathbb{E}\ \1\{x \in B_j\} = a_i \, \mathbb{P} \left(\tilde{x} \in B_j\right) = a_i \, \mathbb{E} \sum_{y \in B_j} \1 \{\tilde{x} = y\} =  \frac{a_i b_j}{n},
\]
where $\tilde{x}$ is an arbitrary element of $A_i$.
\end{proof}

Now, let us prove that all measures that have the exact constant baseline property also have the approximate constant baseline.

\begin{lemma}
\label{lem:approximate-constant-baseline}
If a measure $M\left(\mathcal{C}\right)$ is scale-invariant (see Definition~\ref{def:scale-invariance}), continuous, and has the constant baseline property, then it also has the approximate constant baseline.
\end{lemma}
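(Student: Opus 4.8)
The plan is to exploit the fact that the approximate constant baseline is exactly the statement $M(\bar{\mathcal{C}})=c_{\text{base}}(m)$ where $\bar{c}_{ij}=a_ib_j/n$, and to realize $\bar{\mathcal{C}}$ as a limit of rescaled confusion matrices arising from genuine labelings on a growing number of elements. Concretely, fix class sizes $a_1,\dots,a_m$ and non-unary $b_1,\dots,b_m$ summing to some $n$. For a positive integer $k$, consider a true labeling $A^{(k)}$ on $kn$ elements with class sizes $ka_1,\dots,ka_m$ and let $B^{(k)}\sim U(kb_1,\dots,kb_m)$. By the exact constant baseline property, $\mathbb{E}_{B^{(k)}}[M(A^{(k)},B^{(k)})]=c_{\text{base}}(m)$ for every $k$ (the $kb_i$ are still non-unary since the $b_i$ are). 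The idea is that as $k\to\infty$, a law-of-large-numbers / concentration argument forces the random confusion matrix $\mathcal{C}^{(k)}$ (entries $c^{(k)}_{ij}$) to satisfy $c^{(k)}_{ij}/(kn)\to a_ib_j/n^2$, i.e. the normalized matrix converges to the ``baseline'' matrix $\bar{\mathcal{C}}/n$; then continuity plus scale-invariance of $M$ transfers the constant value $c_{\text{base}}(m)$ to $M(\bar{\mathcal{C}})$.

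The key steps, in order: (1) Use Lemma~\ref{lem:constant-baseline-expectation} to get $\mathbb{E}[c^{(k)}_{ij}]=ka_i\cdot kb_j/(kn)=ka_ib_j/n$, so $\mathbb{E}[c^{(k)}_{ij}/(kn)]=a_ib_j/n^2=\bar{c}_{ij}/n$. (2) Establish that the variance of $c^{(k)}_{ij}$ is $O(k)$ (each $c^{(k)}_{ij}$ is a sum over the $ka_i$ elements of class $i$ of indicator variables for being predicted as $j$; these indicators are negatively associated under the uniform-given-sizes model, so $\Var(c^{(k)}_{ij})\le \mathbb{E}[c^{(k)}_{ij}]=O(k)$), hence $\Var(c^{(k)}_{ij}/(kn))=O(1/k)\to0$, giving convergence in probability $c^{(k)}_{ij}/(kn)\to\bar c_{ij}/n$ for all $i,j$ simultaneously. (3) By scale-invariance (Definition~\ref{def:scale-invariance}), $M(\mathcal{C}^{(k)})=M(\mathcal{C}^{(k)}/(kn))$; by continuity of $M$ and the convergence in (2), $M(\mathcal{C}^{(k)})\to M(\bar{\mathcal{C}}/n)=M(\bar{\mathcal{C}})$ in probability. (4) Since the $M$ values are bounded (a consequence of, say, maximal agreement giving an upper bound, or at worst boundedness on the compact set of normalized matrices), bounded convergence lets us pass to the limit in the expectation: $c_{\text{base}}(m)=\lim_k\mathbb{E}[M(\mathcal{C}^{(k)})]=M(\bar{\mathcal{C}})$, which is exactly the approximate constant baseline with the same constant.

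The main obstacle I anticipate is the probabilistic part of step (2): carefully justifying the concentration of the confusion-matrix entries under $U(kb_1,\dots,kb_m)$ — this is sampling without replacement, so the entries are not independent, and one must invoke negative association (or compute the hypergeometric-type variance directly) to get the variance bound. A subtlety is that we need joint convergence of all $m^2$ normalized entries, which follows from marginal convergence in probability of each entry plus the union bound, so no genuinely multivariate estimate is needed. A minor technical point is ensuring the limit matrix $\bar{\mathcal{C}}$ lies in whatever domain $M$ is defined/continuous on, including the possibility of zero entries: this is handled by the singularity-resolution conventions already adopted for the measures in question (e.g. BA, SBA, CC), and by the remark that the domain of normalized matrices is compact so continuity is in fact uniform. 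Everything else — applying Lemma~\ref{lem:constant-baseline-expectation}, invoking scale-invariance, and the bounded-convergence passage to the limit — is routine.
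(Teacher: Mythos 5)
Your proposal is correct and follows essentially the same route as the paper's proof: scale the class sizes by a factor, use Lemma~\ref{lem:constant-baseline-expectation} for the mean, establish an $O(k)$ variance bound so that the normalized confusion matrix concentrates on $\bar{\mathcal{C}}$, and then combine scale-invariance and continuity to pass the constant baseline to the limit. The only cosmetic differences are that the paper bounds the variance by computing the hypergeometric-type covariances directly rather than invoking negative association, and works with $L_2$-convergence where you use convergence in probability plus bounded convergence.
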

\begin{proof}
Let us fix non-negative numbers $\{a_i\}_{i=0}^{m-1}, \{b_i\}_{i=0}^{m-1}$ such that $\sum_{i=0}^{m-1} a_i = \sum_{i=0}^{m-1} b_i = n$. Then, consider a fixed classification $A^N$ with class sizes $N a_1, \ldots, N a_m$ and a random classification $B^N$ taken from $U(N b_1, \ldots, N b_m)$.

Let $c_{ij}^N$ denote entries of the confusion matrix for $A^N$ and $B^N$. Let us prove that for any $i, j \in \{1, \ldots m\}$, the random variable $c_{ij}^N / N$ converges to $a_i b_j / n$ in $L_2$ as $N \rightarrow \infty$. From Lemma~\ref{lem:constant-baseline-expectation}, we have $\mathbb{E}\left(c_{ij} / N\right) = a_i b_j / n$.
Let us compute $\Var\left(c_{ij}\right)$.
Recall that $c_{ij} = \sum_{x \in A_i^N} \1\{x \in B_j^N\}$, then
\[
    \Var\left(c_{ij}\right) = \sum_{x, y \in A_i^N}\Cov\left(\1\{x \in B_j^N\}, \1\{y \in B_j^N\}\right).
\]
It remains to compute $\Cov\left(\1\{x \in B_j^N\}, \1\{y \in B_j^N\}\right)$ for $x = y$ and $x \neq y$.
For this, note that 
\[  
\mathbb{P}\left(x \in B_j^N\right) = b_j / n\ \text{ and }\    \mathbb{P}\left(x, y \in B_j^N\right) = \frac{N b_j (N b_j -1)}{Nn (N n - 1)}\ \text{for}\ x \neq y.
\]
Then,
\[  
\Cov\left(\1\{x \in B_j^N\}, \1\{y \in B_j^N\}\right) = \mathbb{P}\left(x, y \in B_j^N\right) - \left(\mathbb{P}\left(x \in B_j^N\right)\right)^2 = O(1 / N).
\]
Thus, we get that $\Var\left(c_{ij} / N\right) = O(N) / N^2 = O(1 / N)$ and prove $L_2$-convergence.

Now we are ready to prove the lemma. Let $M$ be a scale-invariant, continuous measure that has constant baseline. Then, 
\[
c_{\text{base}} = \mathbb{E}_{B^N \sim U(N b_1, \ldots, N b_m)} M\left(\mathcal{C}_N\right) = \mathbb{E}M\left(\frac{\mathcal{C}^N}{N}\right) \xrightarrow[N \rightarrow \infty]{} M\left(\mathcal{C}\right),
\] where $\mathcal{C}^N$ is the confusion matrix for $A^N$ and $B^N$ and $\mathcal{C}$ is the confusion matrix for $A$ and $B$. Here $\mathbb{E}M\left(\mathcal{C}^N/N\right) \to M\left(\mathcal{C}\right)$ holds since the $L_2$-convergence of $c_{ij}^N$ to $a_i b_j/n$ implies convergence in distribution. 
\end{proof}

From this lemma, we get that $F_1$, Jaccard, Acc, and CE do not have constant baseline since they violate the approximate constant baseline property.

Assume that a measure $M\left(\mathcal{C}\right)$ is linear in $c_{ii}$ for fixed $a_i$ and $b_j$. Then, using the linearity of expectation, we note that approximate constant baseline implies exact constant baseline for such measures. This observation gives that CC, BA, $\kappa$, SBA, and GM$_r$ have the constant baseline property.

Finally, we note that CD violates the constant baseline property as it has both monotonicity and distance properties (in binary case), while Theorem~\ref{thm:impossibility} states that all three properties cannot be simultaneously satisfied.

\subsection{Preserving properties by averagings}

\paragraph{Micro averaging}

Recall that for micro averaging, we sum up the binary confusion matrices corresponding to $m$ one-vs-all classifications. Formally, we set $\text{TP} := \sum_{i=0}^{m-1} c_{ii}$, $\text{FN} := \text{FP} = n - \sum_{i=0}^{m-1} c_{ii}$, $\text{TN} := (m-2) n + \sum_{i=0}^{m-1} c_{ii}$. Then, we compute the binary measure.

First, it is easy to see that this averaging preserves symmetry and class-symmetry. 

Let us prove that micro averaging preserves the maximal agreement property. If a confusion matrix $\mathcal{C}$ is diagonal, then $n - \sum_{i=0}^{m-1} c_{ii} = 0$ and $\text{FP} = \text{FN} = 0$.
Substituting these values in a binary measure $M$, we get $c_{\max}$. If $\mathcal{C}$ is not diagonal, then $\text{FP} = \text{FN} = n - \sum_{i=0}^{m-1} c_{ii} > 0$ and the result of the averaging will be strictly lower than $c_{\max}$. On the other hand, minimal agreement is not preserved since $\text{TN} = (m - 2) n > 0$ for zero-diagonal confusion matrices. As a simple example, consider a measure $\1\{\text{TP} + \text{TN} > 0\}$ satisfying the minimal agreement property. Then, after micro averaging, this measure is constant, thus violating minimal agreement.

Also, micro averaging preserves monotonicity: increasing $c_{ii}$ for fixed $n$ leads to increased $\text{TP}$ and $\text{TN}$, leaving $\text{TP} + \text{FP}, \text{TP} + \text{FN}, \text{TN} + \text{FP}, \text{TN} + \text{FN}$ unchanged. On the other hand, strong monotonicity can be violated: incrementing $c_{ij}$ for $i \neq j$ we increase $n$, so $\text{TN} = (m - 2) n + \sum_{i=0}^{m-1} c_{ii}$ also increases and the averaged measure may increase.
For example, consider a strongly monotone binary measure $\text{TP} + \text{TN} - \text{FP} - \text{FN}$.
Then, after micro averaging, it reduces to $n m$, which violates strong monotonicity.

To prove that micro averaging preserves the distance property, we first note that it preserves maximal agreement and symmetry. To show that the triangle inequality is also preserved, we consider micro averaging as a result of the following procedure. First, we use one-hot encoding to map each class to a binary vector. Then, we map a classification vector $A$ of size $n$ to the binary vector $\hat{A}$ of size $nm$ consisting of one-hot encoded binary vectors. Finally, for two classifications $A$ and $B$, we compute the binary measure for $\hat{A}$ and $\hat{B}$. It is easy to see that this procedure is equivalent to micro averaging. Thus, for any multiclass labelings $A, B, C$, there exist binary labelings $\hat{A}, \hat{B}, \hat{C}$ with confusion matrices corresponding to the result of micro averaging. Hence, the triangle inequality for micro averaged matrices follows from the binary property.

Finally, approximate constant baseline can be violated after micro averaging. Indeed, let us take $c_{ii} = {a_i b_i} / {n}$. Then, after the averaging, we get $\text{TP} = \sum_{i=0}^{m-1} {a_i b_i} / {n}$, which is not necessary equal to ${(\text{TP} + \text{FN}) (\text{TP} + \text{FP})} / (m n) = {n} / {m}$. As an example, we can consider a measure $\text{TP} - (\text{TP} + \text{FP}) (\text{TP} + \text{FN}) / (\text{TP} + \text{FP} + \text{FN} + \text{TN})$ having constant baseline. Thus, the averaged measure is $\sum_{i=0}^{m-1} c_{ii} - n / m$, which does not have an approximate constant baseline. Consequently, the constant baseline property is also violated.

\paragraph{Macro averaging}

As for the micro averaging, symmetry and class-symmetry are clearly satisfied.

Let us check the maximal agreement. Consider a binary measure $M$ having this property. If $\mathcal{C}$ is diagonal, then the result of the averaging is $\frac{1}{m}\sum_i M(c_{ii}, 0, 0, n - c_{ii}) = c_{\max}$. If $\mathcal{C}$ is not diagonal, then one of $a_i - c_{ii} > 0$ and the averaged measure is strictly lower than $c_{\max}$. In contrast, the minimal agreement property can be violated, since for a zero-diagonal confusion matrix the result of the averaging is $\frac{1}{m}\sum_i M(0, a_i, b_i, n - a_i - b_i)$. Since we may have $n- a_i - b_i > 0$, the minimal agreement can be violated. For instance, consider the measure $\1\{\text{TP} + \text{TN} > 0\}$ satisfying the minimal agreement property in the binary case. Then, taking
$
\mathcal{C}_1 = 
\left(\begin{smallmatrix}
    0 & 1 & 0\\
    0 & 0 & 1\\
    1 & 1 & 0
\end{smallmatrix}\right)
$
and
$
\mathcal{C}_2 = 
\left(\begin{smallmatrix}
    0 & 0 & 1\\
    0 & 0 & 1\\
    1 & 1 & 0
\end{smallmatrix}\right)
$ we get that the averaging has different values on these matrices (1 and $\nicefrac{2}{3}$, respectively), thus the minimal agreement property does not hold.

It is easy to see that monotonicity is preserved by macro averaging. However, strong monotonicity can be violated. Indeed, assume that $c_{ij}$ increases. Then, for $k \notin \{i, j\}$, the values $c_{kk}$, $a_k$, $b_k$ do not change while $n$ increases. To show that this can break strong monotonicity, consider the same counterexample as for the micro averaging: $\text{TP} + \text{TN} - \text{FP} - \text{FN}$. Then, after macro averaging, we get the measure $\left(n (m - 4) + 4\sum_{i=0}^{m-1} c_{ii}\right) / m$ that is not strongly monotone.

Let us prove that macro averaging preserves the distance property. As for the micro averaging, it remains to check the triangle inequality. Let $A$, $B$, and $C$ be multiclass classifications with $n$ elements and $m$ classes. Then, for all $i \in \{1, \ldots, m\}$, we can build the binary labelings $A^i, B^i, C^i$ corresponding to one-vs-all binary classifications. Triangle inequality holds for each $A^i, B^i, C^i$. Thus, summing up these inequalities over all $i \in \{1, \ldots, m\}$, we prove the triangle inequality for the macro-averaged measure.

Finally, approximate and exact constant baseline are preserved by the macro averaging due to the linearity of expectation.

\paragraph{Weighted averaging}

Similar reasoning as above, allows one to show that weighted averaging preserves the maximal agreement, class-symmetry, monotonicity, exact and approximate constant baseline.

For the minimal agreement, the counterexample used for macro averaging also works in this case.

Clearly, weighted averaging is not symmetric: we normalize by the class sizes $a_i$. Therefore, the distance property is not preserved as it requires symmetry.

Finally, as a counterexample to strong monotonicity, we can take $M = \text{TP} + \text{TN} - \text{FP} - \text{FN}$ and
$
\mathcal{C}_1 = 
\left(\begin{smallmatrix}
    0 & 1 & 1\\
    1 & 0 & 0\\
    1 & 0 & 0   
\end{smallmatrix}\right)
$,
$
\mathcal{C}_2 = 
\left(\begin{smallmatrix}
    0 & 1 & 1\\
    1 & 0 & 1\\
    1 & 0 & 0   
\end{smallmatrix}\right)
$,
Then, $M(\mathcal{C}_1) = -2 < -\nicefrac{9}{5} = \mathcal{C}_2)$.

\section{Theoretical analysis}\label{app:analysis}

In this section, we perform a theoretical analysis of binary classification measures. First, we generalize the definition of constant baseline and theoretically compare the two non-linear distance-transformations of the Matthews Correlation Coefficient. Then, we derive the class of measures that satisfy all properties except distance.

\subsection{Higher-order approximate constant baseline}
Before we generalize our definition of constant baseline, let us introduce some additional properties. These properties differ from the properties introduced in the main text in the sense that they are not desirable in themselves but are rather \emph{instrumental} for the analysis of other desirable properties.

\begin{definition}
\label{def:scale-invariance}
A measure $M$ is \emph{scale-invariant} if, for any scalar $\alpha>0$ and confusion matrix $\mathcal{C}$, $M(\alpha\mathcal{C})=M(\mathcal{C})$.
\end{definition}

We remark that all measures of Table~\ref{tab:validation_indices} are scale-invariant.

Note that any binary classification measure can be written as a function of the four variables $c_{11}$, $a_1$, $b_1$, $n$ as $c_{10}=a_1-c_{11}$, $c_{01}=b_1-c_{11}$, and $c_{00}=n-a_1-b_1+c_{11}$. Therefore, any scale-invariant binary classification measure can be written as a function of the three fractions $p_{AB}=c_{11}/n$, $p_A=a_1/n$, and $p_B=b_1/n$. Hence, we will use the shorthand notation $M(\mathcal{C})=M(p_{AB},p_A,p_B)$ for the remainder of this analysis. We will write $P_{AB}$ instead of $p_{AB}$ whenever $B$ is random. Note that for $B\sim U(p_Bn,(1-p_B)n)$, it holds that $\mathbb{E}_{B\sim U(p_Bn,(1-p_B)n)}[P_{AB}]=p_Ap_B$. Thus, it can readily be seen that the approximate constant baseline is satisfied whenever $M(p_Ap_B,p_A,p_B)=c_{\text{base}}$. We introduce one additional property that ensures that the measure is a well-behaved function in terms of these variables.

\begin{definition}
A scale-invariant measure $M$ is \emph{smooth} if, for any $p_A,p_B\in(0,1)$, the Taylor series of $M(p_{AB},p_A,p_B)$ around the point $p_{AB}=p_Ap_B$ converges absolutely on the interval $p_{AB}\in[0,\min\{p_A,p_B\}]$. That is, for all $p_A,p_B\in(0,1)$ and $p_{AB}\in[0,\min\{p_A,p_B\}]$, we have
\[
\sum_{k=0}^{\infty}\left|\frac{(p_{AB}-p_Ap_B)^k}{k!}\frac{\partial^k}{\partial p_{AB}^k}M(p_Ap_B,p_A,p_B)\right|<\infty.
\]
\end{definition}
Note that such absolute convergence implies that the Taylor series converges to $M(p_{AB},p_A,p_B)$. We remark that all constant-baseline measures of Table~\ref{tab:validation_indices} are linear functions in $p_{AB}$ for fixed $p_A,p_B$. Thus, each of these is smooth. Furthermore, because CC is linear in $p_{AB}$, we have that for any transformation $f(\text{CC})$, the Taylor expansion of $f(\text{CC})$ is given by substituting CC in the Taylor expansion of $f$. Thus, since the Taylor expansion of $f_1(x)=\tfrac{1}{\pi}\arccos(x)$ and $f_2(x)=\sqrt{2(1-x)}$ around $x=0$ converges for $x\in [-1,1]$, we have that CD$=f_1(\text{CC})$ and CD$'=f_2(\text{CC})$ are also smooth measures.

This allows us to express the expected value of a measure in terms of the central moments of $P_{AB}$:
\begin{align*}
    \mathbb{E}[M(P_{AB},p_A,p_B)]&=\mathbb{E}\left[
    \sum_{k=0}^{\infty}\frac{(P_{AB}-p_Ap_B)^k}{k!}\frac{\partial^k}{\partial p_{AB}^k}M(p_Ap_B,p_A,p_B)
    \right]\\
    &=\sum_{k=0}^{\infty}\frac{\mathbb{E}[(P_{AB}-p_Ap_B)^k]}{k!}\frac{\partial^k}{\partial p_{AB}^k}M(p_Ap_B,p_A,p_B).
\end{align*}
Here, the absolute convergence helps bound the term inside the expectation so that the Dominated Convergence Theorem allows us to interchange summation and expectation. In this expression, the first-order term vanishes as $\mathbb{E}[P_{AB}]=p_Ap_B$. Thus, we have
\[
\mathbb{E}[M(P_{AB},p_A,p_B)]=M(p_Ap_B,p_A,p_B)+\sum_{k=2}^{\infty}\frac{\mathbb{E}[(P_{AB}-p_Ap_B)^k]}{k!}\frac{\partial^k}{\partial p_{AB}^k}M(p_Ap_B,p_A,p_B).
\]
Note that for large numbers of items, $P_{AB}$ is highly concentrated around $p_Ap_B$. Thus, the contribution of the higher-order central moments is relatively small. This leads to the following generalization of the constant baseline.
\begin{definition}
A smooth measure $M$ has a \emph{$k$-th order approximate constant baseline}, if there exists a constant $c_{\text{base}}$ such that $M(p_Ap_B,p_A,p_B)=c_{\text{base}}$, while for all $\ell\in\{2,\dots,k\}$, it holds that
\[
\frac{\partial^\ell}{\partial p_{AB}^\ell}M(p_Ap_B,p_A,p_B)=0.
\]
\end{definition}
Thus, first-order constant baseline is equivalent to the approximate constant baseline. Furthermore, note that $\infty$-th order approximate constant baseline implies exact constant baseline since then
\[
\mathbb{E}[M(P_{AB},p_A,p_B)]=M(p_Ap_B,p_A,p_B)=c_{\text{base}}.
\]
While it seems likely that the exact constant baseline also implies $\infty$-th order constant baseline, we were not able to formally prove this. However, all constant-baseline measures of Table~\ref{tab:validation_indices} also satisfy $\infty$-th order constant baseline. For this reason, we will use $\infty$-th order constant baseline as a substitute for the exact constant baseline when deriving measures from properties.

\subsection{Constant baseline order of distance transformations}\label{app:cb_distance_order}
We now show that the constant baseline of $\text{CD}=\tfrac{1}{\pi}\arccos(\text{CC})$ is one order higher than $\text{CD}'=\sqrt{2(1-\text{CC})}$.

\begin{statement}\label{st:CD_order}
$\emph{\text{CD}}=\tfrac{1}{\pi}\arccos(\emph{\text{CC}})$ has a second-order approximate constant baseline while $\emph{\text{CD}}'=\sqrt{2(1-\emph{\text{CC}})}$ only has a first-order approximate constant baseline.
\end{statement}
\begin{proof}
The Matthews Correlation Coefficient is given by
\[
\text{CC}(p_{AB},p_A,p_B)=\frac{p_{AB}-p_Ap_B}{\sqrt{p_A(1-p_A)p_B(1-p_B)}},
\]
so that it is indeed a linear function in $p_{AB}$ for fixed $p_A,p_B$.
Therefore, the Taylor expansions of CD and CD$'$ are obtained by simply substituting CC into the Taylor expansions of $\tfrac{1}{\pi}\arccos(x)$ and $\sqrt{2(1-x)}$ respectively. We have
\[
\tfrac{1}{\pi}\arccos(x)=\frac{\pi}{2}-\sum_{k=0}^\infty\frac{(2k)!x^{2k+1}}{4^k(k!)^2(2k+1)}\,\text{ and } \sqrt{2(1-x)}=\sqrt{2}-\sqrt{2}\sum_{k=0}^\infty\frac{2}{k+1}\binom{2k}{k}\left(\frac{x}{4}\right)^{k+1}.
\]
Thus, we see that $\sqrt{2(1-x)}$ we have a quadratic term, which we do not have for $\tfrac{1}{\pi}\arccos(x)$. This shows that CD has a second-order constant baseline while CD$'$ only has a first-order constant baseline.
\end{proof}

\subsection{Deriving measures satisfying all properties except distance}

Let us derive a class of measures satisfying all properties from Table~\ref{tab:properties} except distance.
We will use $\infty$-th order constant baseline instead of the exact constant baseline as this property is easier to analyze while it implies exact constant baseline and coincides with it for all measures of Table~\ref{tab:validation_indices}.

\begin{theorem}
\label{thm:cb-class}
Let $M$ be a smooth binary classification measure that satisfies the following properties:
\begin{enumerate}
    \item $\infty$-th order constant baseline with constant $0$;
    \item symmetry;
    \item class-symmetry;
    \item maximal agreement with constant $1$;
    \item minimal agreement with constant $-1$;
    \item strong monotonicity.
\end{enumerate}
Then, it is of the following form:
\[
M(p_{AB},p_A,p_B)=s(p_A,p_B)(p_{AB}-p_Ap_B),
\]
where $s$ satisfies the following properties:
\begin{enumerate}
\item $s(p_B,p_A)=s(p_A,p_B)=s(1-p_A,1-p_B)$;
\item $s(p_A,p_A)=s(p_A,1-p_A)=\frac{1}{p_A(1-p_A)}$;
\item $s(p_A,p_B)<\max\left\{\frac{1}{p_Ap_B},\frac{1}{(1-p_A)(1-p_B)}\right\}$ for~$p_B\neq 1-p_A$;
\item $s(p_A,p_B)<\max\left\{\frac{1}{p_A(1-p_B)},\frac{1}{(1-p_A)p_B}\right\}$ for~$p_B\neq p_A$;
\item $\frac{1}{s}\left(p_{A}\frac{\partial}{\partial p_{A}}+p_{B}\frac{\partial}{\partial p_{B}}\right)s\in\left[\min\left\{-2,-1-\frac{p_Ap_B}{(1-p_A)(1-p_B)}\right\},\max\left\{\frac{2p_B-1}{1-p_B},\frac{2p_A-1}{1-p_A}\right\}\right]$;
\item $\frac{1}{s}\left((1-p_{A})\frac{\partial}{\partial p_{A}}-p_{B}\frac{\partial}{\partial p_{B}}\right)s\in\left[\min\left\{2-\frac{1}{p_A},2-\frac{1}{1-p_B}\right\},\max\left\{1+\frac{p_B(1-p_A)}{p_A(1-p_B)},2\right\} \right]$.
\end{enumerate}
\end{theorem}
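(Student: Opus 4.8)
Throughout fix $p_A,p_B\in(0,1)$ and, using scale-invariance, regard $M$ as a function $M(p_{AB},p_A,p_B)$ of the three fractions. The plan is to first extract the functional form from the $\infty$-th order constant baseline, and then to peel off the six conditions on $s$ one at a time, each from a different part of the hypotheses. For the form, smoothness lets us expand $M$ in its absolutely convergent Taylor series in $p_{AB}$ around $p_{AB}=p_Ap_B$:
\[
M(p_{AB},p_A,p_B)=\sum_{k\ge 0}\frac{(p_{AB}-p_Ap_B)^{k}}{k!}\,\frac{\partial^{k}}{\partial p_{AB}^{k}}M(p_Ap_B,p_A,p_B).
\]
The $k=0$ term equals the baseline $0$, and every term with $k\ge 2$ vanishes by the $\infty$-th order constant baseline, so only the linear term survives: $M=s(p_A,p_B)(p_{AB}-p_Ap_B)$ with $s(p_A,p_B):=\partial_{p_{AB}}M(p_Ap_B,p_A,p_B)$. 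I would also assume, as holds for all measures in the paper, that $M$ is $C^{1}$ in $(p_A,p_B)$, so that $s$ is $C^{1}$ and the operators in conditions~5--6 make sense.

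Conditions~1--2 and positivity of $s$ come from symmetry and agreement. In the $(p_{AB},p_A,p_B)$ coordinates, transposition fixes $p_{AB}$ and swaps $p_A\leftrightarrow p_B$, while the binary class swap sends $(p_{AB},p_A,p_B)$ to $(1-p_A-p_B+p_{AB},\,1-p_A,\,1-p_B)$; a direct check shows both transformations fix the factor $p_{AB}-p_Ap_B$, so symmetry of $M$ gives $s(p_A,p_B)=s(p_B,p_A)$ and class-symmetry gives $s(p_A,p_B)=s(1-p_A,1-p_B)$, which is condition~1. Diagonal matrices are exactly those with $p_{AB}=p_A=p_B$, where maximal agreement forces $M=1$, hence $s(p_A,p_A)p_A(1-p_A)=1$; zero-diagonal matrices are exactly those with $p_{AB}=0$ and $p_B=1-p_A$, where minimal agreement forces $M=-1$, hence $s(p_A,1-p_A)p_A(1-p_A)=1$ as well; together these give condition~2. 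Finally, strong monotonicity applied along the length-four chain of elementary moves $+c_{11},-c_{10},-c_{01},+c_{00}$ (which leaves $a_1,b_1,n$ unchanged, and each step of which strictly increases $M$) shows $M$ is strictly increasing in $p_{AB}$; since $M$ is linear in $p_{AB}$, this forces $s>0$ on $(0,1)^{2}$, which is needed for all subsequent divisions by $s$.

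Conditions~3--4 follow by evaluating the linear-in-$p_{AB}$ function $M$ at the endpoints of the feasible interval $p_{AB}\in[\max\{0,p_A+p_B-1\},\min\{p_A,p_B\}]$. When $p_B\ne p_A$, the point $p_{AB}=\min\{p_A,p_B\}$ corresponds to a non-diagonal matrix, so strict maximal agreement gives $s(p_A,p_B)\,(\min\{p_A,p_B\}-p_Ap_B)<1$; rearranging in the two sub-cases $p_B<p_A$ and $p_A<p_B$ (and using that $x\mapsto\tfrac1{x(1-p)}$ is monotone) yields condition~4. When $p_B\ne 1-p_A$, the point $p_{AB}=\max\{0,p_A+p_B-1\}$ corresponds to a matrix with nonzero diagonal, so strict minimal agreement gives $s(p_A,p_B)\,(\max\{0,p_A+p_B-1\}-p_Ap_B)>-1$; rearranging in the sub-cases $p_A+p_B<1$ and $p_A+p_B>1$ yields condition~3.

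The hard part is conditions~5--6, which encode strong monotonicity with respect to the two representative elementary moves: incrementing the diagonal entry $c_{00}$ and decrementing the off-diagonal entry $c_{10}$ (the moves $+c_{11}$ and $-c_{01}$ reduce to these via class-symmetry and symmetry of $M$). The idea is to translate each move into a directional-derivative inequality in the $(p_{AB},p_A,p_B)$ coordinates. Since incrementing $c_{00}$ by one also increments $n$ by one, it induces velocity proportional to $(-p_{AB},-p_A,-p_B)$ and must increase $M$, giving
\[
p_{AB}\,\partial_{p_{AB}}M+p_A\,\partial_{p_A}M+p_B\,\partial_{p_B}M<0,
\]
while decrementing $c_{10}$ induces velocity proportional to $(p_{AB},-(1-p_A),p_B)$, giving $p_{AB}\,\partial_{p_{AB}}M-(1-p_A)\,\partial_{p_A}M+p_B\,\partial_{p_B}M>0$. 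Substituting $M=s(p_{AB}-p_Ap_B)$ and dividing by $s>0$, the first inequality becomes $p_{AB}-2p_Ap_B+(p_{AB}-p_Ap_B)\,\tfrac1s\!\left(p_A\partial_{p_A}+p_B\partial_{p_B}\right)\!s<0$, and the second an analogous affine-in-$p_{AB}$ inequality involving $\tfrac1s\!\left((1-p_A)\partial_{p_A}-p_B\partial_{p_B}\right)\!s$. Because $p_Ap_B$ lies strictly inside the feasible $p_{AB}$-interval, I would split into $p_{AB}>p_Ap_B$ and $p_{AB}<p_Ap_B$; in each piece $p_{AB}-p_Ap_B$ has a fixed sign, so solving for the normalized differential operator and then optimising over $p_{AB}$ up to the relevant (move-restricted) endpoint produces an upper and a lower bound respectively, and a short computation collapses the four extremal bounds to the $\min$/$\max$ expressions in conditions~5 and~6. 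The main obstacle is precisely this last step: getting the three induced velocities right, tracking the flip of each inequality when dividing by the sign-changing factor $p_{AB}-p_Ap_B$, and correctly identifying which endpoint ($\min\{p_A,p_B\}$ or $p_A$ on one side, $\max\{0,p_A+p_B-1\}$ on the other) is binding in each of the four sub-cases so that the bounds reduce to the stated form.
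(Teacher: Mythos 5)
Your proposal is correct and follows essentially the same route as the paper's proof: linearity in $p_{AB}$ from the $\infty$-th order baseline, conditions 1--4 from symmetry and the agreement properties evaluated at the diagonal, zero-diagonal, and endpoints of the feasible $p_{AB}$-interval, and conditions 5--6 from the directional-derivative inequalities induced by the $+c_{00}$ and $-c_{10}$ moves, checked only at the interval endpoints by linearity in $p_{AB}$. The one point you make explicit that the paper leaves implicit is the positivity of $s$ via the four-move chain; your endpoint computation for conditions 5--6 is only sketched, but the plan coincides exactly with the paper's calculation.
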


\begin{proof}
From the definition of $\infty$-th order constant baseline, we have that $M(p_{AB},p_A,p_B)$ must be a linear function in $p_{AB}$ for fixed $p_A,p_B$. Thus, it must be of the form
\[
M(p_{AB},p_A,p_B)=c_{\text{base}}+(p_{AB}-p_Ap_B)s(p_A,p_B)=(p_{AB}-p_Ap_B)s(p_A,p_B)
\]
for some function $s(\cdot,\cdot)$. 

Now, symmetry requires $M(p_{AB},p_B,p_A)=M(p_{AB},p_A,p_B)$ which leads to $s(p_B,p_A)=s(p_A,p_B)$. Then, class-symmetry requires $M(p_{AB},p_A,p_B) = M(1-p_A-p_B+p_{AB},1-p_A,1-p_B)$, leading to $s(1-p_A,1-p_B)=s(p_A,p_B)$.

For maximal agreement, we have $M(p_{AB},p_A,p_B)\leq 1$ with equality only if $p_{AB}=p_A=p_B$, i.e., 
$M(p_A,p_A,p_A)=1$, leading to $s(p_A,p_A)=\frac{1}{p_A(1-p_A)}$. Furthermore, $M(p_{AB},p_A,p_B)\leq M(\min\{p_A,p_B\},p_A,p_B)<1$ for $p_A\neq p_B$ is satisfied by
\begin{align*}
s(p_A,p_B)<\frac{1}{\min\{p_A,p_B\}-p_Ap_B}&=\frac{1}{\min\{p_A(1-p_B),(1-p_A)p_B\}}\\
&=\max\left\{\frac{1}{p_A(1-p_B)},\frac{1}{(1-p_A)p_B}\right\}.
\end{align*}
Minimal agreement requires $M(p_{AB},p_A,p_B)\geq -1$ with equality only if $p_{AB}=0,p_B=1-p_A$. For equality, we need 
\[
s(p_A,1-p_A)=\frac{1}{p_A(1-p_A)}.
\]
While for $p_B\neq 1-p_A$, we need $M(p_{AB},p_A,p_B)\geq M(\max\{0,p_A+p_B-1\},p_A,p_B)>-1$, leading to
\[
s(p_A,p_B)<\frac{1}{\min\{p_Ap_B,(1-p_A)(1-p_B)\}}=\max\left\{\frac{1}{p_Ap_B},\frac{1}{(1-p_A)(1-p_B)}\right\}.
\]
For the remainder of the proof, we will derive that strong monotonicity is satisfied when the last two conditions of Theorem~\ref{thm:cb-class} hold. The first one will be derived from the increasingness of $M$ in $N_{00}$ while the second one will be derived from decreasingness in $N_{10}$. Increasingness in $N_{11}$ and decreasingness in $N_{01}$ will then follow from class-symmetry and symmetry, respectively.

We rewrite the condition $\frac{d}{dN_{00}}M$ to
\begin{align*}
    &\frac{d}{dN_{00}}M\left(\frac{N_{11}}{N_{11}+N_{10}+N_{01}+N_{00}},\frac{N_{11}+N_{10}}{N_{11}+N_{10}+N_{01}+N_{00}},\frac{N_{11}+N_{01}}{N_{11}+N_{10}+N_{01}+N_{00}}\right)\\
=&-\frac{1}{N}\left[p_{AB}\frac{\partial}{\partial p_{AB}}+p_{A}\frac{\partial}{\partial p_{A}}+p_{B}\frac{\partial}{\partial p_{B}}\right]M(p_{AB},p_A,p_B).
\end{align*}
Since we want $\frac{d}{dN_{00}}M>0$, we need
\[
\left[p_{AB}\frac{\partial}{\partial p_{AB}}+p_{A}\frac{\partial}{\partial p_{A}}+p_{B}\frac{\partial}{\partial p_{B}}\right]M(p_{AB},p_A,p_B)<0.
\]
We compute the partial derivatives of $M$:
\begin{align}
\label{eq:strong_monotonicity_partials}
\begin{split}
    &\frac{\partial}{\partial p_{AB}}M=s,\\ 
    &\frac{\partial}{\partial p_{A}}M=-p_Bs+(p_{AB}-p_Ap_B)\frac{\partial}{\partial p_{A}}s,\\
    &\frac{\partial}{\partial p_{B}}M=-p_As+(p_{AB}-p_Ap_B)\frac{\partial}{\partial p_{B}}s.
\end{split}
\end{align}
Thus, we need
\[
(p_{AB}-2p_Ap_B)\cdot s+(p_{AB}-p_Ap_B)\left[p_{A}\frac{\partial}{\partial p_{A}}+p_{B}\frac{\partial}{\partial p_{B}}\right]s<0
\]
for all $p_{AB}\in\left[\max\{p_A+p_B-1,0\},\min\{p_A,p_B\}\right]$. Since the left-hand side is linear in $p_{AB}$, we only need to check the upper and lower limit. Substituting $p_{AB}=\min\{p_A,p_B\}$ leads to
\begin{align*}
    \left[p_{A}\frac{\partial}{\partial p_{A}}+p_{B}\frac{\partial}{\partial p_{B}}\right]s&<\frac{2p_Ap_B-\min\{p_A,p_B\}}{\min\{p_A,p_B\}-p_Ap_B}s\\
    &=\left(\frac{p_Ap_B}{\min\{p_A(1-p_B),p_B(1-p_A)\}}-1\right)s\\
    &=\max\left\{\frac{p_B}{1-p_B}-1,\frac{p_A}{1-p_A}-1\right\}s\\
    &=\max\left\{\frac{2p_B-1}{1-p_B},\frac{2p_A-1}{1-p_A}\right\}s.
\end{align*}
Substituting $p_{AB}=\max\{0,p_A+p_B-1\}$ gives a lower bound
\begin{align*}
    \left[p_{A}\frac{\partial}{\partial p_{A}}+p_{B}\frac{\partial}{\partial p_{B}}\right]s
    &>-\frac{2p_Ap_B-\max\{0,p_A+p_B-1\}}{p_Ap_B-\max\{0,p_A+p_B-1\}}s\\
    &=-\left(1+\frac{p_Ap_B}{\min\{p_Ap_B,(1-p_A)(1-p_B)\}}\right)s\\
    &=-\max\left\{2,1+\frac{p_Ap_B}{(1-p_A)(1-p_B)}\right\} s.
\end{align*}
Combining this, we conclude that increasingness in $N_{00}$ is satisfied whenever it holds that
\begin{align*}
    \frac{1}{s}\left(p_{A}\frac{\partial}{\partial p_{A}}+p_{B}\frac{\partial}{\partial p_{B}}\right)s\in\left[\min\left\{-2,-1-\frac{p_Ap_B}{(1-p_A)(1-p_B)}\right\},\max\left\{\frac{2p_B-1}{1-p_B},\frac{2p_A-1}{1-p_A}\right\}\right],
\end{align*}
as required. 

The condition for decreasingness in $N_{10}$ is obtained similarly. The condition $\frac{d}{dN_{10}}M<0$ can be rewritten to
\[
\left[-p_{AB}\frac{\partial}{\partial p_{AB}}+(1-p_{A})\frac{\partial}{\partial p_{A}}-p_{B}\frac{\partial}{\partial p_{B}}\right]M(p_{AB},p_A,p_B)<0.
\]
Substituting the partial derivatives from~\eqref{eq:strong_monotonicity_partials} gives
\[
s\cdot(-p_{AB}-(1-p_A)p_B+p_Ap_B)+(p_{AB}-p_Ap_B)\left((1-p_{A})\frac{\partial}{\partial p_{A}}-p_{B}\frac{\partial}{\partial p_{B}}\right)s<0.
\]
Again, this linear inequality should hold for all $p_{AB}\in\left[\max\{p_A+p_B-1,0\},\min\{p_A,p_B\}\right]$ and we only need to test the extremes. For $p_{AB}=\min\{p_A,p_B\}$, we find the upper bound
\begin{align*}
    \frac{1}{s}\left((1-p_{A})\frac{\partial}{\partial p_{A}}-p_{B}\frac{\partial}{\partial p_{B}}\right)s
&<\frac{\min\{p_A,p_B\}+(1-p_A)p_B-p_Ap_B)}{\min\{p_A,p_B\}-p_Ap_B}\\
&=\frac{\min\{p_A(1-p_B)+p_B(1-p_A),2p_B(1-p_A)\}}{\min\{p_A(1-p_B),p_B(1-p_A)\}}\\
&=\max\left\{1+\frac{p_B(1-p_A)}{p_A(1-p_B)},2\right\}.
\end{align*}
Substituting  $p_{AB}=\max\{0,p_A+p_B-1\}$, we get the following upper bound
\begin{align*}
    \frac{1}{s}\left((1-p_{A})\frac{\partial}{\partial p_{A}}-p_{B}\frac{\partial}{\partial p_{B}}\right)s
&>\frac{\max\{0,p_A+p_B-1\}+(1-p_A)p_B-p_Ap_B)}{\max\{0,p_A+p_B-1\}-p_Ap_B}\\
&=-\frac{\max\{p_B(1-2p_A),p_A+2p_B-1-2p_Ap_B\}}{\min\{p_Ap_B,(1-p_A)(1-p_B)\}}\\
&=\min\left\{2-\frac{1}{p_A},2-\frac{1}{1-p_B}\right\}.
\end{align*}
Combined, we obtain the desired condition
\[
\frac{1}{s}\left((1-p_{A})\frac{\partial}{\partial p_{A}}-p_{B}\frac{\partial}{\partial p_{B}}\right)s\in\left[\min\left\{2-\frac{1}{p_A},2-\frac{1}{1-p_B}\right\},\max\left\{1+\frac{p_B(1-p_A)}{p_A(1-p_B)},2\right\}\right].
\]
\end{proof}

\subsection{Generalized Means measure}

The Generalized Means measure GM$_r$ corresponds to $s(p_A,p_B)=M_r(p_A(1-p_A),p_B(1-p_B))^{-1}$, where $M_r$ is the generalized mean with exponent $r$. 

\begin{lemma}
$s(p_A,p_B) = M_r(p_A(1-p_A),p_B(1-p_B))^{-1}$ satisfies all the conditions of Theorem~\ref{thm:cb-class}.
\end{lemma}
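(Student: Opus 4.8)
The plan is to introduce the abbreviations $u:=p_A(1-p_A)$ and $v:=p_B(1-p_B)$, both lying in $(0,\tfrac14]$ on the relevant domain $p_A,p_B\in(0,1)$, so that $s(p_A,p_B)=M_r(u,v)^{-1}=\bigl(\tfrac12(u^r+v^r)\bigr)^{-1/r}$ is well defined and smooth for every finite $r$ (in particular for $r<0$, since $u,v>0$). I then check the six conditions of Theorem~\ref{thm:cb-class} in order. Conditions~1 and~2 are immediate: $M_r$ is symmetric in its arguments, giving $s(p_B,p_A)=s(p_A,p_B)$; and $u$ is invariant under $p_A\mapsto1-p_A$ (likewise $v$ under $p_B\mapsto1-p_B$), giving $s(1-p_A,1-p_B)=s(p_A,p_B)$. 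When $p_B\in\{p_A,1-p_A\}$ one has $u=v$, hence $M_r(u,u)=u$ and $s=1/u=\tfrac1{p_A(1-p_A)}$.

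For conditions~3 and~4 I would rely on the classical bound $M_r(u,v)\ge\min\{u,v\}$, with equality only when $u=v$. Rewriting the two right-hand sides as $\tfrac1{\min\{p_Ap_B,(1-p_A)(1-p_B)\}}$ and $\tfrac1{\min\{p_A(1-p_B),(1-p_A)p_B\}}$, it suffices to prove that $M_r(u,v)$ strictly exceeds the corresponding minimum of products. A short case split does it: for condition~3, split on the sign of $p_A+p_B-1$ (e.g.\ if $p_A+p_B\le1$ then $u\ge p_Ap_B$ and $v\ge p_Ap_B$, so $\min\{u,v\}\ge p_Ap_B$); for condition~4, split on the sign of $p_A-p_B$. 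In every case $\min\{u,v\}$ already dominates the relevant minimum of products; strictness is automatic when $u\neq v$, and in the leftover case $u=v$ the excluded relation ($p_B\neq1-p_A$ in condition~3, $p_B\neq p_A$ in condition~4) forces the \emph{other} coincidence and in particular $p_A\neq\tfrac12$, whence $p_A(1-p_A)>\min\{p_A^2,(1-p_A)^2\}$ strictly.

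The heart of the argument is conditions~5 and~6. Here I would differentiate $\log s=-\tfrac1r\log\!\bigl(\tfrac{u^r+v^r}{2}\bigr)$. Setting $w:=\tfrac{u^r}{u^r+v^r}\in(0,1)$ (with $w=\tfrac12$ at $r=0$), and using $\tfrac{du}{dp_A}=1-2p_A$ together with $\tfrac{1-2p_A}{p_A(1-p_A)}=\tfrac1{p_A}-\tfrac1{1-p_A}$, one obtains
\[
\frac1s\frac{\partial s}{\partial p_A}=-w\Bigl(\frac1{p_A}-\frac1{1-p_A}\Bigr),\qquad
\frac1s\frac{\partial s}{\partial p_B}=-(1-w)\Bigl(\frac1{p_B}-\frac1{1-p_B}\Bigr).
\]
Substituting these into the two operators of the theorem, the $p_A$- and $p_B$-parts combine into \emph{convex combinations}: condition~5's quantity equals $w\,\tfrac{2p_A-1}{1-p_A}+(1-w)\,\tfrac{2p_B-1}{1-p_B}$, and condition~6's quantity equals $w\bigl(2-\tfrac1{p_A}\bigr)+(1-w)\bigl(2-\tfrac1{1-p_B}\bigr)$. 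Hence each value lies between the minimum and maximum of its two defining terms. For condition~5 the maximum of those two terms is exactly the prescribed upper endpoint, while both terms exceed $-1$ (the map $p\mapsto\tfrac{2p-1}{1-p}$ is increasing with value $-1$ at $p=0$), which is above the prescribed lower endpoint $\min\{-2,-1-\tfrac{p_Ap_B}{(1-p_A)(1-p_B)}\}$. For condition~6 the minimum of the two terms is exactly the prescribed lower endpoint, while both terms are below $2$ (since $\tfrac1p>1$ on $(0,1)$), which is at most the prescribed upper endpoint $\max\{1+\tfrac{p_B(1-p_A)}{p_A(1-p_B)},2\}$.

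The main obstacle is really just recognizing the convex-combination structure in conditions~5 and~6; once the weight $w=\tfrac{u^r}{u^r+v^r}$ is isolated, the interval inclusions collapse to the two one-variable facts $\tfrac{2p-1}{1-p}>-1$ and $2-\tfrac1p<2$ on $(0,1)$, and no case analysis is needed there. The only points demanding a bit of care are the equality cases in conditions~3--4 (handled above) and verifying that the computation survives the limit $r\to0$ and negative $r$, which it does because $u,v$ stay strictly positive on $p_A,p_B\in(0,1)$.
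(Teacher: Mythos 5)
Your proposal is correct and follows essentially the same route as the paper: the decisive step for conditions~5 and~6 --- computing the logarithmic derivatives of $s$ and recognizing the resulting expressions as convex combinations, with weight $w=u^r/(u^r+v^r)$, of exactly the two terms appearing in the stated maxima/minima --- is precisely the paper's argument. Your explicit verification of conditions~1--4 (via the power-mean bound $M_r(u,v)\geq\min\{u,v\}$ and the case splits on the signs of $p_A+p_B-1$ and $p_A-p_B$) fills in details the paper dismisses as ``easily verified,'' and it is sound.
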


\begin{proof}
The proof follows from Section~\ref{sec:checking-properties}, where it is shown that GM$_r$ indeed satisfies all the required properties. Let us also demonstrate the conditions explicitly.

The first four conditions can be easily verified by substituting this $s(p_A,p_B)$. Verifying the last two conditions require a bit more work. The partial derivatives of $s(p_A,p_B)$ are given by
\begin{align*}
    &\frac{\partial}{\partial p_A}\left[\frac{1}{2}\left(p_A(1-p_A)\right)^r+\frac{1}{2}\left(p_B(1-p_B)\right)^r\right]^{-\frac{1}{r}}\\
    =&-\frac{1}{r}\frac{\frac{r}{2}\left(p_A(1-p_A)\right)^{r-1}(1-2p_A)}{\left[\frac{1}{2}\left(p_A(1-p_A)\right)^r+\frac{1}{2}\left(p_B(1-p_B)\right)^r\right]^{\frac{r+1}{r}}}\\
    &=\frac{2p_A-1}{p_A(1-p_A)}\cdot\frac{\left(p_A(1-p_A)\right)^r}{\left(p_A(1-p_A)\right)^r+\left(p_B(1-p_B)\right)^r}\cdot s,
\end{align*}
and similarly
\[
\frac{\partial}{\partial p_B}s=\frac{2p_B-1}{p_B(1-p_B)}\cdot\frac{\left(p_B(1-p_B)\right)^r}{\left(p_A(1-p_A)\right)^r+\left(p_B(1-p_B)\right)^r}\cdot s.
\]

Substituting this into the condition for $N_{00}$-monotonicity, we get
\begin{multline*}
    \frac{1}{s}\left(p_{A}\frac{\partial}{\partial p_{A}}+p_{B}\cdot\frac{\partial}{\partial p_{B}}\right)s\\
    =\frac{2p_A-1}{1-p_A}\cdot\frac{\left(p_A(1-p_A)\right)^r}{\left(p_A(1-p_A)\right)^r+\left(p_B(1-p_B)\right)^r}+\frac{2p_B-1}{1-p_B}\cdot\frac{\left(p_B(1-p_B)\right)^r}{\left(p_A(1-p_A)\right)^r+\left(p_B(1-p_B)\right)^r}.
\end{multline*}
Note that the two large fractions sum to $1$, so that we recognize this as the weighted average of $(2p_A-1)/(1-p_A)$ and $(2p_B-1)/(1-p_B)$, which are exactly the two terms in the maximum of the upper bound of the $N_{00}$-monotonicity condition. Furthermore, note that both these terms are larger than $-1$, so that the lower bound is also satisfied.

Similarly, for the condition corresponding to $N_{10}$-monotonicity, we get
\begin{multline*}
    \frac{1}{s}\left((1-p_{A})\frac{\partial}{\partial p_{A}}-p_{B}\frac{\partial}{\partial p_{B}}\right)s\\
    =\left(2-\frac{1}{p_A}\right)\frac{\left(p_A(1-p_A)\right)^r}{\left(p_A(1{-}p_A)\right)^r+\frac{1}{2}\left(p_B(1{-}p_B)\right)^r}+\left(2-\frac{1}{1{-}p_B}\right)\frac{\left(p_B(1-p_B)\right)^r}{\left(p_A(1{-}p_A)\right)^r+\left(p_B(1{-}p_B)\right)^r}.
\end{multline*}
Again, we recognize this as the weighted average of $2-p_A^{-1}$ and $2-(1-p_B)^{-1}$, which are the terms in the minimum of the required lower bound, so that this is always satisfied. Finally, the corresponding upper bound is always satisfied since $2-p_A^{-1}$ and $2-(1-p_B)^{-1}$ can both be upper-bounded by $1$. We thus conclude that GM$_r$ indeed lies inside this class of measures for all $r$.
\end{proof}

\paragraph{Proof of Statement~\ref{statement2}}

Finally, let us show that Generalized Means generalizes both CC and SBA.
Recall that 
\[
\text{GM}_r = \frac{n c_{11} - a_1 b_1}{\left(\frac{1}{2} \left(a_1^r a_0^r + b_1^r b_0^r\right)\right)^{\frac{1}{r}}}.
\]
Taking $r = -1$, we obtain:
\begin{align*}
    1 + \text{GM}_{-1} 
    &= 1 + \frac{1}{2}\left(\frac{n c_{11}}{a_0 a_1} + \frac{n c_{11}}{b_0 b_1} - \frac{b_1}{a_0} - \frac{a_1}{b_0}\right) \\ 
    &= \frac{1}{2}\left(\frac{c_{11} (a_0 + a_1)}{a_0 a_ 1} + \frac{c_{11} (b_0 + b_1)}{b_0 b_1} - \frac{b_1}{a_0} - \frac{a_1}{b_0} + 2\right)\\
    &= \frac{1}{2} \left(\frac{c_{11}}{a_1} + \frac{c_{11}}{b_1} + \frac{n - a_1 - b_1 + c_{11}}{a_0} + \frac{n - a_1 - b_1 + c_{11}}{b_0}\right) \\
    &= 2 \cdot \text{SBA}\,.
\end{align*}

Now, let us confirm that taking $r \rightarrow 0$ we get CC.
Let $X := b_0 b_1 / (a_0 a_1)$, then $\left(\frac{1}{2} (a_1^r a_0^r + b_1^r b_0^r)\right)^{\frac{1}{r}}$ can be rewritten to 
\[
a_0a_1 \left(\frac{1}{2}\left(1 + X^r\right)\right)^{\frac{1}{r}}
=a_0a_1\exp\left(\frac{1}{r}\ln \left(\frac{1}{2}\left(1 + X^r\right)\right)\right).
\]
We take the limit of the exponent and use l'H\^opital to find that
\[
    \lim_{r\rightarrow0}\frac{\ln\left(\frac{1}{2}\left(1 + X^r\right)\right)}{r} 
    =\lim_{r\rightarrow0}\frac{\ln(X)X^r}{1+X^r}=\frac{1}{2}\ln X.
\]
Hence, as $r \to 0$, the denominator of GM$_r$ converges to
\[
a_0 a_1 \cdot \exp\left(\frac{1}{2}\ln X\right) = a_0 a_1 \sqrt{X} = \sqrt{a_0 a_1 b_0 b_1}
\]
and we obtain CC.

\section{Additional experimental results}\label{app:experiments}

\subsection{Binary measures}\label{app:binary}

\paragraph{Distinguishing binary measures}

Let us show triplets of labelings $(A, B_1, B_2)$ discriminating all pairs of measures in the binary classification case. Each triplet consists of the true labeling $A$ and two predicted labelings $B_1$ and $B_2$. We say that two measures are strictly inconsistent if, according to the first one, $B_1$ is closer to $A$, while, according to the second one, $B_2$ is closer to $A$ (comparing to the main text, here we consider only strict inequalities). Table~\ref{tab:synthetic_binary} lists six triplets, where all labelings are of size $n = 10$. It also specifies which triplet discriminates each pair of measures.

\begin{table}
\caption{Examples of triplets discriminating all pairs of different measures: the upper table lists the triplets, the lower table specifies which triplet discriminates a particular pair}
\label{tab:synthetic_binary}
\vspace{4pt}
\centering
\begin{tabular}{l|ccc}
\toprule
 & $A$ & $B_1$ & $B_2$ \\
\midrule
Triplet 1 & (1, 1, 1, 0, 1, 1, 0, 1, 1, 0) & (1, 1, 1, 0, 1, 0, 1, 1, 1, 1) & (1, 0, 0, 1, 0, 1, 0, 1, 1, 0) \\
Triplet 2 & (0, 1, 1, 1, 1, 0, 1, 1, 0, 1) & (1, 0, 0, 1, 0, 1, 0, 1, 1, 0) & (0, 1, 0, 0, 0, 0, 0, 0, 0, 0) \\					
Triplet 3 & (0, 0, 0, 0, 1, 1, 1, 0, 1, 0) & (1, 1, 1, 1, 1, 1, 1, 1, 0, 1) & (0, 1, 1, 1, 1, 0, 1, 1, 0, 1) \\		
Triplet 4 & (0, 1, 1, 1, 1, 0, 1, 1, 0, 1) & (1, 1, 1, 1, 1, 1, 1, 1, 0, 1) & (0, 1, 0, 1, 1, 1, 1, 1, 0, 1) \\							
Triplet 5 & (0, 0, 0, 0, 1, 1, 1, 0, 1, 0) & (0, 1, 1, 0, 0, 1, 0, 0, 0, 1) & (0, 1, 0, 0, 0, 0, 0, 0, 0, 0) \\							
Triplet 6 & (1, 1, 1, 1, 1, 1, 1, 1, 0, 1) & (1, 1, 1, 0, 1, 1, 0, 1, 1, 0) & (0, 1, 1, 0, 0, 1, 0, 0, 0, 1) \\	
\bottomrule
\end{tabular}

\vspace{7pt}

\begin{tabular}{l|cccccccc}
\toprule
& Acc & BA & $F_1$ & $\kappa$ & CE & GM$_1$ & CC & SBA \\
\midrule
Acc & --- & 1 & 2 & 6 & 6 & 1 & 5 & 5 \\
BA & 1 & --- & 1 & 1 & 1 & 3 & 3 & 1 \\
$F_1$ & 2 & 1 & --- & 2 & 2 & 1 & 2 & 2 \\
$\kappa$ & 6 & 1 & 2 & --- & 4 & 1 & 3 & 3 \\
CE & 6 & 1 & 2 & 4 & --- & 1 & 3 & 3 \\
GM$_1$ & 1 & 3 & 1 & 1 & 1 & --- & 5 & 1 \\
CC & 5 & 3 & 2 & 3 & 3 & 5 & --- & 4 \\
SBA & 5 & 1 & 2 & 3 & 3 & 1 & 4 & --- \\
\bottomrule
\end{tabular}
\end{table}

\paragraph{Experiment within a weather forecasting service} In this section, we provide a detailed analysis of the precipitation prediction task discussed in Section~\ref{sec:exp_binary}. 

\begin{figure}
    \centering
    \includegraphics[width=0.48\textwidth]{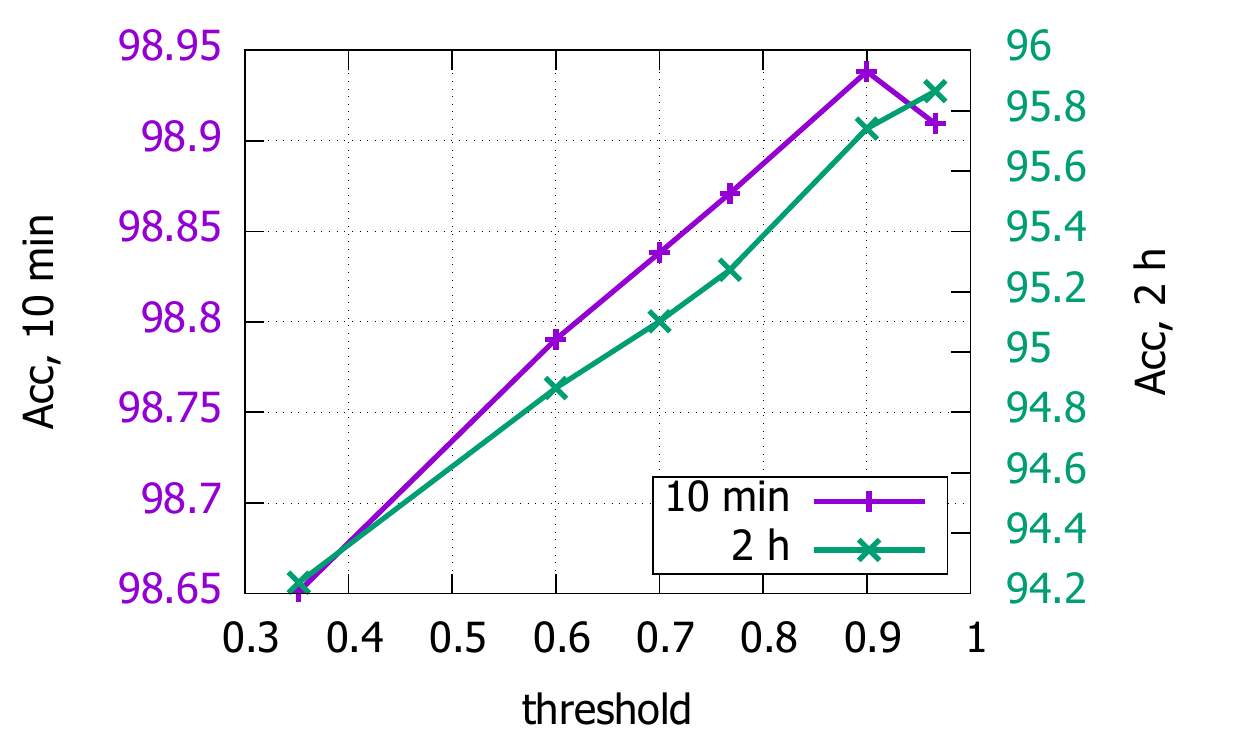}
    \includegraphics[width=0.48\textwidth]{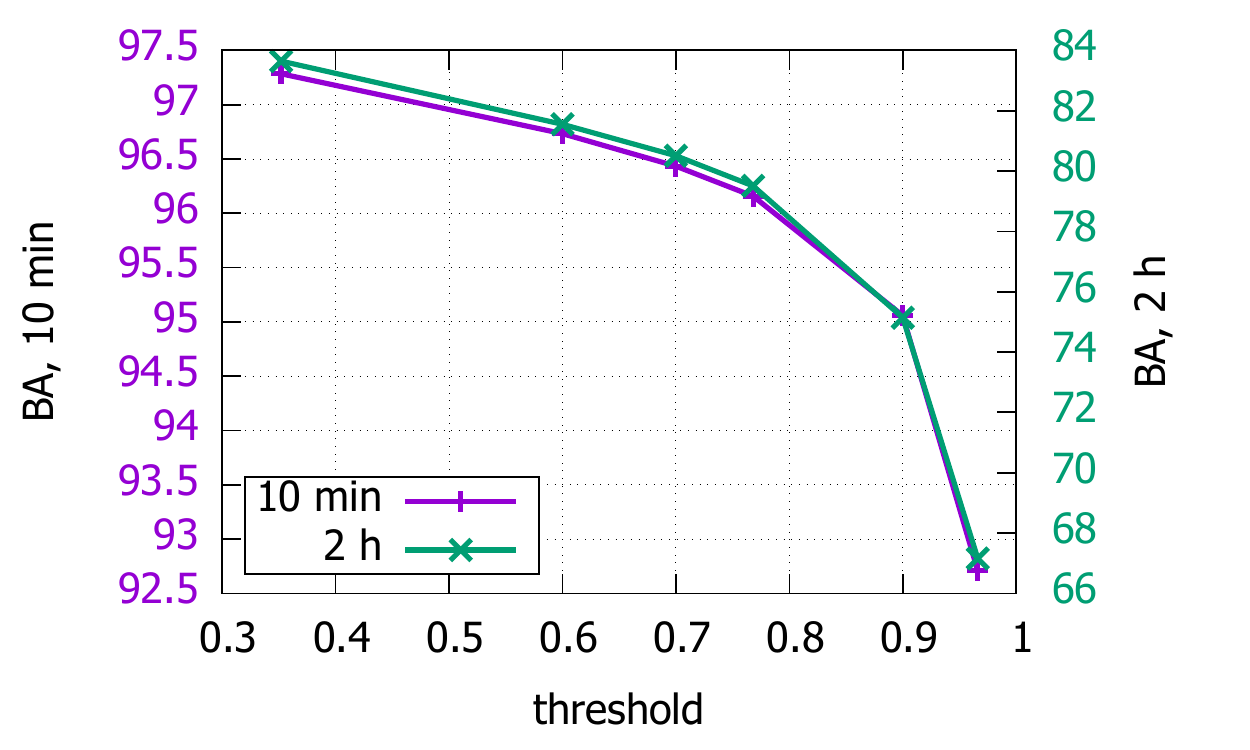}
    \includegraphics[width=0.48\textwidth]{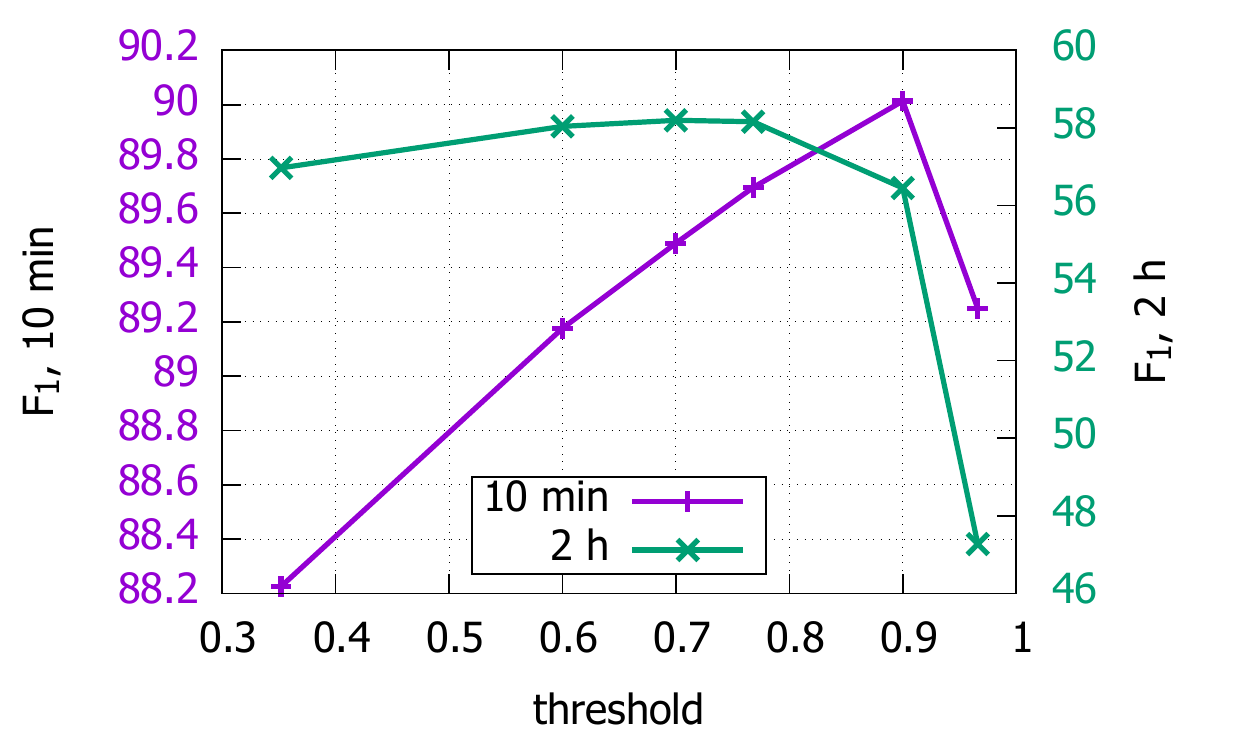}
    \includegraphics[width=0.48\textwidth]{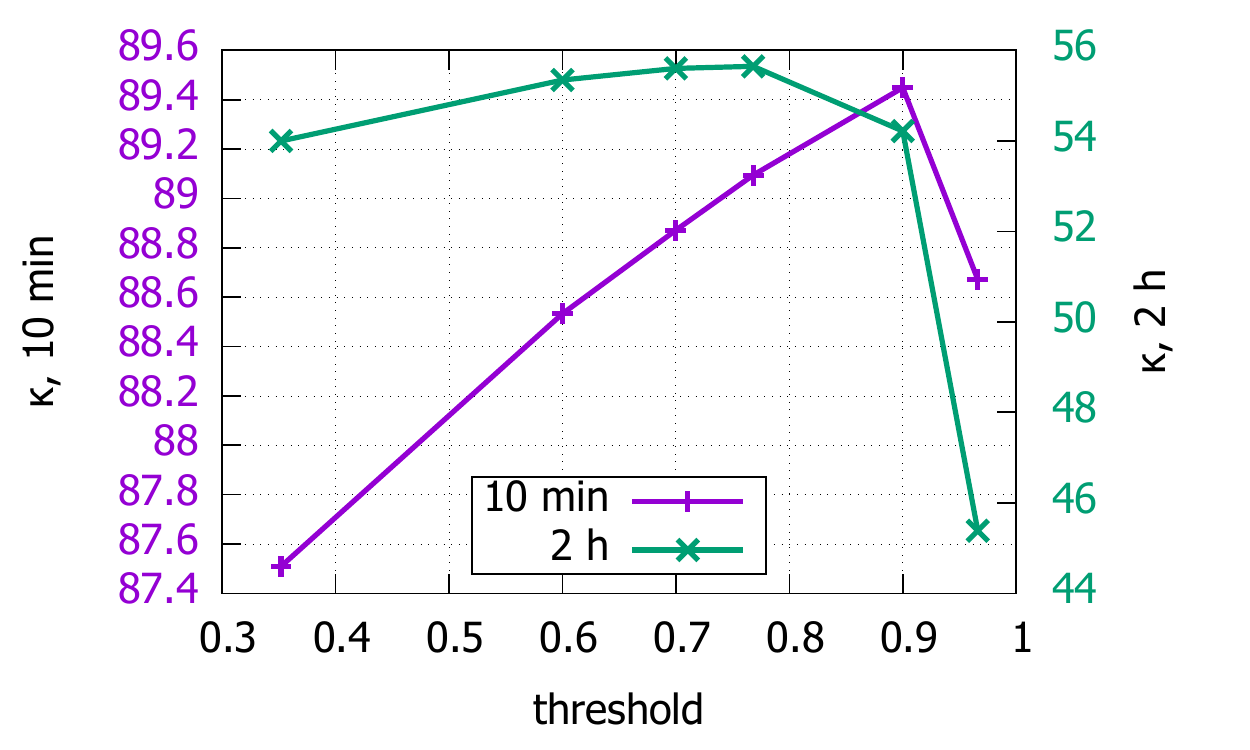}
    \includegraphics[width=0.48\textwidth]{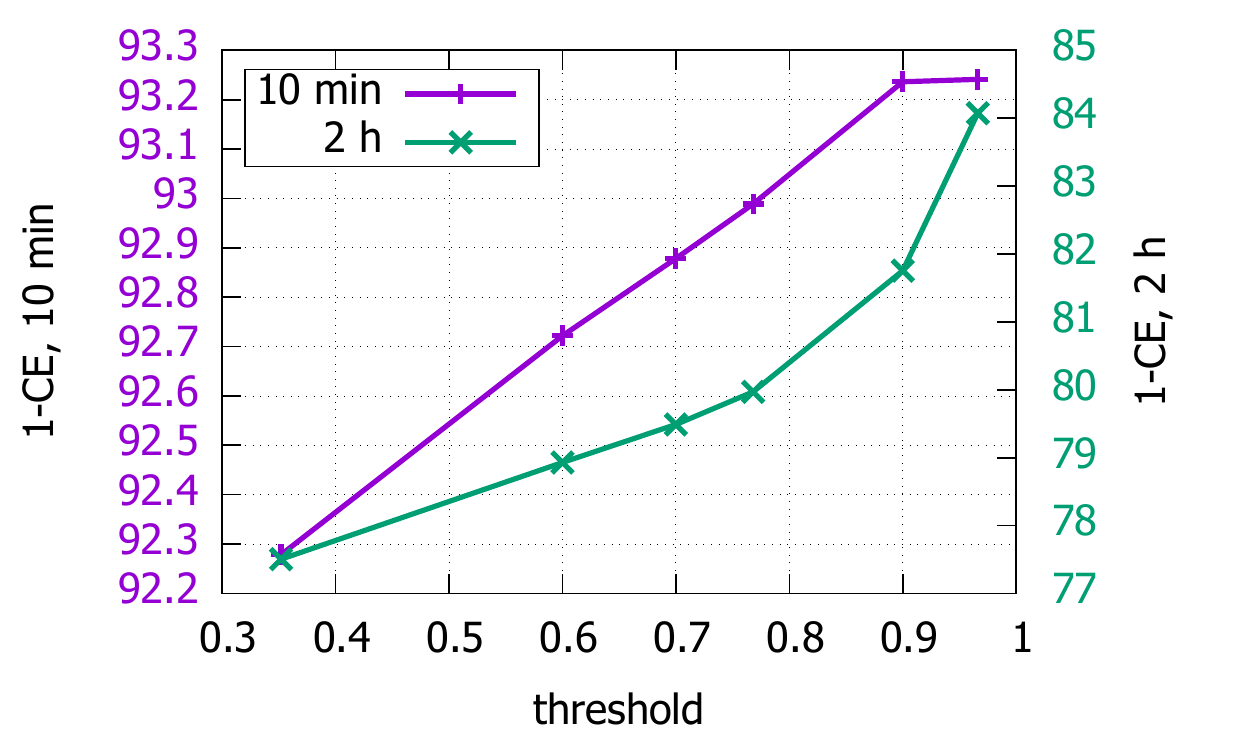}
    \includegraphics[width=0.48\textwidth]{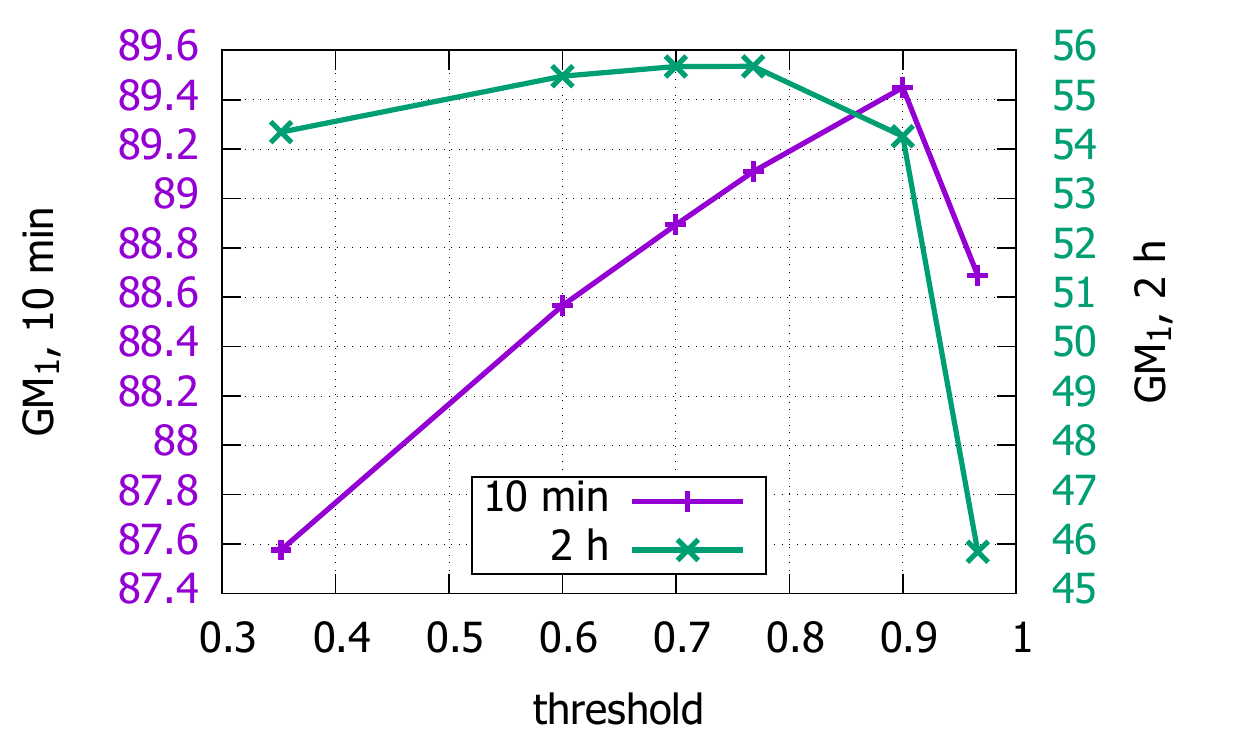}
    \includegraphics[width=0.48\textwidth]{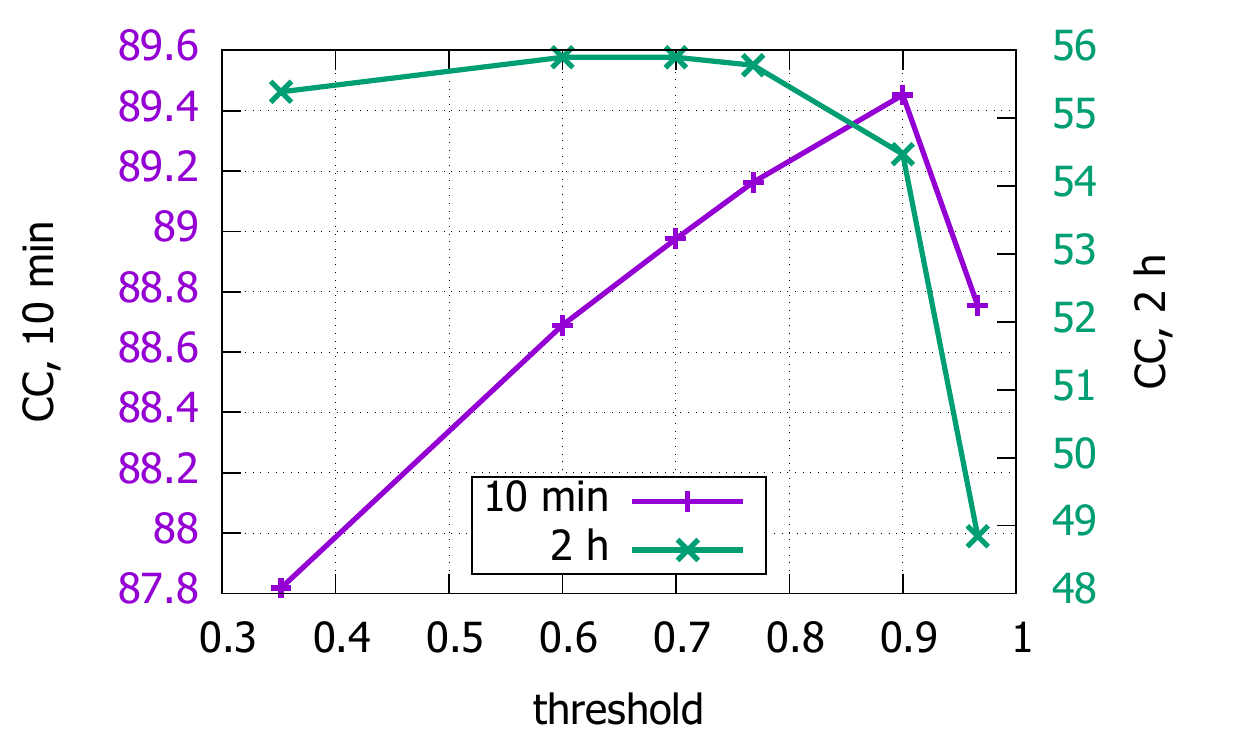}
    \includegraphics[width=0.48\textwidth]{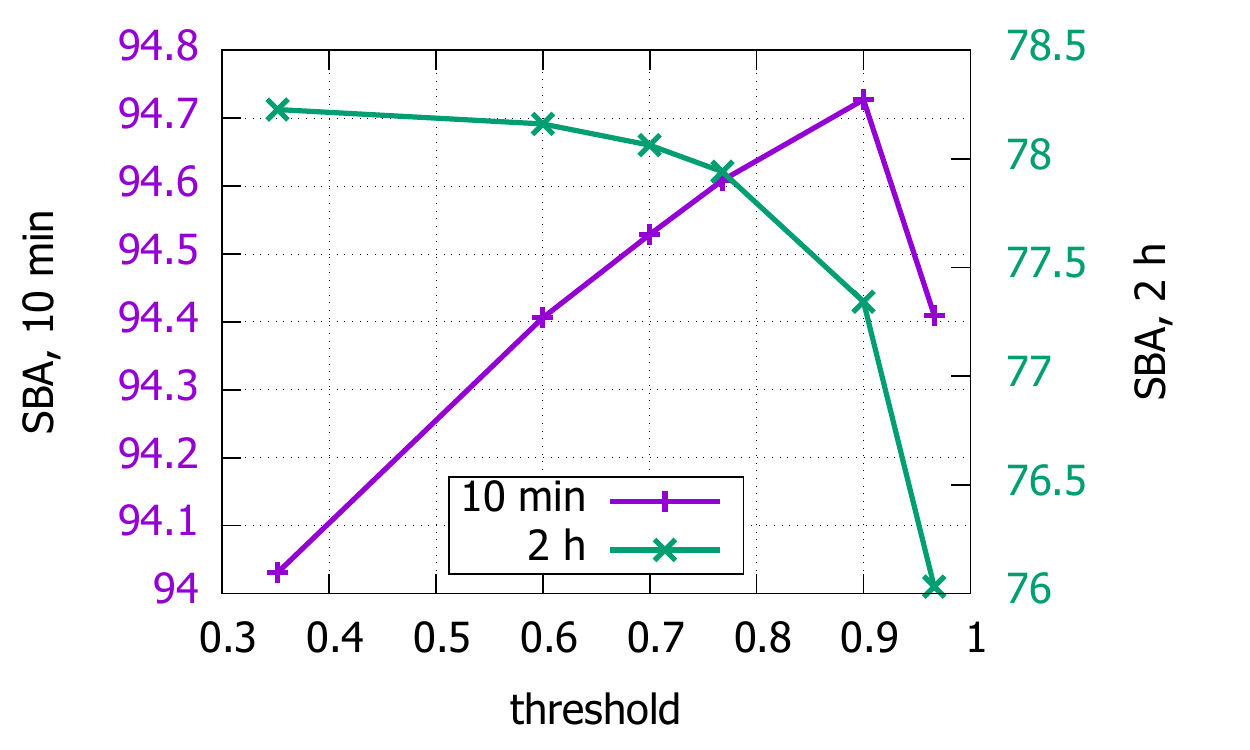}
    \caption{Dependence of measures on thresholds, for ten-minute and two-hour prediction horizons, the values are multiplied by 100}
    \label{fig:weather_thr}
\end{figure}

In Figure~\ref{fig:weather_thr}, we show the dependence of measures on the threshold that is used to convert soft predictions to binary labels. This is done separately for two prediction horizons: ten minutes and two hours. We make the following observations. For the ten-minute horizon, most of the measures agree that the optimal threshold is 0.9. However, Confusion Entropy favors the largest threshold, while Balanced Accuracy favors the smallest one. Interestingly, the behavior of measures significantly differs for the two-hour prediction interval. In this case, many of the measures favor either 0.6, 0.7, or 0.77. However, accuracy and CE prefer the largest threshold, while BA and SBA prefer the smallest one. Interestingly, this is the only experiment where we observe that SBA has such a noticeable disagreement with GM$_1$ and CC.

To better understand the differences between the measures, let us list average confusion matrices for the ten-minute and two-hour prediction horizons depending on a threshold (in increasing order). Here we show the relative values in percentages.

For ten minutes:
$$
\left(\begin{smallmatrix}
93.55 & 1.12 \\ 
0.22 & 5.11 \\
\end{smallmatrix}\right)
\hspace{10pt}
\left(\begin{smallmatrix}
93.76 & 0.91 \\ 
0.29 & 5.04 \\
\end{smallmatrix}\right)
\hspace{10pt}
\left(\begin{smallmatrix}
93.84 & 0.83 \\ 
0.33 & 5.01 \\
\end{smallmatrix}\right)
\hspace{10pt}
\left(\begin{smallmatrix}
93.91 & 0.76 \\ 
0.36 & 4.97 \\
\end{smallmatrix}\right)
\hspace{10pt}
\left(\begin{smallmatrix}
94.10 & 0.57 \\ 
0.49 & 4.85 \\
\end{smallmatrix}\right)
\hspace{10pt}
\left(\begin{smallmatrix}
94.33 & 0.34 \\ 
0.75 & 4.59 \\
\end{smallmatrix}\right)
$$

For two hours:
$$
\left(\begin{smallmatrix}
90.41 & 4.25 \\ 
1.47 & 3.87 \\
\end{smallmatrix}\right)
\hspace{10pt}
\left(\begin{smallmatrix}
91.32 & 3.34 \\ 
1.74 & 3.60 \\
\end{smallmatrix}\right)
\hspace{10pt}
\left(\begin{smallmatrix}
91.67 & 2.99 \\ 
1.87 & 3.47 \\
\end{smallmatrix}\right)
\hspace{10pt}
\left(\begin{smallmatrix}
91.96 & 2.70 \\ 
1.99 & 3.35 \\ 
\end{smallmatrix}\right)
\hspace{10pt}
\left(\begin{smallmatrix}
92.94 & 1.72 \\ 
2.51 & 2.83 \\ 
\end{smallmatrix}\right)
\hspace{10pt}
\left(\begin{smallmatrix}
93.98 & 0.68 \\ 
3.43 & 1.91 \\
 \end{smallmatrix}\right)
$$

Consider, for instance, the two smallest thresholds for the ten-minute horizon. It is easy to see that accuracy grows from 98.66\% to 98.80\%. In contrast, for Balanced Accuracy, the difference between the values can be written as:
\[
\Delta \text{BA} = \frac{\Delta c_{00}}{a_0} + \frac{\Delta c_{11}}{a_1} \approx \frac{0.21}{94.67} + \frac{-0.07}{5.33} < 0.
\]
So, Balanced Accuracy favors the smallest threshold. This can be explained by the fact that BA normalizes true positives ($c_{11}$) by a much smaller value, so that the impact of $c_{11}$ is much higher.

More interesting is the fact that for the ten-minute horizon, SBA agrees with most of the measures and strongly disagrees with BA. This can be explained by the fact that SBA also takes into account the distribution of predicted labels. For instance, for the two smallest thresholds, the difference becomes:
\[
\Delta \text{SBA} \approx \frac{0.21}{94.67} + \frac{-0.07}{5.33} + \left(\frac{93.76}{94.05} - \frac{93.55}{93.77}\right) + \left(\frac{5.04}{5.95} - \frac{5.11}{6.23}\right) > 0.
\]
Here the difference between the last two terms is positive and dominates all other differences. This happens because the false positive rate becomes significantly smaller. 

Tables~\ref{tab:weather_10min} and~\ref{tab:weather_2h} summarize inconsistency between different measures for the ten-minute and two-hour horizons. In particular, we can see that SBA and CC always agree for the ten-minute horizon, while they have almost 20\% disagreement for two hours.

\begin{table}
    \caption{Inconsistency of binary measures for rain prediction, horizon 10 minutes, \%}
    \label{tab:weather_10min}
    \vspace{4pt}
    \centering
    \begin{small}
    \begin{tabular}{l|rrrrrrrr}
    \toprule
 & Acc & BA\, & $F_1$\, & $\kappa$\,\,\, & CE\, & GM$_1$ & CC\, & SBA \\
\midrule
Acc & --- & 93.3 & 14.4 & 14.4 & 3.3 & 14.4 & 15.0 & 15.0 \\
BA & 93.3 & --- & 78.9 & 78.9 & 96.7 & 78.9 & 78.3 & 78.3 \\
$F_1$ & 14.4 & 78.9 & --- & 0.0 & 17.8 & 0.0 & 0.6 & 0.6 \\
$\kappa$ & 14.4 & 78.9 & 0.0 & --- & 17.8 & 0.0 & 0.6 & 0.6 \\
CE & 3.3 & 96.7 & 17.8 & 17.8 & --- & 17.8 & 18.3 & 18.3 \\
GM$_1$ & 14.4 & 78.9 & 0.0 & 0.0 & 17.8 & --- & 0.6 & 0.6 \\
CC & 15.0 & 78.3 & 0.6 & 0.6 & 18.3 & 0.6 & --- & 0.0 \\
SBA & 15.0 & 78.3 & 0.6 & 0.6 & 18.3 & 0.6 & 0.0 & --- \\
\bottomrule
\end{tabular}
\end{small}
\end{table}

\begin{table}
    \caption{Inconsistency of binary measures for rain prediction, horizon 2 hours, \%}
    \label{tab:weather_2h}
    \vspace{4pt}
    \centering
    \begin{small}
    \begin{tabular}{l|rrrrrrrr}
    \toprule
 & Acc & BA\, & $F_1$\, & $\kappa$\,\,\, & CE\, & GM$_1$ & CC\, & SBA \\
\midrule
Acc & --- & 98.3 & 63.3 & 58.3 & 1.7 & 61.1 & 72.2 & 91.7 \\
BA & 98.3 & --- & 35.0 & 39.4 & 100 & 37.2 & 25.6 & 6.1 \\
$F_1$ & 63.3 & 35.0 & --- & 4.4 & 65.0 & 2.2 & 8.9 & 28.3 \\
$\kappa$ & 58.3 & 39.4 & 4.4 & --- & 60.0 & 2.2 & 13.3 & 32.8 \\
CE & 1.7 & 100 & 65.0 & 60.0 & --- & 62.8 & 73.9 & 93.3 \\
GM$_1$ & 61.1 & 37.2 & 2.2 & 2.2 & 62.8 & --- & 11.1 & 30.6 \\
CC & 72.2 & 25.6 & 8.9 & 13.3 & 73.9 & 11.1 & --- & 18.9 \\
SBA & 91.7 & 6.1 & 28.3 & 32.8 & 93.3 & 30.6 & 18.9 & --- \\
\bottomrule
\end{tabular}
\end{small}
\end{table}

\subsection{Multiclass measures}\label{app:multiclass}

\paragraph{Image classification}

The extended results are shown in Table~\ref{tab:imagenet_ext}. The models are the following:\footnote{\href{https://github.com/rwightman/pytorch-image-models/blob/master/results/results-imagenet.csv}{https://github.com/rwightman/pytorch-image-models/blob/master/results/results-imagenet.csv}}
\begin{enumerate}
\item tf\_efficientnet\_l2\_ns
\item tf\_efficientnet\_l2\_ns\_475
\item swin\_large\_patch4\_window12\_384
\item tf\_efficientnet\_b7\_ns
\item tf\_efficientnet\_b6\_ns
\item swin\_base\_patch4\_window12\_384
\item swin\_large\_patch4\_window7\_224
\item dm\_nfnet\_f6
\item tf\_efficientnet\_b5\_ns
\item dm\_nfnet\_f5
\end{enumerate}
Note that the dataset is balanced, so accuracy coincides with BA, and weighted average coincides with macro average.

\begin{table}[t]
    \caption{Extended results for ImageNet, the values are multiplied by 100, inconsistencies are highlighted}
    \label{tab:imagenet_ext}
    \vspace{4pt}
    \centering
    \begin{small}
    \begin{tabular}{l|cccccccccccc}
\toprule
 & Acc/BA & $F_1$ & J & $\kappa$ & $1-$CE & GM$_1$ & CC & CC$^{mac}$ & SBA \\
\midrule
1 & 88.33 & 88.21 & 80.43 & 88.32 & 94.42 & 88.19 & 88.32 & 88.31 & 88.44 \\
2 & 88.23 & 88.08 & 80.25 & 88.21 & 94.38 & 88.07 & 88.22 & 88.20 & 88.35 \\
3 & 87.15 & 87.01 & 78.63 & 87.13 & 93.86 & 87.00 & 87.13 & 87.14 & 87.30 \\
4 & 86.83 & 86.64 & 78.08 & 86.82 & 93.64 & 86.63 & 86.82 & 86.78 & 86.95 \\
5 & 86.46 & 86.30 & 77.525 & 86.44 & 93.41 & 86.28 & 86.44 & 86.419 & 86.57 \\
6 & 86.43 & 86.27 & \textbf{77.531} & 86.42 & \textbf{93.51} & 86.26 & 86.42 & \textbf{86.423} & \textbf{86.60} \\
7 & 86.32 & 86.17 & 77.311 & 86.30 & 93.37 & 86.16 & 86.30 & 86.31 & 86.48 \\
8 & 86.31 & 86.12 & \textbf{77.314} & 86.29 & \textbf{93.41} & 86.10 & 86.30 & 86.28 & 86.47 \\
9 & 86.08 & 85.89 & 76.97 & 86.06 & 93.21 & 85.87 & 86.07 & 86.02 & 86.19 \\
10 & 85.72 & 85.55 & 76.51 & 85.70 & 93.05 & 85.53 & 85.70 & 85.70 & 85.89 \\
\bottomrule
    \end{tabular}
    \end{small}
\end{table}

\paragraph{Inconsistency for Yeast dataset}

In this experiment, we consider the Yeast dataset\footnote{\href{https://archive.ics.uci.edu/ml/datasets/Yeast}{https://archive.ics.uci.edu/ml/datasets/Yeast}} from the UCI repository~\cite{Dua:2019}. The task is to predict protein localization sites among ten possible variants. The class sizes are \{463, 429, 244, 163, 51, 44, 35, 30, 20, 5\}, so the dataset is highly unbalanced.

To this dataset, we apply the following algorithms from the scikit-learn library~\cite{algorithms}: DecisionTree, ExtraTree, ExtraTreesEnsemble, NearestNeighbors, RadiusNeighbors, RandomForest, BernoulliNB, GaussianNB, LabelSpreading, QuadraticDiscriminantAnalysis, LinearDiscriminantAnalysis, NearestCentroid, MLPClassifier, LogisticRegression, LogisticRegressionCV, RidgeClassifier, RidgeClassifierCV, LinearSVC. Thus, there are 18 algorithms giving 153 possible pairs. For each pair of measures and each pair of algorithms, we check whether the measures are consistent. Aggregating the results over all pairs of algorithms, we obtain Table~\ref{tab:yeast}.

We can see that for some measures the disagreement can be significant. For example, inconsistency is particularly high for Confusion Entropy, which does not satisfy most of the properties. Interestingly, the best agreement is achieved by CC and $\kappa$.

Finally, Table~\ref{tab:averagins} shows inconsistency of different averagings.

\begin{table}[t]
    \caption{Inconsistency of multiclass measures on the Yeast dataset, \%}
    \label{tab:yeast}
    \vspace{4pt}
    \centering
    \begin{small}
    \begin{tabular}{l|rrrrrrrrr}
    \toprule
 & Acc & BA\, & $F_1$\, & J\,\,\, & $\kappa$\,\,\, & CE\, & GM$_1$ & CC\, & SBA \\
\midrule
Acc & --- & 11.8 & 13.7 & 11.1 & 4.6 & 47.7 & 11.1 & 3.3 & 17.0 \\
BA & 11.8 & --- & 9.8 & 8.5 & 7.2 & 52.9 & 7.2 & 8.5 & 11.8 \\
$F_1$ & 13.7 & 9.8 & --- & 2.6 & 10.5 & 48.4 & 5.2 & 10.5 & 4.6 \\ 
J & 11.1 & 8.5 & 2.6 & --- & 9.2 & 49.7 & 6.5 & 9.2 & 7.2 \\
$\kappa$ & 4.6 & 7.2 & 10.5 & 9.2 & --- & 49.7 & 7.8 & 1.3 & 13.7 \\
CE & 47.7 & 52.9 & 48.4 & 49.7 & 49.7 & --- & 51.0 & 48.4 & 45.1 \\
GM$_1$ & 11.1 & 7.2 & 5.2 & 6.5 & 7.8 & 51.0 & --- & 7.8 & 8.5 \\
CC & 3.3 & 8.5 & 10.5 & 9.2 & 1.3 & 48.4 & 7.8 & --- & 13.7 \\
SBA & 17.0 & 11.8 & 4.6 & 7.2 & 13.7 & 45.1 & 8.5 & 13.7 & --- \\
\bottomrule
\end{tabular}
\end{small}
\end{table}

\begin{table}[]
\centering
\begin{small}
\caption{Inconsistency of averagings on the Yeast dataset}
\label{tab:averagins}
\vspace{4pt}
\centering
\begin{tabular}{l|ccc}
\toprule
 & $F_1^{mic}$ & $F_1^{mac}$ & $F_1^{wgt}$ \\
\midrule
$F_1^{mic}$ & --- & 13.73 & 3.27 \\
$F_1^{mac}$ & 13.73 & --- & 10.46 \\
$F_1^{wgt}$ & 3.27 & 10.46 & --- \\
\bottomrule
\end{tabular}
\hspace{8pt}
\vspace{5pt}
\begin{tabular}{l|ccc}
\toprule
 & J$^{mic}$ & J$^{mac}$ & J$^{wgt}$ \\
\midrule
J$^{mic}$ & --- & 11.11 & 2.61 \\
J$^{mac}$ & 11.11 & --- & 8.50 \\
J$^{wgt}$ & 2.61 & 8.50 & --- \\
\bottomrule
\end{tabular}
\vspace{5pt}
\begin{tabular}{l|cccc}
\toprule
 & CC & CC$^{mic}$ & CC$^{mac}$ & CC$^{wgt}$ \\
\midrule
CC & --- & 3.27 & 0.00 & 0.65 \\
CC$^{mic}$ & 3.27 & --- & 0.00 & 0.65 \\
CC$^{mac}$ & 0.00 & 0.00 & --- & 0.65 \\
CC$^{wgt}$ & 0.65 & 0.65 & 0.65 & --- \\
\bottomrule
\end{tabular}
\hspace{8pt}
\begin{tabular}{l|cccc}
\toprule
 & CD & CD$^{mic}$ & CD$^{mac}$ & CD$^{wgt}$ \\
\midrule
CD & --- & 3.27 & 0.00 & 0.65 \\
CD$^{mic}$ & 3.27 & --- & 0.00 & 0.65 \\
CD$^{mac}$ & 0.00 & 0.00 & --- & 0.65 \\
CD$^{wgt}$ & 0.65 & 0.65 & 0.65 & --- \\
\bottomrule
\end{tabular}
\vspace{5pt}
\begin{tabular}{l|ccc}
\toprule
 & GM$_1^{mic}$ & GM$_1^{mac}$ & GM$_1^{wgt}$ \\
\midrule
GM$_1^{mic}$ & --- & 11.76 & 7.19 \\
GM$_1^{mac}$ & 11.76 & --- & 7.19 \\
GM$_1^{wgt}$ & 7.19 & 7.19 & --- \\
\bottomrule
\end{tabular}
\end{small}
\end{table}


\end{document}